\theoremstyle{plain}
\newtheorem{theorem}{Theorem}[section]
\newtheorem{proposition}[theorem]{Proposition}
\newtheorem{lemma}[theorem]{Lemma}
\theoremstyle{definition}
\theoremstyle{remark}
\DeclareMathOperator{\Tr}{Tr}
\icmltitlerunning{Random Masking Finds Winning Tickets for Parameter Efficient Fine-tuning}
\begin{document}

\twocolumn[
\icmltitle{Random Masking Finds Winning Tickets for Parameter Efficient Fine-tuning}

\begin{icmlauthorlist}
\icmlauthor{Jing Xu}{iiis}
\icmlauthor{Jingzhao Zhang}{iiis,sqz,sailab}
\end{icmlauthorlist}

\icmlaffiliation{iiis}{Institute for Interdisciplinary Information Sciences, Tsinghua University, China}
\icmlaffiliation{sqz}{Shanghai Qizhi Institute}
\icmlaffiliation{sailab}{Shanghai AI Laboratory}

\icmlcorrespondingauthor{Jing Xu}{xujing21@mails.tsinghua.edu.cn}
\icmlcorrespondingauthor{Jingzhao Zhang}{jingzhaoz@mail.tsinghua.edu.cn}

\icmlkeywords{Machine Learning, ICML}

\vskip 0.3in
]

\printAffiliationsAndNotice{}

\begin{abstract}
Fine-tuning large language models (LLM) can be costly. Parameter-efficient fine-tuning (PEFT) addresses the problems by training a fraction of the parameters, whose success reveals the expressiveness and flexibility of pretrained models.
This paper studies the limit of PEFT, by further simplifying its design and reducing the number of trainable parameters beyond standard setups. To this end, we use Random Masking to fine-tune the pretrained model. Despite its simplicity, we show that Random Masking is surprisingly effective: with a larger-than-expected learning rate, Random Masking can match the performance of standard PEFT algorithms such as LoRA on various tasks, using fewer trainable parameters.
We provide both empirical and theoretical explorations into the success of Random Masking. We show that masking induces a flatter loss landscape and more distant solutions, which allows for and necessitates large learning rates.

\end{abstract}
\section{Introduction}
\begin{figure}
    \centering
    \includegraphics[scale=0.25]{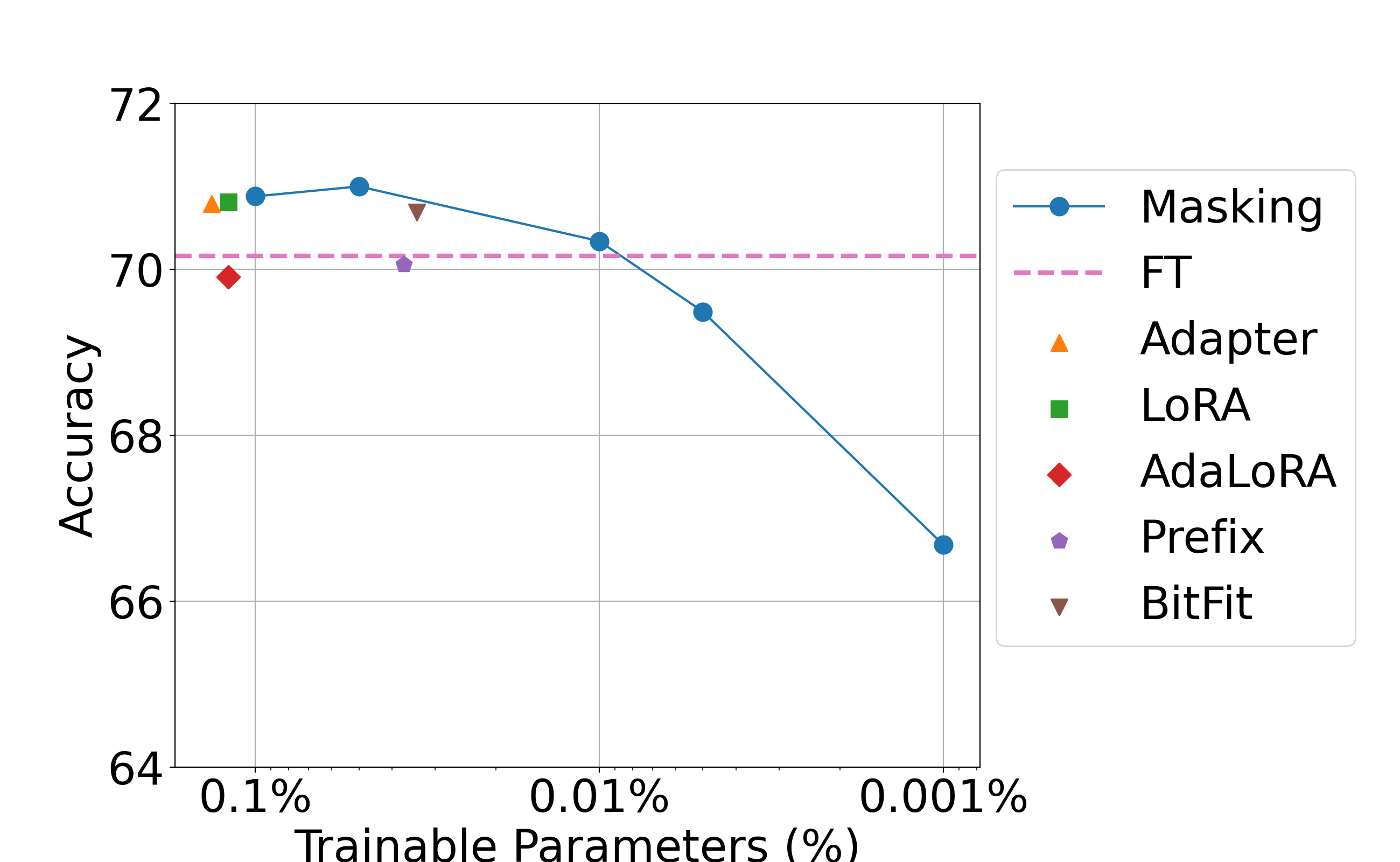}
    \caption{\textbf{The average performance of PEFT methods over with various numbers of trainable parameters.} Masking stands for our Random Masking method; FT stands for full parameter fine-tuning; Prefix stands for Prefix-Tuning. The metrics are calculated on 11 datasets using OPT-1.3b. Despite its simple design, Random Masking achieves competitive performance with fewer trainable parameters.}
    \label{fig:intro}
\end{figure}

Large-scale pretrained models~\citep{brown2020language, chowdhery2023palm, touvron2023llama} have revolutionized deep learning, demonstrating remarkable capabilities in various domains such as natural language processing and computer vision.
These models use an extensive number of parameters to capture complex patterns in data.
Despite their success, the intensive resources required for utilizing these models pose significant challenges, especially in the setting where they have to be \textit{fine-tuned} to adapt to downstream data or align with human behaviors. 
To reduce the computational and memory demands, researchers have developed various
\textit{parameter efficient fine-tuning~(PEFT)} algorithms, such as LoRA~\citep{hu2021lora}, adapter~\citep{houlsby2019parameter}, prompt tuning~\citep{li2021prefix, lester2021power}. These methods have seen widespread application for both language~\citep{shi2023towards, lialin2023scaling, liu2022few} and vision tasks~\citep{sung2022vl, lin2023vision}. 

The success of PEFT using a remarkably small fraction of parameters inspired research efforts to understand the phenomenon. For example,~\citet{aghajanyan2020intrinsic} and~\citet{malladi2023kernel} show that though the pretrained model parameters live in a high dimensional space, fine-tuning tasks have low complexity in terms of intrinsic dimensions. Additionally, research indicates that pretrained networks have better optimization landscapes compared with random initialized networks~\citep{hao2019visualizing, zhou2021closer}, making them easier to fit the downstream datasets. Furthermore, ~\citet{su2023exploring} highlights the importance of model scaling in PEFT, showing that it can even mitigate the impact of design differences among PEFT methods. 

Motivated by the observations and analyses on the effectiveness of PEFT, our work hopes to take a step forward and explore the performance limit of PEFT. 
More specifically, \textit{is there any room to further reduce the parameters and simplify the design of PEFT modules?}

Inspired by the success of neural network pruning and lottery ticket hypothesis, 
this paper studies a PEFT method, which we call \textit{Random Masking}. Specifically, Random Masking involves applying a random binary mask on the model parameters, and only training the unmasked parameters during fine-tuning. 
Random Masking provides a convenient way for us to reduce the trainable parameters beyond the current limit, and moreover, it has a simple design that incorporates nearly no inductive bias about the model architecture or the task.

Random Masking is typically treated as a baseline with subpar performance in previous research. 
However, our experiments reveal a surprising phenomenon that in fine-tuning LLM to SuperGLUE datasets, Random Masking can match the performance of full-parameter fine-tuning and standard PEFT methods across various model scales. The key to the success of Random Masking is the selection of an appropriate learning rate. Specifically, we find that sparser masking requires aggressive learning rates. The optimal learning rate can be up to \num{1e-1}, a value that typically results in divergence for standard PEFT methods.

The effectiveness of Random Masking suggests a greater expressive capacity of pretrained models than previously recognized. 
Remarkably, our experiments show that with as little as 0.001\% of the parameters being trainable, Random Masking can still achieve a non-trivial accuracy. This ratio of trainable parameters is about 100 times smaller than that in LoRA. These results imply a large parameter redundancy in practical PEFT methods. 

We provide a thorough investigation into the success of Random Masking. Empirically, we demonstrate that masking induces a flatter loss landscape, in terms of the loss Hessian spectrum. This explains why aggressive learning rates do not result in divergence. Simultaneously,  we illustrate that a flatter loss landscape gives rise to more distant solutions, which explains why large learning rates are necessary for sparse masking. 

Theoretically, we analyze the overparameterized linear regression model. We prove a bound on the Hessian eigenvalues of masked models using matrix concentration bounds, revealing a decay in eigenvalues as the masking becomes sparser. We also prove that smaller Hessian eigenvalues allow for larger learning rates and induce more distant solutions. These findings align with our empirical results and provide a cohesive explanation of how Random Masking influences learning dynamics.

Our analysis reveals a trade-off between the expressiveness of pretrained models and the difficulty of optimization.
Randomly Masked model has a benign loss landscape by sacrificing model expressivity. 
The remaining model capacity is insufficient for difficult tasks such as pretraining; however, it is already enough to fit a pretrained LLM on various fine-tuning tasks. 
This also sheds light on why PEFT methods outperform full-parameter fine-tuning in the low data regime~\citep{zaken2021bitfit}, since they trade-off the redundant parameters for a better optimization landscape.

We summarize our contributions as follows:
\begin{itemize}
    \item We show that Random Masking with a properly tuned learning rate can achieve comparable performance to standard PEFT methods on SuperGLUE benchmarks, with a significantly reduced trainable parameter count.
    \item We provide extensive experiment results to show that the benign loss landscape and expressive power of pretrained models are the key factors in the success of Random Masking.
    \item We provide theoretical studies on the overparameterized linear regression model, elucidating the interplay between learning rate tuning and Random Masking.
\end{itemize}
\section{Related Works}
\label{sec: related works}
\begin{figure*}
    \centering
    \includegraphics[scale=0.50]{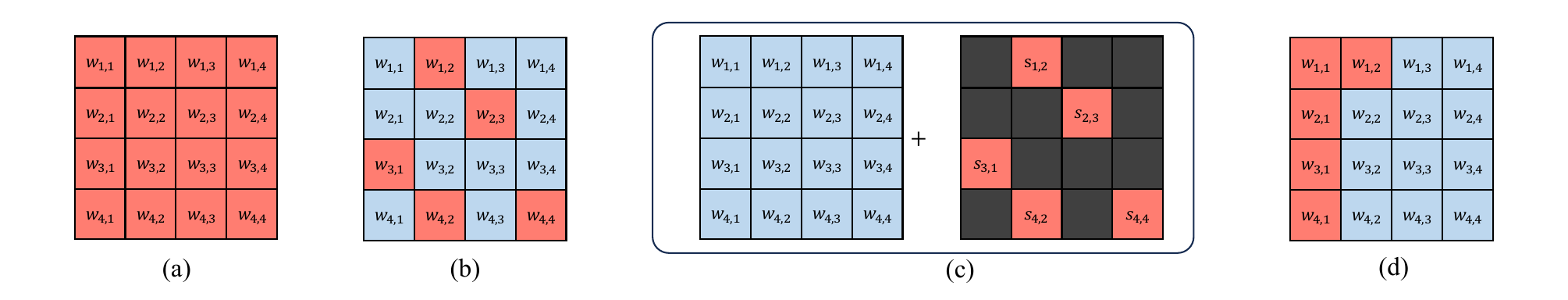}
    \caption{\textbf{Illustration of the masking methods.} The red grids indicate trainable parameters and the blue grids indicate frozen parameters. (a) Full parameter fine-tuning of $\boldsymbol{W}$. (b) The Random Masking of $W$, which is the main PEFT algorithm in this paper. (c) Implementation of Random masking of $\boldsymbol{W}$ via a sparse matrix $\boldsymbol{S}$ that is stored compactly as vectors. (d) The Structured Masking of $\boldsymbol{W}$, for ablation studies in Section~\ref{sec:ablations}.}
    \label{fig:masking}
\end{figure*}
\paragraph{Parameter Efficient Fine-tuning.}
PEFT has garnered significant attention, leading to a diverse array of algorithmic and architectural innovations.
The first wave of PEFT methods involves integrating small trainable adapters~\citep{houlsby2019parameter,pfeiffer2020adapterfusion,ruckle2020adapterdrop,karimi2021compacter,he2021effectiveness, zhu2021serial, jie2022convolutional,zhang2023llama, gao2023llama} into the pretrained networks. Another line of methods adds trainable continuous modules to the prompt, which is called prompt tuning or prefix tuning~\citep{li2021prefix,lester2021power,jia2022visual, liu2023gpt}. The seminal work~\citet{hu2021lora} proposes the LoRA algorithm, which has become one of the most widely used PEFT methods due to its performance and versatility. A series of works propose variants of the LoRA algorithm, aiming to further reduce the trainable parameter count~\citep{zhang2023lora,kopiczko2023vera, ding2023sparse}, enhance the expressiveness of low rank structures~\citep{koohpayegani2023nola, zi2023delta}, implement adaptive parameter allocation~\citep{zhang2023increlora, zhang2023adaptive}, and combine LoRA with other techniques such as quantization~\citep{dettmers2023qlora,xu2023qa} and pruning~\citep{zhang2023pruning}.

Besides directly designing PEFT modules, several studies build unified frameworks for PEFT modules~\citep{he2021towards,mao2021unipelt, ding2022delta, chen2023parameter}, thereby facilitating more efficient configuration selection~\citep{zhou2023autopeft, hu2022sparse}. 
Another line of works focuses on designing lightweight optimization algorithms tailored for tuning large models~\citep{malladi2023fine, zelikman2023just, lv2023full}. 

\paragraph{Masking and Pruning Methods.} Masking~\citep{sung2021training, jaiswal2022training, xu2021raise, nikdan2024rosa} is a key component in various PEFT methods, and is inherently related to neural network pruning and lottery ticket hypothesis~\citep{han2015deep, molchanov2017variational, liu2017learning,frankle2018lottery}.~\citet{zaken2021bitfit} proposes BitFit, which implicitly masks out the model weights except for the bias vector. Some works propose algorithms to train a masking matrix~\citep{guo2020parameter,zhao2020masking, li2022parameter} for fine-tuning the networks.
Compared with these approaches, 
Random Masking in our paper does not require assigning or training the mask, and incorporates minimal inductive bias into algorithm design. 

Masking is leveraged in~\citep{su2023arbitrary} to enable a small trainable parameter count, but they focus on adding masking to the PEFT modules rather than the pretrained networks. 
~\citet{aghajanyan2020intrinsic} apply a random projection method to calculate the intrinsic dimension of pretrained LLMs, which shares conceptual similarities with  Random Masking in our paper. However, they define the intrinsic dimension as the parameter number that achieves  90\% of the accuracy of full fine-tuning, while we show that Random Masking can achieve the same level of accuracy as full fine-tuning.

\section{Random Masking and its Implementation}
Let $\mathcal{N}$ denote a pretrained neural network and $\mathcal{W}=\{\boldsymbol{W}_1, \cdots, \boldsymbol{W}_k\}$ denote the parameters in $\mathcal{N}$. Given a dataset $\mathcal{D}$ and loss function $\ell(\mathcal{D}, \mathcal{W})$, fine-tuning $\mathcal{N}$ on $\mathcal{D}$ can be formulated as 
\begin{align*}
    \min_{\{\Delta_i
    \}}\; \ell(\mathcal{D}, \{\boldsymbol{W}_1+\Delta_1, \cdots, \boldsymbol{W}_k+ \Delta_k\}), 
\end{align*}
where $\Delta_i$ denotes the weight increment of each module, sharing the same dimensions as $\boldsymbol{W}_i$. $\Delta_i$ are zero-initialized to make sure that fine-tuning starts from the pretrained weights $\mathcal{W}$.

Conceptually, Random masking applies a random mask $\boldsymbol{M}_i$ to the requires\_grad field of each parameter $\boldsymbol{W}_i$ in the pretrained models~(See Figure~\ref{fig:masking}(b)). This operation freezes the masked elements of the weight tensors, allowing only the unmasked elements to be optimized in the fine-tuning process. 
The elements of $\boldsymbol{M}_i$ are sampled i.i.d. from Ber$(p)$, i.e., Bernoulli distribution of parameter $p$, where $p\in[0,1]$ denotes the probability that a certain parameter is not masked. The mask matrices $\boldsymbol{M}_i$ are generated at initialization and fixed throughout fine-tuning.

Directly storing the mask matrix $\boldsymbol{M}_i$ causes large storage and computational burden. Therefore, we implement the sparse parameter update with a sparse matrix $\boldsymbol{S}_i$. This matrix consists of the coordinates of unmasked positions, that are determined by $\boldsymbol{M}_i$, and the tunable weights, both of which are stored compactly as vectors. Therefore, Random Masking can be formulated as 
\begin{align*}
    \min_{\{\boldsymbol{S}_i
    \}}\; \ell(\mathcal{D}, \{\boldsymbol{W}_1+\boldsymbol{S}_1, \cdots, \boldsymbol{W}_k+ \boldsymbol{S}_k\}).
\end{align*}
One can apply off-the-shelf sparse matrix cuda libraries~\citep{sgk_sc2020,nikdan2024rosa} to implement the sparse matrix $\boldsymbol{S}_i$ and solve the optimization problem.

Random masking serves as an idealized baseline to study the parameter number in PEFT, for the following two reasons. Firstly, Random Masking is flexible, since the number of trainable parameters can be manipulated by adjusting the value of $p$. Secondly, Random Masking is one of the most straightforward PEFT methods, introducing minimal inductive bias about the pretrained networks. This can eliminate the confounding factors, such as architecture and algorithm design, in analyzing the effect of parameter numbers. 
\section{Experiments}

\begin{table*}[t]
    \centering
    \setlength{\tabcolsep}{2pt}
    \caption{\textbf{Random Masking achieves comparable test accuracy with fewer trainable parameters.} This table displays the test performance of different methods. Here, FT stands for full parameter fine-tuning, Masking stands for Random Masking. Params stands for the trainable parameter ratio, which is the number of trainable parameters divided by the total parameter count of the original pretrained models. The complete results are provided in Table~\ref{tab:res_complete}.}
    \label{tab:res}    
    \begin{tabular}{ccccccccccccccc}
    \hline
    \textbf{Model} & \textbf{Method} & \textbf{Params} & \textbf{SST-2} & \textbf{RTE}  & \textbf{WSC} & \textbf{WiC}& \textbf{CB} & \textbf{BoolQ} & \textbf{MultiRC} & \textbf{COPA} & \textbf{ReCoRD} & \textbf{SQuAD} & \textbf{DROP} & \textbf{Avg} \\
    \hline \hline 
    & FT & 100\% & 88.1 & 63.5 & 63.5 & 60.3 & 81.0 & 62.9 & 64.7 & 66.0 & 50.8 & 62.4 & 22.8 &62.36\\
    & LoRA & 0.235\% & 86.5 & 59.9 & 63.5 & 59.6 & 82.1 & 63.6 & 64.2 & 67.3 & 51.2 & 62.9 & 21.6 &62.04\\
    OPT-125m & Masking & 0.1\% & 87.3 & 60.8 & 62.2 & 60.2 & 82.7 & 63.6 & 63.1 & 67.3 & 51.3 & 61.6 & 22.6 & 62.06\\
    & Masking & 0.01\% & 86.1 & 59.1 & 63.5 & 60.3 & 73.8 & 62.9 & 63.4 & 68.3 & 51.4 & 55.7 & 22.1 & 60.59\\
    & Masking & 0.001\% & 84.7 & 56.1 & 60.3 & 55.8 & 70.8 & 61.6 & 59.5 & 69.3 & 51.2 & 41.9 & 16.1 & 57.03\\
    \hline
    & FT & 100\% & 93.7 & 70.5 & 63.1 & 62.7 & 85.7 & 69.5 & 67.6 & 76.7 & 71.8 & 81.2 & 29.3 &70.16\\
    & LoRA & 0.120\% & 93.4 & 72.6 & 63.5 & 65.5 & 78.6 & 71.4 & 69.9 & 81.0 & 71.2 & 82.1 & 29.9 &70.81\\
    OPT-1.3b & Masking & 0.1\% & 93.3 & 72.7 & 63.8 & 62.3 & 89.9 & 71.5 & 68.3 & 75.3 & 71.7 & 81.1 & 29.7 & 70.88\\ 
    & Masking & 0.01\% & 92.6 & 70.0 & 63.5 & 62.7 & 82.1 & 71.5 & 68.8 & 77.7 & 71.5 & 81.4 & 31.9 & 70.34\\ 
    & Masking & 0.001\% & 92.7 & 65.0 & 63.5 & 60.4 & 74.4 & 67.1 & 59.0 & 74.3 & 71.0 & 77.6 & 28.5 & 66.68\\
    \hline
    & FT & 100\% & 94.9 & 81.1 & 62.5 & 65.4 & 81.0 & 79.8 & 76.1 & 89.3 & 81.3 & 87.3 & 35.3 &75.82\\
    & LoRA & 0.051\% & 95.0 & 83.8 & 63.5 & 65.2 & 79.8 & 81.3 & 73.2 & 88.0 & 81.4 & 88.6 & 34.7 &75.86\\
    OPT-13b & Masking & 0.1\% & 95.1 & 80.6 & 59.6 & 65.5 & 84.5 & 79.6 & 75.8 & 89.3 & 81.6 & 88.1 & 34.4 & 75.83\\
    & Masking & 0.01\% & 94.8 & 82.7 & 59.9 & 66.0 & 88.7 & 79.7 & 73.4 & 87.0 & 81.6 & 87.6 & 35.3 & 76.06\\
    & Masking & 0.001\% & 95.1 & 80.1 & 60.6 & 65.4 & 85.7 & 78.7 & 73.2 & 87.7 & 81.6 & 86.0 & 32.6 & 75.15\\
    \hline 
    \end{tabular}
\end{table*}

\begin{figure*}[t]\centering
\setlength{\tabcolsep}{-0.0cm}
\begin{tabular}{ccc}
\includegraphics[scale=0.20]{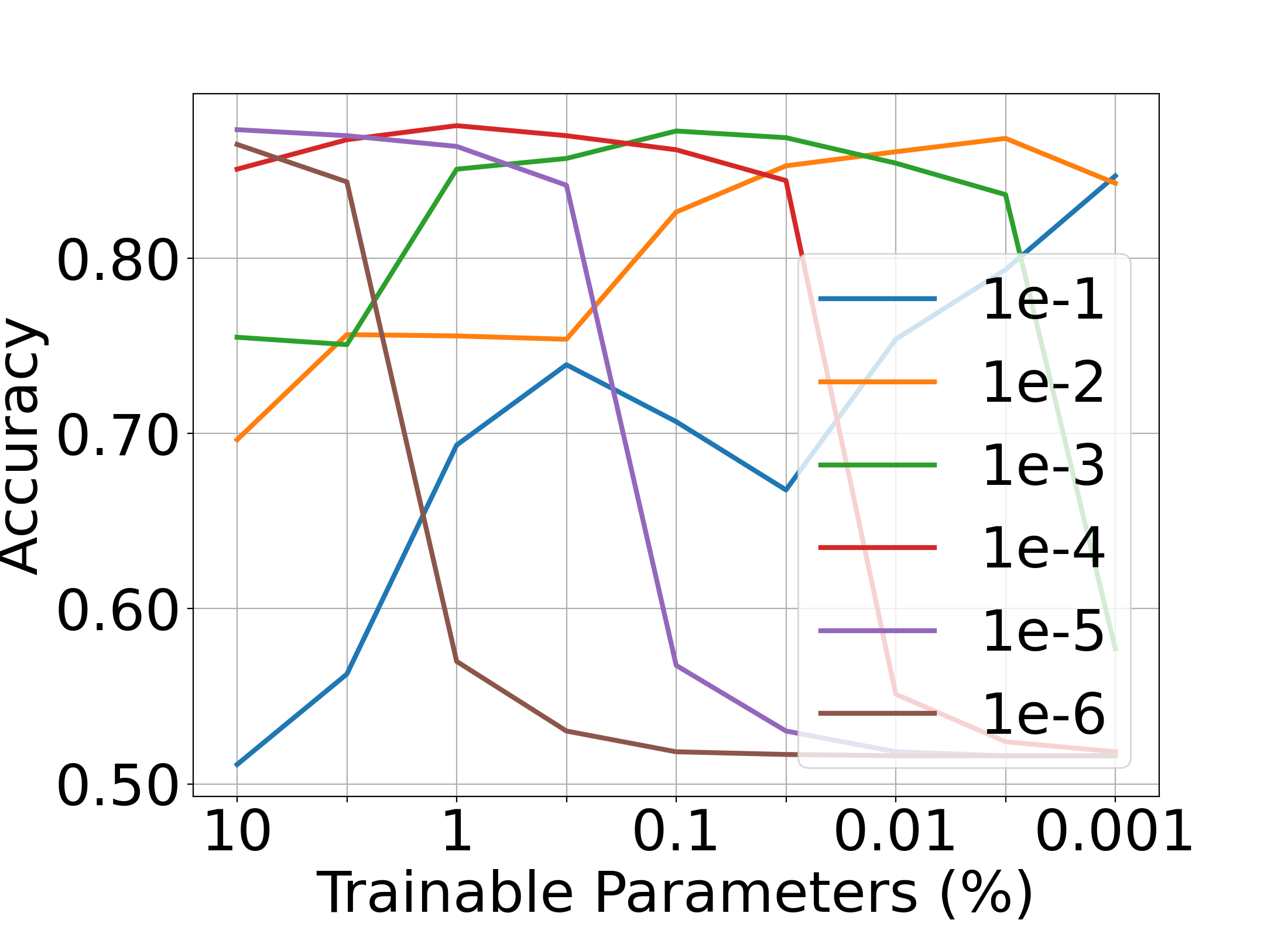} &
\includegraphics[scale=0.20]{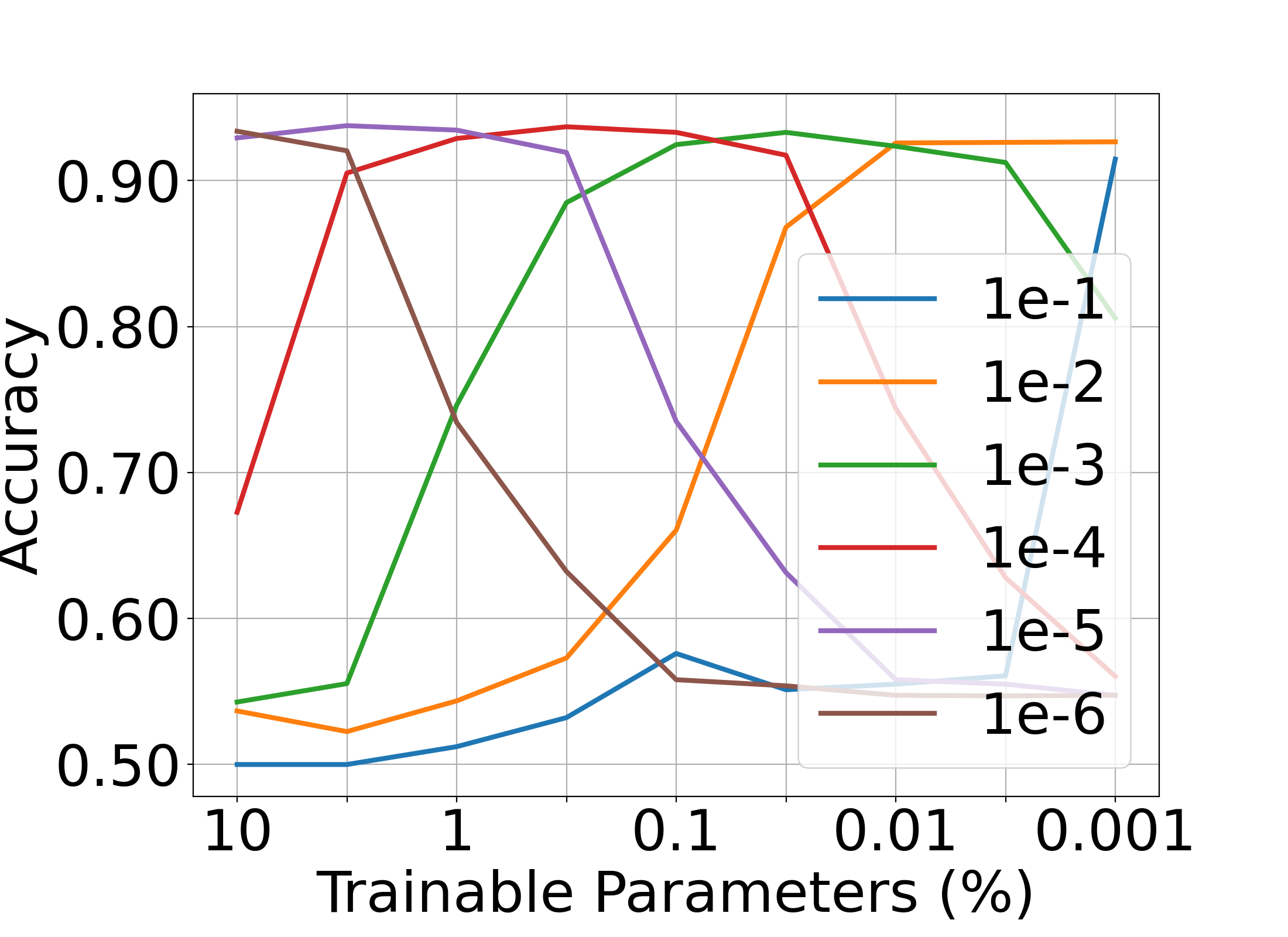} &
\includegraphics[scale=0.20]{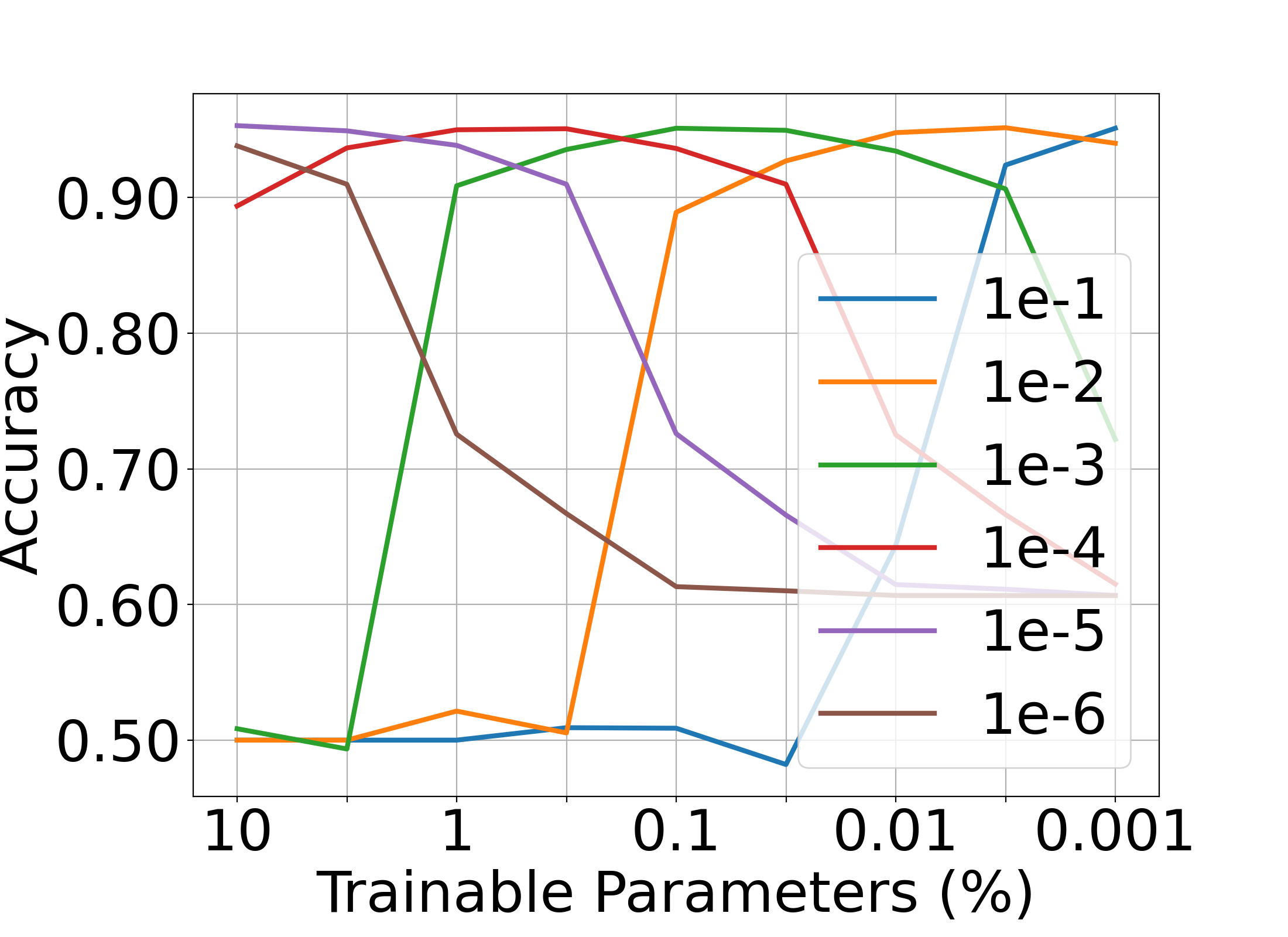} \\
 SST-2, OPT-125m &  SST-2, OPT-1.3b &  SST-2, OPT-13b \\
\end{tabular}
\caption{\textbf{The accuracy of Random Masking on SST-2 dataset with different learning rates. }The figure shows that the accuracy remains steady despite the small number of trainable parameters, as long as using an appropriate learning rate. As the trainable parameter ratio becomes smaller, the optimal learning rate becomes larger. The complete results of SuperGLUE benchmark are given in Figure~\ref{fig:lrapx1},~\ref{fig:lrapx2} and~\ref{fig:lrapx3}.}
\label{fig:lr}
\end{figure*}

\begin{figure*}[t]\centering
\setlength{\tabcolsep}{-0.0cm}
\begin{tabular}{ccc}
\includegraphics[scale=0.20]{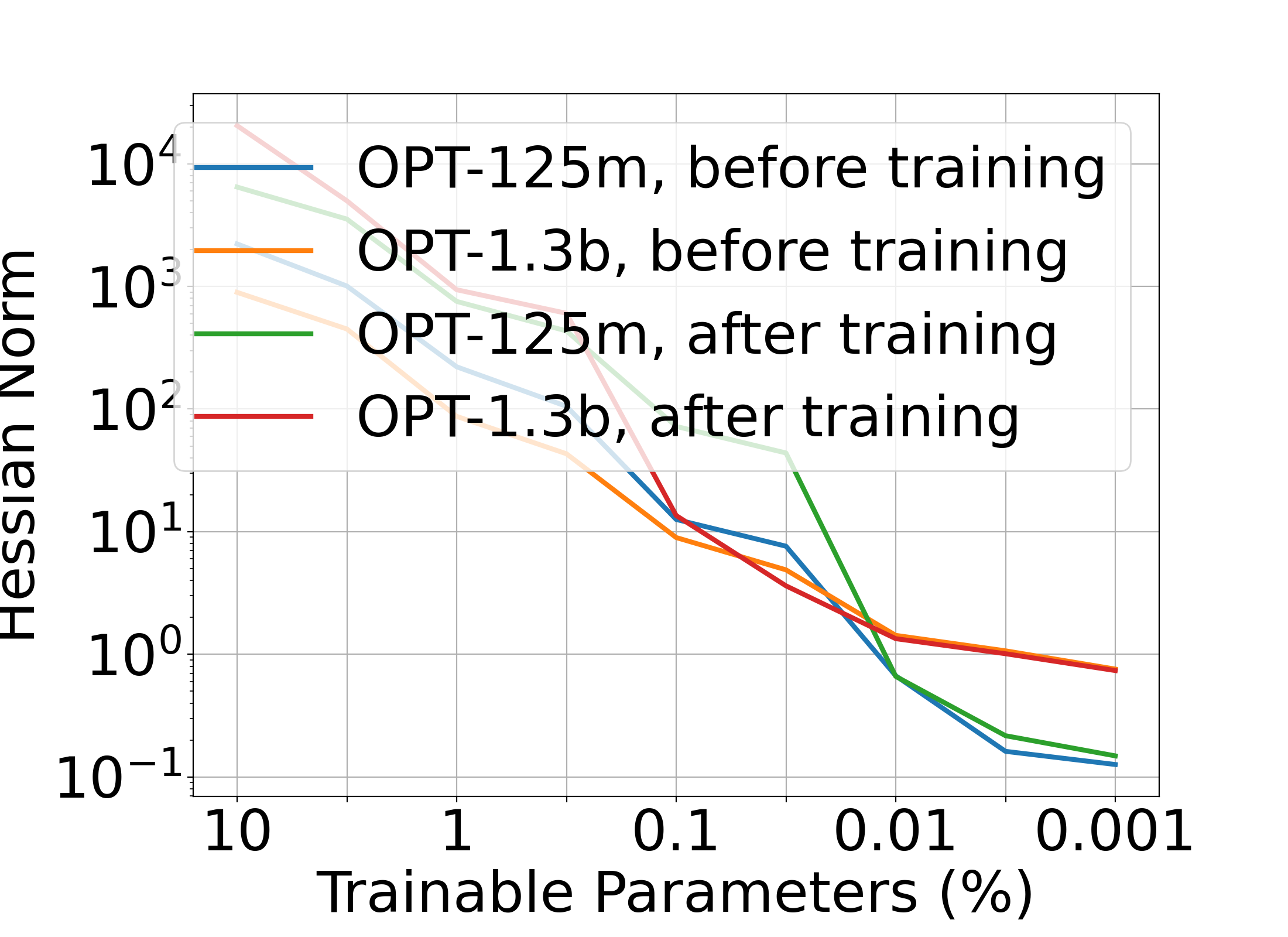} &
\includegraphics[scale=0.20]{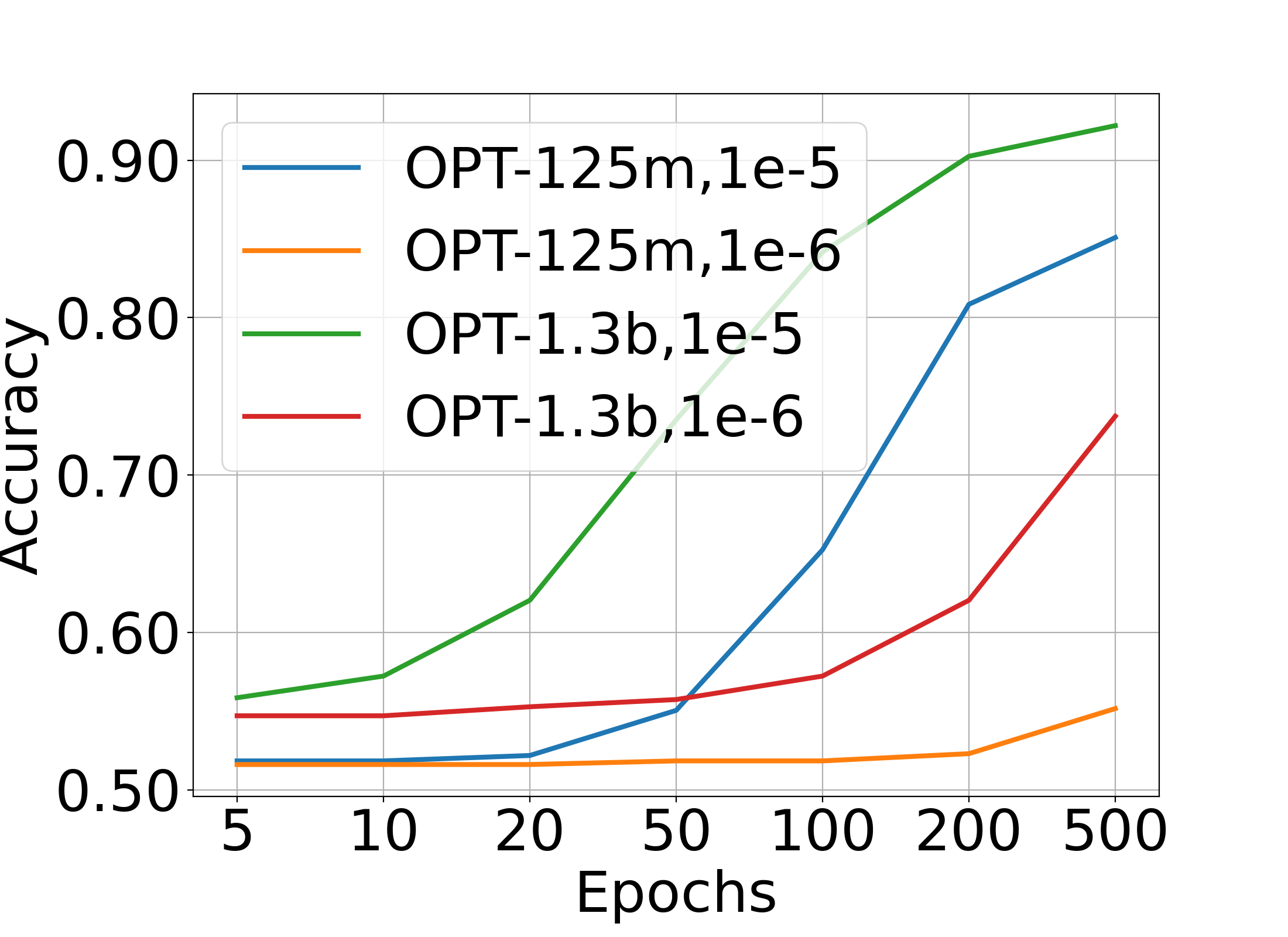} &
\includegraphics[scale=0.20]{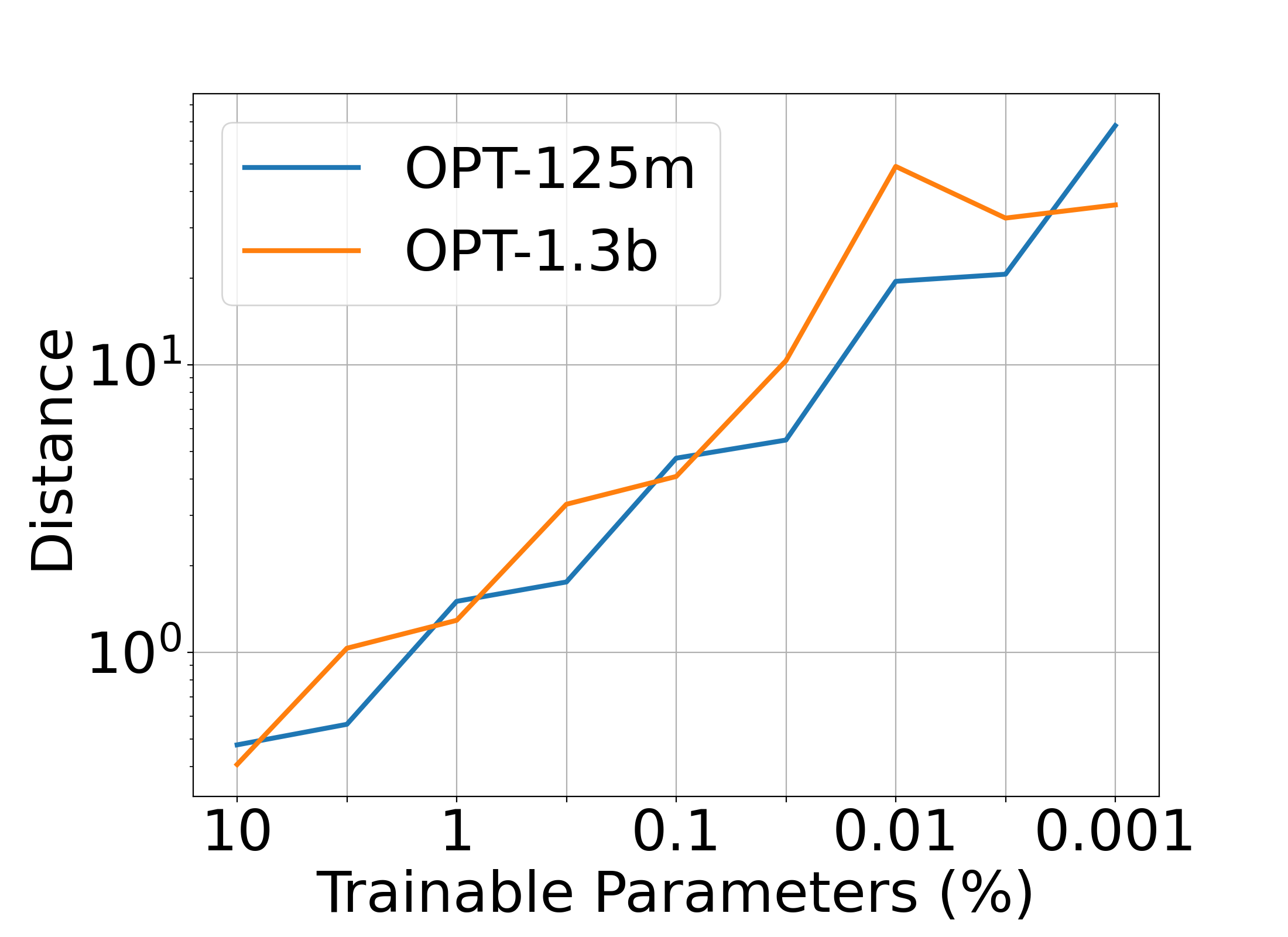} \\
 (a) &  (b) &  (c) \\
\end{tabular}
\caption{\textbf{Investigations into the training mechanism behind Random Masking.} 
\textbf{(a).} Smaller trainable parameter ratio induces smaller hessian $\ell_2$ norm. 
\textbf{(b).} Longer training steps compensate small learning rates.  
\textbf{(c).} Smaller trainable parameter ratio gives more distant solutions.
These figures present the results on SST-2 datasets. 
Additional Results on other datasets can be found in Figure~\ref{fig:small_norm_apx},~\ref{fig:more_steps_apx} and~\ref{fig:distance_apx}.}
\label{fig:investigations}
\end{figure*}

This section presents the empirical findings of Random Masking. We first outline the experiment setups and present the main results. Then we provide in-depth analyses to explore the underlying mechanisms of Random Masking. Finally, we perform various ablation studies to validate the robustness of Random Masking. Code is available at \url{https://github.com/JingXuTHU/Random-Masking-Finds-Winning-Tickets-for-Parameter-Efficient-Fine-tuning}.

\subsection{Setups}
\paragraph{Models and Datasets. }
We choose the OPT model family~\citep{zhang2022opt} as the pretrained LLMs, using three different model scales: 125m, 1.3b and 13b.
We conduct the experiments on a diverse range of datasets and tasks, 
including 8 datasets in the SuperGLUE benchmark~\citep{wang2019superglue} and three additional datasets. 
In line with the approach in~\citet{malladi2023fine}, we randomly sample 1000 data points from each dataset's original training split for training, 500 data points for validation, and randomly sample 1000 data points from its original validation split for testing.
F1 score is used as the metric for SQuAD and DROP, and test accuracy is used for other datasets.
We also use the same prompt templates as in~\citet{malladi2023fine}.

\paragraph{Methods.}
We conduct experiments using Random Masking, and consider various baselines including full parameter fine-tuning, LoRA~\citep{hu2021lora}. We also experiment with other baselines including Adapter~\citep{houlsby2019parameter}, Prefix-Tuning~\citep{li2021prefix}, BitFit~\citep{zaken2021bitfit} and AdaLoRA~\citep{zhang2023adaptive}, the results of which are given in Appendix~\ref{apx:complete_random_masking}.
For Random Masking, we choose the trainable parameter ratio for from 
{\small $\{10\%,5\%,1\%,0.5\%,0.1\%,0.05\%,0.01\%,0.005\%,0.001\%\}$}, and implement the sparse matrix operation using the spops library~\citep{nikdan2024rosa}. We choose $r=8$ and $\alpha=16$ for LoRA. Following the original implementations of Lora~\citep{hu2021lora}, we apply LoRA and Random Masking only to the query and value matrix in each attention layer.

\subsection{Main Results}
Our experiments raise the following two major observations on Random Masking. 
\paragraph{Random Masking achieves on-par performance with the baselines.} In Table~\ref{tab:res}, we report the test performance of different methods,  obtained using optimal grid-searched learning rates. Complete results for Random Masking with different trainable parameter ratios are provided in Table~\ref{tab:res_complete} in Appendix~\ref{apx:exps}.
The results indicate that despite its simple design, Random Masking  achieves comparable performance with baselines across different model scales, using a significantly smaller trainable parameter ratio. Additionally, we note that larger models are more amenable to sparser masking. Take Random Masking on OPT-13b model with trainable parameter ratios of 0.1\% and 0.001\% as an example: despite having a hundredfold difference in trainable parameter count, the latter one exhibits performances that are within a 2\% margin of the former. 

\paragraph{Sparse Random Masking necessitates significantly larger learning rates. } We plot how the performance of Random Masking varies with different learning rates in Figure~\ref{fig:lr}. The optimal learning rates for different methods are listed in Table~\ref{tab:lr_random_masking} and~\ref{tab:lr_baselines}.
These results highlight the critical role of an appropriate learning rate for the success of Random Masking. Our findings indicate that Random Masking with smaller trainable parameter ratios requires larger learning rates. For Random Masking with a very sparse mask, \emph{e.g.}, 0.001\% trainable parameters, the optimal learning rate can be as high as \num{1e-1}, which is typically considered excessively large and unstable for standard NLP training. In fact, our experiments show that such an aggressive learning rate will lead to a fast divergence and a degraded performance for other baselines.

\begin{table*}[t]
    \centering
    \caption{\textbf{The accuracy of Random Masking on image classification tasks.} The optimal learning rate are given in the parenthesis. These results show that the previous observations on the language domain also hold for the vision domain. }
    \label{tab:vision}    
    \begin{tabular}{ccccccc}
    \hline
     \textbf{Method} & \textbf{Params} & \textbf{CIFAR10} &\textbf{GTSRB} &\textbf{MNIST} &\textbf{SVHN} & \textbf{RESISC45} \\
    \hline
    FT & 100\% & 98.5~(\num{1e-5})  & 99.2~(\num{1e-4}) & 99.8~(\num{1e-5}) & 97.9~(\num{1e-5}) & 96.7~(\num{1e-5})\\
    LoRA & 0.3\% & 98.4~(\num{1e-4}) & 99.2~(\num{1e-3})  & 99.7~(\num{1e-3}) & 97.8~(\num{1e-3}) & 96.8~(\num{1e-3})\\
    Random Masking & 1\% & 98.5~(\num{1e-3}) & 99.2~(\num{1e-3}) & 99.7~(\num{1e-2}) & 97.8~(\num{1e-3}) & 96.5~(\num{1e-3})\\
    Random Masking & 0.1\% & 98.3~(\num{1e-2}) & 98.9~(\num{1e-2}) & 99.6~(\num{1e-2}) & 97.3~(\num{1e-2}) & 95.8~(\num{1e-2})\\
    Random Masking & 0.01\% & 97.8~(\num{1e-1}) & 96.8~(\num{1e-1}) & 99.3~(\num{1e-1}) & 95.6~(\num{1e-1}) & 93.3~(\num{1e-1}) \\
    \hline 
    \end{tabular}
\end{table*}

\begin{table*}[t]
    \centering
    \setlength{\tabcolsep}{3pt}
    \caption{\textbf{The performance using full training split.} The results show that Random Masking is robust to the size of training set, and full-parameter fine-tuning performs better with a larger training set. }
    \label{tab:full_dataset}    
    \begin{tabular}{ccccccccc}
    \hline
    \textbf{Model }& \textbf{Task} & \textbf{FT} & \textbf{LoRA} & \textbf{Adapter} & \textbf{BitFit} & \textbf{Masking~($0.1\%$)} & \textbf{Masking~($0.01\%$)} & \textbf{Masking~($0.001\%$)} \\
    \hline
    OPT-125m&  SST-2 & 91.7 & 91.3 & 90.7& 90.8 & 91.6 & 90.4 & 87.8  \\
    OPT-125m&  MultiRC & 69.1 & 70.0 & 69.4 &69.0 & 69.5 & 68.3 & 68.1  \\
    OPT-1.3b&  SST-2 & 95.1 & 95.8 & 95.6 & 95.4& 95.4 & 95.4 & 94.8  \\
    OPT-1.3b&  MultiRC & 81.2 & 78.3 & 78.3 & 75.8 & 80.1 & 74.5 & 72.2  \\
    \hline 
    \end{tabular}
\end{table*}

\begin{table*}[t]
    \centering
    \setlength{\tabcolsep}{3pt}
    \caption{\textbf{The performance and optimal learning rates of Random Masking with Llama2 as the pretrained model.} 
    The learning rates are searched from \{1, 2, 5\}$\times$ \{\num{1e-1}, \num{1e-2}, \num{1e-3}, \num{1e-4}, \num{1e-5}, \num{1e-6}\}. 
    The results show that Random Masking is robust to the choice of base models.}
    \label{tab:llama2}    
    \begin{tabular}{ccccccc}
    \hline
    \textbf{Task} & \textbf{FT} & \textbf{LoRA}  & \textbf{Masking~($1\%$)} & \textbf{Masking~($0.1\%$)} & \textbf{Masking~($0.01\%)$} & \textbf{Masking~($0.001\%$)} \\
    \hline
    SST-2 & 94.7(\num{1e-6}) & 95.4(\num{1e-4}) & 95.4(\num{1e-4}) & 95.5(\num{1e-3}) & 95.5(\num{1e-2}) & 95.5(\num{5e-2}) \\
    WiC & 71.9(\num{1e-6}) & 72.7(\num{1e-4}) &  72.1(\num{2e-5}) & 70.6(\num{5e-4}) & 70.5(\num{1e-2}) & 71.7(\num{5e-2}) \\
    RTE & 85.9(\num{1e-5}) & 86.5(\num{1e-3}) & 85.4(\num{1e-4}) & 85.4(\num{1e-3}) &  85.6(\num{1e-2}) & 83.0(\num{5e-2}) \\
    COPA & 87.0(\num{1e-6}) & 85.0(\num{1e-4}) & 87.0(\num{2e-4})  &87.0(\num{2e-3}) & 88.0(\num{5e-3})  & 88.0(\num{5e-2})\\
    
    \hline 
    \end{tabular}
\end{table*}

\begin{figure*}[t]\centering
\setlength{\tabcolsep}{-0.0cm}
\begin{tabular}{ccc}
\includegraphics[scale=0.20]{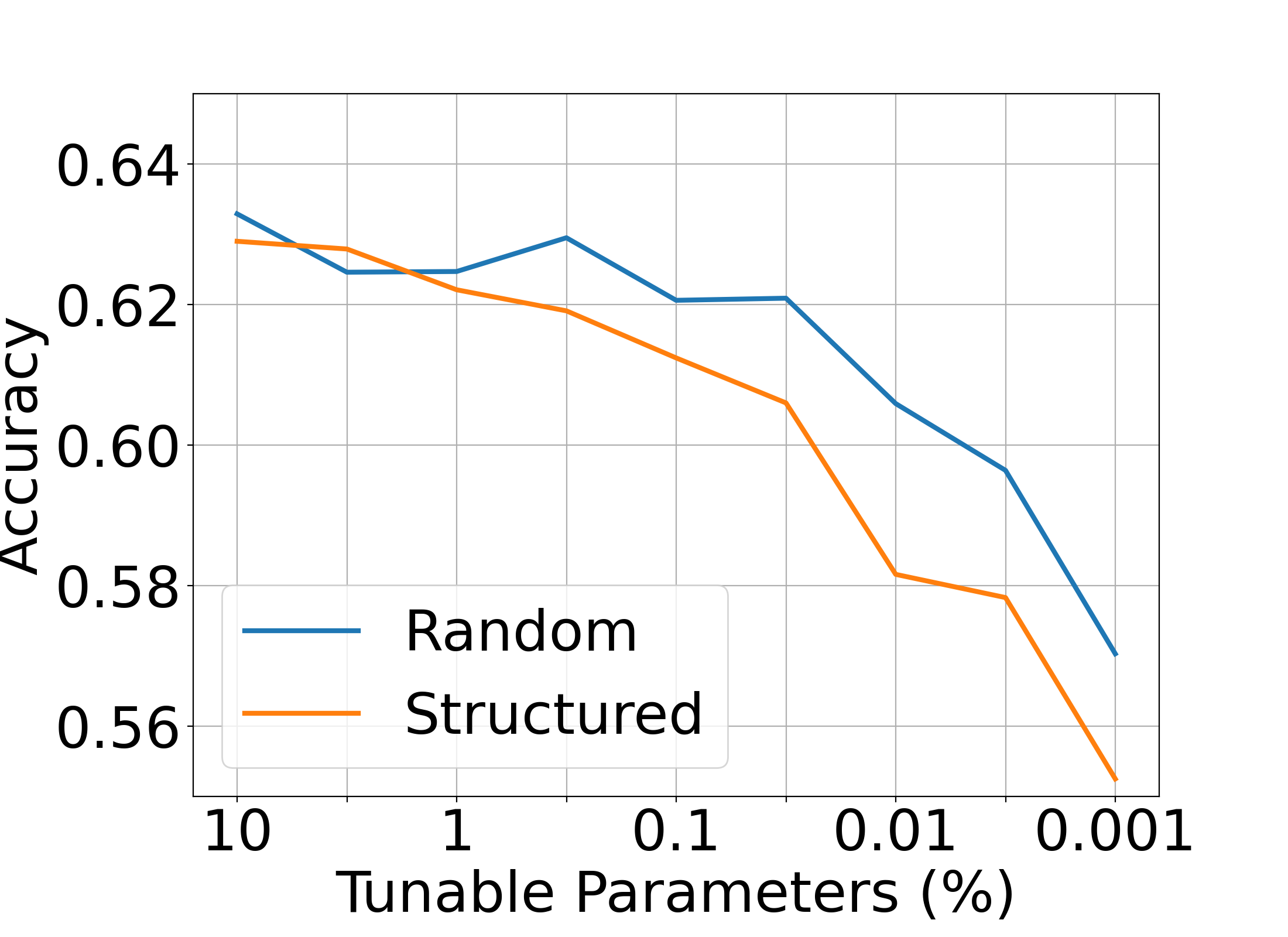} &
\includegraphics[scale=0.20]{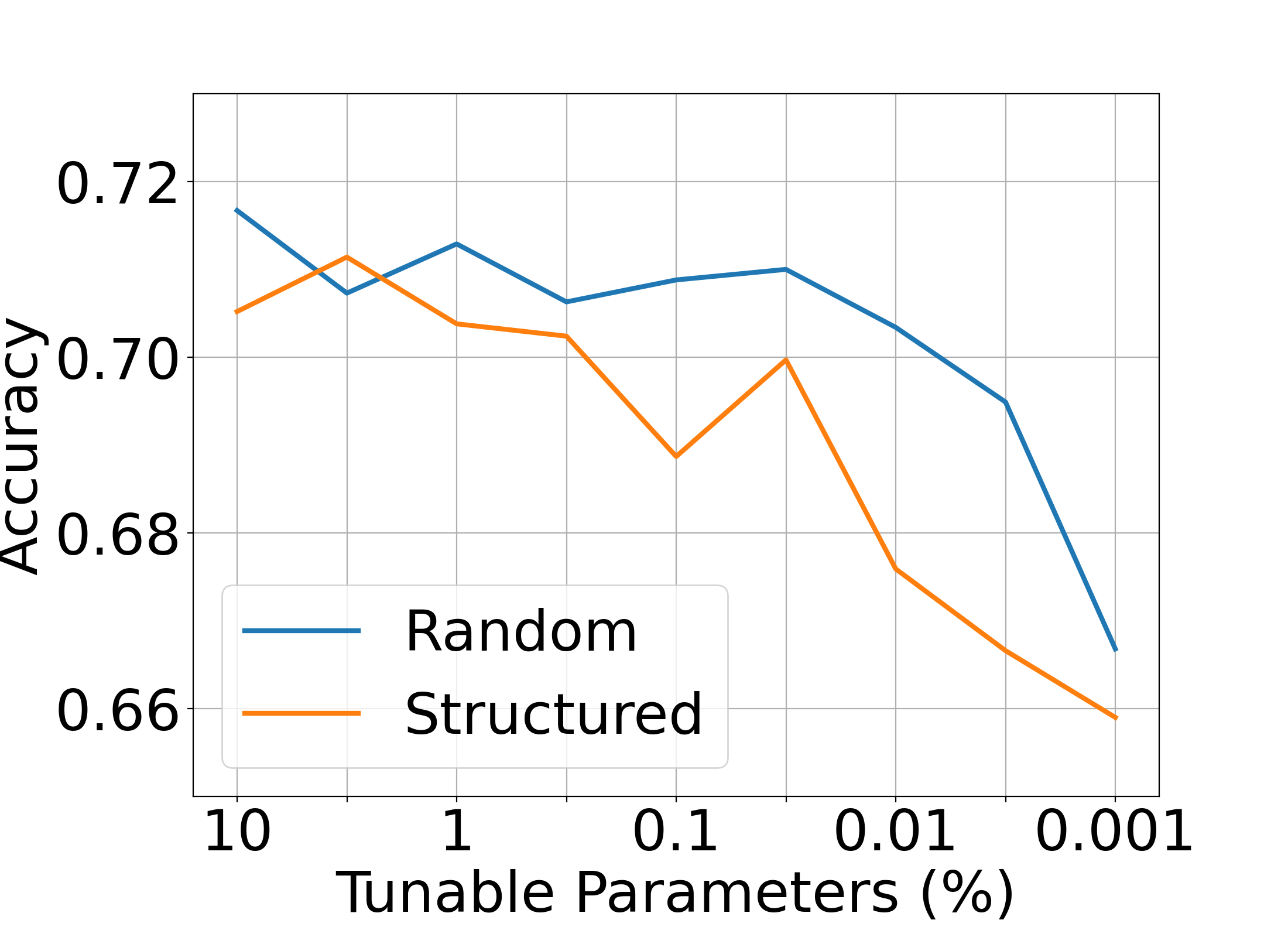} &
\includegraphics[scale=0.20]{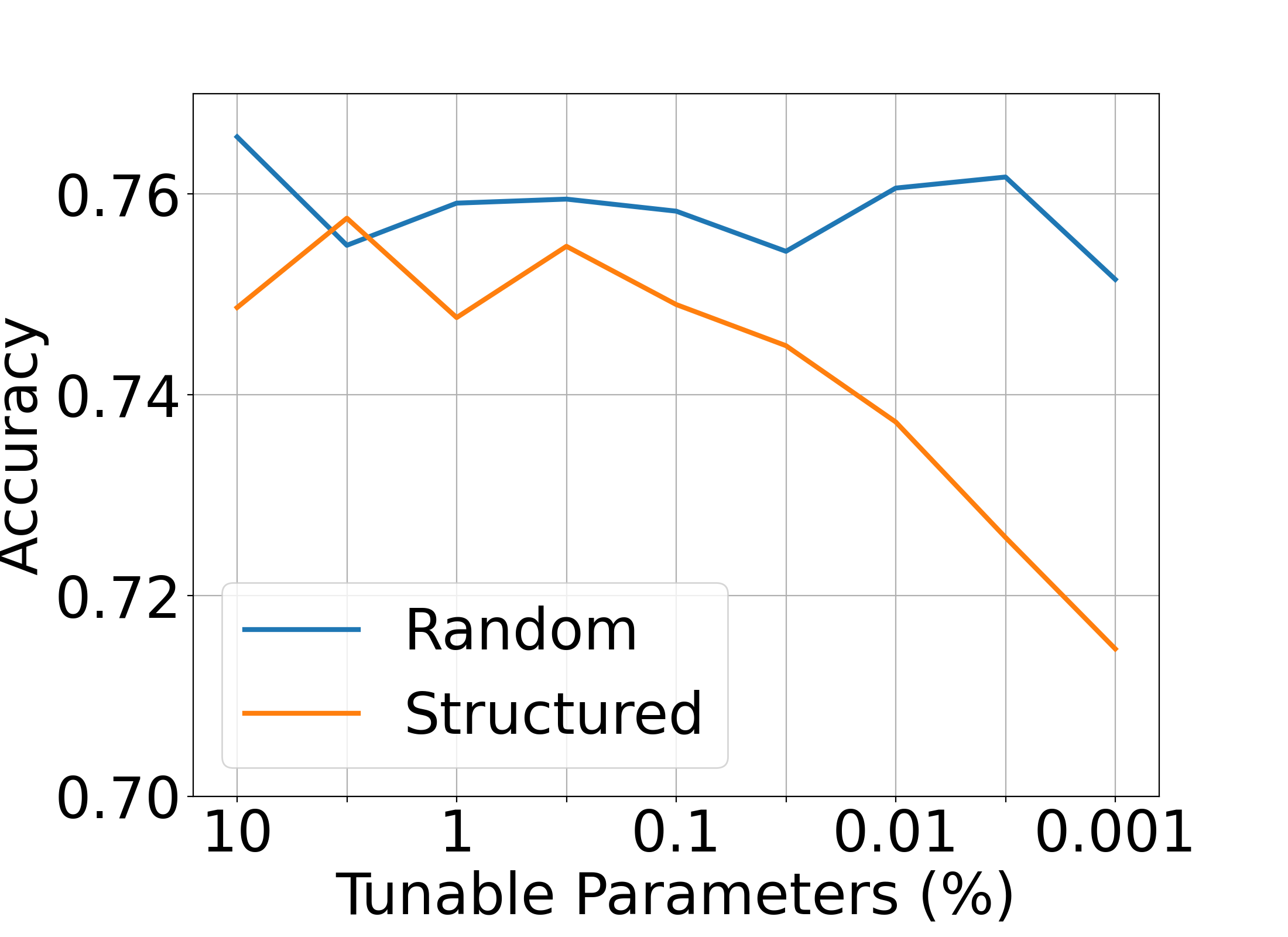} \\
 OPT-125m & OPT-1.3b &  OPT-13b \\
\end{tabular}
\caption{\textbf{Random Masking v.s. Structured Masking.} 
Structured Masking has a degraded and faster decaying performance. The complete results for Structured Masking can be found in Table~\ref{tab:structured_masking_apx} and Table~\ref{tab:lr_structured_masking}.}
\label{fig:structured_masking}
\end{figure*}

\subsection{Investigations and Explanations}
\label{sec:investigations}
The results in Figure~\ref{fig:lr} raise two important questions about the selection of learning rates. The first one is why large learning rates do not diverge and work well for Random Masking. The second one is why small learning rates, which are suitable for full fine-tuning and traditional PEFT methods, do not work well for Random Masking. We provide the following empirical observations to explain the phenomena.

\paragraph{The Stability of Large Learning Rates: Sparser Random Masking Leads to a Flatter Loss Landscape. }
A well-known result in optimization theory says that gradient descent  with a learning rate below $\Theta(1/L)$ is guaranteed to converge, where $L$ is the smoothness coefficient given by the $\ell_2$ norm of the hessian of the objective function~\citep{bubeck2015convex}. 
Therefore, the good performance of large learning rates suggests that the loss landscape after Random Masking is flat, \emph{i.e.}, having a small Hessian norm. 

We numerically calculate $\ell_2$ norm of the hessian before and after training using the power method. The results in Figure~\ref{fig:investigations}(a) show that Random Masking leads to a smaller Hessian norm and thus a flatter loss landscape. Small Hessian norm also indicates that the loss landscape of PEFT is almost linear, which aligns with the findings in~\citet{malladi2023kernel}.

\paragraph{The Necessity of Large Learning Rates: Sparser Random Masking Leads to More Distant Solutions.}
Figure~\ref{fig:lr} shows that small learning rates work badly for sparse masking. 
Since these small learning rates are sufficient for convergence when the masking ratio is low, we attribute this failure to the underfitting of small learning rates. We validate this in Figure~\ref{fig:investigations}(b), which presents the performance on SST-2 dataset with longer training epochs and small learning rates. We observe that as the training epoch extends, the performance monotonically increases. Therefore, the failure of small learning rates is due to optimization rather than generalization, since they require a significantly large number of steps to fit the dataset. 

The required number of steps can be reflected by the $\ell_2$ distance between the initialization and final iterate. We show in Figure~\ref{fig:investigations}(b) that as the masking gets sparser, this distance becomes larger, even though only a smaller number of parameters are varied. This indicates that the iterates have to travel further to reach a minimizer. 

\paragraph{Random Masking Demonstrates the Expressiveness of Pretrained LLMs. } 
The above investigations reveals the following general picture:
Random Masking deactivates a significant portion of dimensions, excluding minimizers that are easily reachable. However, thanks to the expressiveness of pretrained networks, there are still distant minimizers in the active dimensions, which require a larger learning rate to be effectively reached.
Therefore, the success of Random Masking is not merely attributed to the method itself, but more to the underlying expressive power and generalization ability of pretrained LLMs. Random Masking serves as a tool to reveal the surprising expressive power of pretrained LLMs, which is a key message that we want to share with the community.

\subsection{Ablations Studies and Additional Experiments}
In this section, we provide further analyses to uncover how the task, the data size, the choice of base models, and the ways of selecting the mask affect the performance of Random Masking.
\label{sec:ablations}

\paragraph{Fine-tuning Vision Models.} To investigate the performance of Random Masking on vision tasks, we choose Clip ViT-B/16 as the pretrained model, and fine-tune it on 5 image classification tasks. The results are given in Table~\ref{tab:vision}, which shows a close performance to full-parameter fine-tuning and a similar trend of optimal learning rate as in NLP tasks. The detailed setup are deferred to Appendix~\ref{apx:vision}.

\paragraph{Varying Data Sizes.}
We demonstrate the robustness of Random Masking to the size of the training set.
We choose the SST-2 and MultiRC datasets, which have 67.3k and 27.3k data points in the training split, respectively. We conduct full-dataset training on them, with the results presented in Table~\ref{tab:full_dataset}. The results indicate that the performance of Random Masking is consistent across different sizes of training set. 

Furthermore, we observe that the influence of trainable parameter count is more evident in this full training set scenario. 
Notably, full-parameter fine-tuning performs comparably better than in the low data regime. 
This phenomenon is attributed to the pretrained model capacity relative to the training data, as larger datasets require more parameters to fit. This finding underscores again the critical role of expressive power in fine-tuning pretrained LLMs.  

\paragraph{Varying Base Models.}
Next, we show that Random Masking is robust to the choice of pretrained models. 
We choose Llama2-7b~\citep{touvron2023llama2} as the pretrained model and conduct the experiments. The results are given in Table~\ref{tab:llama2}. 
We find that Compared with the OPT series model, Llama2 requires a more fine-grained learning rate search. The results indicate that the efficacy of Random Masking remains consistent across various pretrained base models, as long as the learning rate is properly selected.

\paragraph{Masks beyond Uniformly Random.}
Finally, we delve into the role of randomness in Random Masking. Randomly choosing the mask induces uniformity when the parameter count is large. To investigate its effect, we propose a contrary method which we call Structured Masking. Instead of randomly selecting the mask, Structured Masking chooses the trainable parameters along the columns of the weight matrix, as illustrated in~\ref{fig:masking}(d).

The results for structured masking are presented in Figure~\ref{fig:structured_masking}. Compared with Random Masking, Structured Masking yields lower performance and exhibits a more rapid decline in accuracy as the trainable parameter count decreases. The performance gain of randomly selecting the mask indicates that the uniformity induced by randomness can be important for fine-tuning pretrained LLMs.

\section{Theoretical Explanations}
In this section, we uncover the interplay between Random Masking, loss landscape and learning rate by analyzing an overparameterized linear regression model. Our theoretical results show that for linear models, Random Masking can lead to a flatter landscape, a larger stable learning rate, and a more distant solution in the considered setup.

\subsection{Setups}
Consider fitting a linear model $f(\boldsymbol{w})=\boldsymbol{w}^\top \boldsymbol{x}$ on a dataset $\{(\boldsymbol{x}_i, y_i)\}_{1\le i\le n}$, where  $\boldsymbol{x}_i\in\mathbb{R}^d$ is the feature vector and  $y_i\in\mathbb{R}$ is the target. Let $\boldsymbol{X}=\left(\boldsymbol{x}_1, \cdots, \boldsymbol{x}_n\right)^\top=\left(\boldsymbol{z}_1,\cdots, \boldsymbol{z}_d\right)\in \mathbb{R}^{n\times d}$ and $\boldsymbol{y}=(y_1, \cdots, y_n)\in\mathbb{R}^n$. We ignore the bias without loss of generality. Since pretrained model has a large parameter count, here we consider the overparameterized setting, \emph{i.e.}, $d\gg n $.

To mimic the Random Masking method, we apply a random masking matrix on the feature vectors. We denote the random masking matrix as $\boldsymbol{M}:=\text{diag}(m_1, \cdots, m_d)$, where each $m_i$ is sampled i.i.d. from $\text{Binom}(p)$ and $p\in[0,1]$ denotes the trainable parameter ratio. 
We denote $\Tilde{\boldsymbol{w}}$ as the pretrained model weights, and $\boldsymbol{w}$ as the trainable weights in Random Masking.

We consider minimizing the following $\ell_2$ loss using gradient descent with learning rate $\eta>0$: 
\begin{align}
    L(\boldsymbol{w})=\frac{1}{2n}\|\boldsymbol{y}-\boldsymbol{X}(\Tilde{\boldsymbol{w}}+\boldsymbol{M}\boldsymbol{w})\|^2.
\end{align}
Since $\boldsymbol{X}\Tilde{\boldsymbol{w}}$ can be merged into $\boldsymbol{y}$, we assume without loss of generality that $\Tilde{\boldsymbol{w}}=0$.
Denote the training trajectory as $\{\boldsymbol{w}_i\}_{i\ge 0}$, where $\boldsymbol{w}_{i+1}=\boldsymbol{w}_i-\eta \nabla L(\boldsymbol{w}_i)$ and $\boldsymbol{w}_0=0$. 

We use $\lambda_i(\boldsymbol{A})$ denote the $i$-th largest eigenvalue of matrix $\boldsymbol{A}$. When $\boldsymbol{A= MX^\top XM}$, we drop the matrix and just use $\lambda_i$ for brevity. 
Note that the smoothness of $L(\boldsymbol{w})$ is $\frac{1}{n}\lambda_1$.

\subsection{Sparse Masking Leads to Small Eigenvalues}
We first present the following concentration bound on the eigenvalues $\lambda_i$ of matrix $\boldsymbol{MX^\top XM}$.
\begin{theorem}\label{thm:egv_con}
    Suppose that each entry of $\boldsymbol{X}$ is in $[0,r]$. Then for any $0<\delta<1$, with probability at least $1-\delta$, the following inequality for $\lambda_i$ holds for any $i$, 
    \begin{align*}
        |\lambda_i - p\lambda_i(\boldsymbol{X^\top X})| \le  2\sqrt{2dn^3 r^4}+\sqrt{\frac{2\log(\frac{1}{\delta})}{dn^2r^4}}
    \end{align*}
\end{theorem}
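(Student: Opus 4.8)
The plan is to reduce the $d\times d$ matrix $\boldsymbol{MX^\top XM}$ to an $n\times n$ matrix, invoke Weyl's perturbation inequality, and then control the resulting fluctuation with a matrix concentration bound. The key algebraic observation is that $\boldsymbol{M}^2=\boldsymbol{M}$, since every $m_i\in\{0,1\}$. Consequently $\boldsymbol{MX^\top XM}=(\boldsymbol{MX^\top})(\boldsymbol{XM})$ has exactly the same nonzero eigenvalues (with multiplicity) as $(\boldsymbol{XM})(\boldsymbol{MX^\top})=\boldsymbol{XM^2X^\top}=\boldsymbol{XMX^\top}$, and likewise $\boldsymbol{X^\top X}$ and $\boldsymbol{XX^\top}$ share their nonzero eigenvalues. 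Because $d\gg n$, both $d\times d$ matrices have rank at most $n$, so for $i>n$ we have $\lambda_i=0$ and $\lambda_i(\boldsymbol{X^\top X})=0$ and the claimed inequality holds trivially; for $i\le n$ it suffices to bound $|\lambda_i(\boldsymbol{XMX^\top})-p\,\lambda_i(\boldsymbol{XX^\top})|$.

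Writing $\boldsymbol{X}=(\boldsymbol{z}_1,\dots,\boldsymbol{z}_d)$, we have $\boldsymbol{XMX^\top}=\sum_{j=1}^d m_j\,\boldsymbol{z}_j\boldsymbol{z}_j^\top$ and $p\,\boldsymbol{XX^\top}=\sum_{j=1}^d p\,\boldsymbol{z}_j\boldsymbol{z}_j^\top$, so by Weyl's inequality
\begin{align*}
\max_i\bigl|\lambda_i(\boldsymbol{XMX^\top})-p\,\lambda_i(\boldsymbol{XX^\top})\bigr|\;\le\;\Bigl\|\textstyle\sum_{j=1}^d (m_j-p)\,\boldsymbol{z}_j\boldsymbol{z}_j^\top\Bigr\|_{\mathrm{op}}\;=:\;\|\boldsymbol{E}\|_{\mathrm{op}}.
\end{align*}
The summands $\boldsymbol{Y}_j:=(m_j-p)\boldsymbol{z}_j\boldsymbol{z}_j^\top$ are independent, symmetric, and mean zero. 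The entrywise bound $X_{ij}\in[0,r]$ gives $\|\boldsymbol{z}_j\|^2\le nr^2$, hence $\|\boldsymbol{Y}_j\|_{\mathrm{op}}\le\|\boldsymbol{z}_j\|^2\le nr^2$, while the matrix variance obeys $\bigl\|\sum_j\mathbb{E}[\boldsymbol{Y}_j^2]\bigr\|_{\mathrm{op}}=p(1-p)\bigl\|\sum_j\|\boldsymbol{z}_j\|^2\boldsymbol{z}_j\boldsymbol{z}_j^\top\bigr\|_{\mathrm{op}}\le nr^2\,\lambda_1(\boldsymbol{XX^\top})\le dn^2r^4$, using $\lambda_1(\boldsymbol{XX^\top})\le\Tr(\boldsymbol{XX^\top})\le ndr^2$.

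Applying a matrix concentration inequality (matrix Hoeffding, or matrix Bernstein) to $\boldsymbol{E}$ then gives, with probability at least $1-\delta$, a bound on $\|\boldsymbol{E}\|_{\mathrm{op}}$ consisting of a variance-scale term governed by $dn^2r^4$ (which can be inflated to the stated $2\sqrt{2dn^3r^4}$) together with a $\delta$-dependent tail term; combining this with the Weyl step and the reduction above proves the theorem. An alternative that sidesteps the logarithmic dimension factor of matrix Bernstein is to use $\|\boldsymbol{E}\|_{\mathrm{op}}\le\|\boldsymbol{E}\|_F$ and apply McDiarmid's bounded-differences inequality to the scalar $\|\boldsymbol{E}\|_F$ — flipping one $m_j$ perturbs it by at most $\|\boldsymbol{z}_j\boldsymbol{z}_j^\top\|_F=\|\boldsymbol{z}_j\|^2\le nr^2$ — together with $\mathbb{E}\|\boldsymbol{E}\|_F\le\sqrt{\mathbb{E}\|\boldsymbol{E}\|_F^2}=\sqrt{p(1-p)\sum_j\|\boldsymbol{z}_j\|^4}$. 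The main obstacle is not conceptual but bookkeeping: carefully handling the zero eigenvalues in the dimension reduction, and choosing the concentration tool and slack so that the constants and the precise $d,n,r,\delta$-dependence in the displayed inequality come out exactly as stated.
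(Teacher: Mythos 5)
Your overall skeleton matches the paper's: you reduce the $d\times d$ eigenvalue comparison to the $n\times n$ one via $\boldsymbol{M}^2=\boldsymbol{M}$ and the nonzero-spectrum identity for $AB$ versus $BA$ (the paper does this only implicitly, in one line), you invoke Weyl's perturbation theorem, and you then bound the operator norm of $\boldsymbol{Q}=\boldsymbol{X}\boldsymbol{M}\boldsymbol{X}^\top-p\boldsymbol{X}\boldsymbol{X}^\top$. Where you diverge is in the concentration step. The paper does not use matrix Hoeffding/Bernstein or McDiarmid; instead it proves a hand-rolled bound by (i) showing that for fixed $\boldsymbol{u}$ the scalar quadratic form $\langle\boldsymbol{u},\boldsymbol{Q}\boldsymbol{u}\rangle=\sum_i(m_i-p)(\boldsymbol{z}_i^\top\boldsymbol{u})^2$ is sub-Gaussian (each bounded summand gives a Hoeffding-type MGF), (ii) passing from a fixed direction to the operator norm via a $1/4$-net of $\mathbb{S}^{n-1}$ of size at most $17^n$, and (iii) union-bounding over the net and optimizing the Chernoff parameter. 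That covering step is what produces the $\exp(4n)$ prefactor and hence the $n^{3/2}$ scaling inside $2\sqrt{2dn^3r^4}$. Your matrix-Bernstein route would replace the $\exp(4n)$ with a $\log(n)$-type dimensional factor (actually giving a tighter bound, at the cost of a less elementary black box), while your McDiarmid-on-Frobenius route avoids a dimensional prefactor in the tail but pays by upgrading $\|\boldsymbol{E}\|_{\mathrm{op}}$ to $\|\boldsymbol{E}\|_F$ in the mean. Both are legitimate and arguably cleaner than the paper's net argument, but neither naturally reproduces the paper's precise displayed constants — in particular the $n^{3/2}$ in the first term is an artifact of the covering bound and would have to be manufactured by deliberate slack, as you note. (You may also have noticed your bookkeeping cannot yield the stated tail term $\sqrt{2\log(1/\delta)/(dn^2r^4)}$; tracing the paper's own last two lines shows this should read $\sqrt{2dn^2r^4\log(1/\delta)}$, so the difficulty there lies with the statement, not with your approach.)
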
 

The proofs are all deferred to Appendix~\ref{apx:proof}. This theorem shows that $\lambda_i$ concentrates around $p\lambda_i(\boldsymbol{X^\top X})$, which goes to zero as the trainable parameter ratio $p$ goes to zero. 
Theorem~\ref{thm:egv_con} also contains a deviation term that scales like $O\left(\sqrt{d}\right)$, since we consider the overparameterized setting where $d\gg n$. Note that 
\begin{align*}
    \mathbb{E}\left(\sum_{1\le i \le n}\lambda_i\right) &= \mathbb{E}\Tr\left(\boldsymbol{MX^\top XM} \right) = p\Tr\left(\boldsymbol{X^\top X} \right) \\&= p\|\boldsymbol{X}\|_F^2 = p\sum_{1\le i \le n}\sum_{1\le j \le d} x_{i,j}^2.
\end{align*}
Therefore, $\mathbb{E}\left(\sum_i\lambda_i\right)$ scales like $\Theta(d)\gg O\left(\sqrt{d}\right)$ under some mild conditions on the feature distribution. This indicates that despite having a deviation term, Theorem~\ref{thm:egv_con} characterizes the sharp concentration of $\lambda_i$ in the overparameterized setup. 

\subsection{Analysis of Gradient Descent Trajectories}
Next, we show that the optimization property of this problem has a crucial dependency on the spectrum $\lambda_i$ of the matrix $\boldsymbol{MX^\top XM}$. 

The following proposition is standard in optimization literature, which shows that the maximal stable learning rate is determined by the largest singular value. 
\begin{proposition}\label{prop:maxlr}
The training trajectory $\{\boldsymbol{w}_i\}_{i\ge 0}$ converges for any initialization if and only if $\eta<\frac{2n}{\lambda_1}$. 
\end{proposition}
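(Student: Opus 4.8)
The plan is to reduce the statement to a standard fact about the convergence of gradient descent on a quadratic objective, and to track carefully how the mask affects the relevant linear map. First I would compute the gradient explicitly: since $L(\boldsymbol{w}) = \frac{1}{2n}\|\boldsymbol{y} - \boldsymbol{XMw}\|^2$ (using $\tilde{\boldsymbol{w}}=0$), we get $\nabla L(\boldsymbol{w}) = \frac{1}{n}\boldsymbol{MX^\top X M w} - \frac{1}{n}\boldsymbol{MX^\top y}$. Writing $\boldsymbol{H} := \frac{1}{n}\boldsymbol{MX^\top XM}$ and $\boldsymbol{b} := \frac{1}{n}\boldsymbol{MX^\top y}$, the gradient descent recursion becomes $\boldsymbol{w}_{i+1} = (\boldsymbol{I} - \eta \boldsymbol{H})\boldsymbol{w}_i + \eta \boldsymbol{b}$, an affine iteration driven by the symmetric positive semidefinite matrix $\boldsymbol{H}$ whose eigenvalues are exactly $\frac{1}{n}\lambda_i$ together with a block of zeros coming from the masked-out coordinates and the kernel of $\boldsymbol{X}$.

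Next I would diagonalize. Let $\boldsymbol{H} = \boldsymbol{U}\boldsymbol{\Lambda}\boldsymbol{U}^\top$ with $\boldsymbol{\Lambda} = \mathrm{diag}(\mu_1,\dots,\mu_d)$, $\mu_j \ge 0$. In the eigenbasis the iteration decouples coordinate-wise: if $\boldsymbol{v}_i := \boldsymbol{U}^\top \boldsymbol{w}_i$ and $\boldsymbol{c} := \boldsymbol{U}^\top \boldsymbol{b}$, then $(\boldsymbol{v}_{i+1})_j = (1 - \eta\mu_j)(\boldsymbol{v}_i)_j + \eta (\boldsymbol{c})_j$. For each coordinate this is a scalar affine recursion with multiplier $1-\eta\mu_j$. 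When $\mu_j > 0$ it converges (to the fixed point $(\boldsymbol{c})_j/\mu_j$) if and only if $|1-\eta\mu_j| < 1$, i.e. $0 < \eta\mu_j < 2$; when $\mu_j = 0$ the recursion is $(\boldsymbol{v}_{i+1})_j = (\boldsymbol{v}_i)_j + \eta(\boldsymbol{c})_j$, which converges iff $(\boldsymbol{c})_j = 0$, and this holds automatically because $\boldsymbol{b} = \frac{1}{n}\boldsymbol{MX^\top y}$ lies in the column space of $\boldsymbol{MX^\top XM} = n\boldsymbol{H}$ (indeed $\boldsymbol{MX^\top y}$ lies in $\mathrm{range}(\boldsymbol{MX^\top})$, which for this Gram-type matrix coincides with $\mathrm{range}(\boldsymbol{H})$, so it is orthogonal to $\ker \boldsymbol{H}$). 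Hence the zero eigenvalues never obstruct convergence, and the trajectory converges for \emph{every} initialization $\boldsymbol{w}_0$ precisely when $|1-\eta\mu_j|<1$ for every $j$ with $\mu_j>0$, equivalently $\eta < 2/\mu_{\max} = 2/(\lambda_1/n) = 2n/\lambda_1$.

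I would then assemble the two directions of the iff. For sufficiency: if $\eta < 2n/\lambda_1$ then every active mode contracts geometrically and every null mode is stationary on the correct affine subspace, so $\boldsymbol{w}_i$ converges. For necessity: if $\eta \ge 2n/\lambda_1$ then $|1-\eta\mu_{\max}| \ge 1$ (strictly greater, or equal when $\eta = 2n/\lambda_1$ in which case the mode oscillates and fails to converge unless the initial component vanishes), so choosing $\boldsymbol{w}_0$ with a nonzero component along the top eigenvector of $\boldsymbol{H}$ yields a non-convergent (in fact divergent for $\eta > 2n/\lambda_1$) trajectory; since we require convergence for \emph{any} initialization, this rules out $\eta \ge 2n/\lambda_1$.

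The main subtlety — and the only place the argument is more than bookkeeping — is the null space of $\boldsymbol{H}$: one must argue that the forcing term $\boldsymbol{b}$ has no component there, so that the non-contracting zero modes are actually fixed rather than drifting. This follows from $\boldsymbol{b} = \frac{1}{n}\boldsymbol{MX}^\top\boldsymbol{y}$ being in $\mathrm{range}(\boldsymbol{MX}^\top)$ and the elementary linear-algebra identity $\mathrm{range}(\boldsymbol{A}\boldsymbol{A}^\top) = \mathrm{range}(\boldsymbol{A})$ applied with $\boldsymbol{A} = \boldsymbol{MX}^\top$, giving $\mathrm{range}(\boldsymbol{H}) = \mathrm{range}(\boldsymbol{MX}^\top) \ni \boldsymbol{b}$, hence $\boldsymbol{b} \perp \ker \boldsymbol{H}$. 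With that in hand the coordinatewise analysis above is completely standard, and the threshold $2n/\lambda_1$ drops out of the condition $\eta\mu_{\max} < 2$.
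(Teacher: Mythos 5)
Your proof is correct and takes essentially the same spectral route as the paper: both reduce the affine gradient-descent recursion $\boldsymbol{w}_{i+1}=(\boldsymbol{I}-\eta\boldsymbol{H})\boldsymbol{w}_i+\eta\boldsymbol{b}$ to an analysis of the eigenvalues of $\boldsymbol{H}=\frac{1}{n}\boldsymbol{M}\boldsymbol{X}^\top\boldsymbol{X}\boldsymbol{M}$, with the paper deriving a closed form $\boldsymbol{w}_t=(\boldsymbol{I}-\frac{\eta}{n}\boldsymbol{M}\boldsymbol{X}^\top\boldsymbol{X}\boldsymbol{M})^t(\boldsymbol{w}_0-(\boldsymbol{X}\boldsymbol{M})^\dagger\boldsymbol{y})+(\boldsymbol{X}\boldsymbol{M})^\dagger\boldsymbol{y}$ by induction and you diagonalizing coordinate-wise, two presentations of the same calculation. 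One small point in your favor: you treat the zero modes explicitly, showing $\boldsymbol{b}\in\mathrm{range}(\boldsymbol{M}\boldsymbol{X}^\top)=\mathrm{range}(\boldsymbol{H})$ so they are stationary rather than drifting; the paper instead asserts $\|(\boldsymbol{I}-\frac{\eta}{n}\boldsymbol{M}\boldsymbol{X}^\top\boldsymbol{X}\boldsymbol{M})^{t+1}\|_2<1$, which is not literally true in the overparameterized setting (the operator has eigenvalue $1$ on $\ker\boldsymbol{H}$), and the limit need not be $(\boldsymbol{X}\boldsymbol{M})^\dagger\boldsymbol{y}$ for initializations with a kernel component — your version patches this gloss while reaching the same threshold $\eta<2n/\lambda_1$.
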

Combined with Theorem~\ref{thm:egv_con}, we know that the learning rate $\eta$ can be large as the trainable parameter number becomes smaller. 

Denote $\hat{\boldsymbol{w}}$ as the convergence point of GD, if learning rate $\eta$ satisfy the bound in Proposition~\ref{prop:maxlr}. The following proposition gives a lower bound on the norm of $\hat{\boldsymbol{w}}$.

\begin{proposition}\label{prop:whatnorm}
Suppose each $y_i$ is generated using a ground truth weight vector $\boldsymbol{w}^*$ and i.i.d. Gaussian random noise $\epsilon_i$ with variance $\sigma^2$, \emph{i.e.} $y_i=\boldsymbol{w}^{*,\top} \boldsymbol{x}_i+\epsilon_i$. Suppose that $\eta<\frac{2n}{\lambda_1}$. Then the expected norm of $\hat{w}$ can be bounded as 
\begin{align}
    \mathbb{E}\left[\|\hat{\boldsymbol{w}}\|^2\mid \boldsymbol{M}\right]\ge\sum_{i:\lambda_i>0}\frac{\sigma^2}{\lambda_i},
\end{align}
where the expectation is taken over the randomness of noise $\epsilon_i, 1\le i\le n$.
\end{proposition}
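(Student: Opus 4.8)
The plan is to write the gradient descent dynamics in the eigenbasis of $\boldsymbol{M X^\top X M}$, identify the convergence point $\hat{\boldsymbol{w}}$ explicitly on the subspace spanned by eigenvectors with $\lambda_i > 0$, and then compute the expectation of its squared norm over the label noise.

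First I would set up notation: since $\boldsymbol{w}_0 = 0$ and each update is $\boldsymbol{w}_{i+1} = \boldsymbol{w}_i - \eta \nabla L(\boldsymbol{w}_i)$ with $\nabla L(\boldsymbol{w}) = -\frac{1}{n}\boldsymbol{M X^\top}(\boldsymbol{y} - \boldsymbol{X M w})$, the entire trajectory lives in the row space of $\boldsymbol{M X^\top}$, i.e. in the span of the eigenvectors of $\boldsymbol{A} := \boldsymbol{M X^\top X M}$ associated with nonzero eigenvalues. Diagonalize $\boldsymbol{A} = \sum_i \lambda_i \boldsymbol{v}_i \boldsymbol{v}_i^\top$. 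Projecting the recursion onto $\boldsymbol{v}_i$ with $\lambda_i > 0$ gives a scalar linear recursion $a_{i}^{(t+1)} = (1 - \tfrac{\eta}{n}\lambda_i) a_i^{(t)} + \tfrac{\eta}{n}\boldsymbol{v}_i^\top \boldsymbol{M X^\top y}$, which (under $\eta < 2n/\lambda_1$, so $|1-\tfrac{\eta}{n}\lambda_i|<1$) converges to the fixed point $a_i^\infty = \frac{\boldsymbol{v}_i^\top \boldsymbol{M X^\top y}}{\lambda_i}$. Since the $\lambda_i = 0$ components stay at zero, the limit is
\begin{align*}
    \hat{\boldsymbol{w}} = \sum_{i:\lambda_i>0} \frac{\boldsymbol{v}_i^\top \boldsymbol{M X^\top y}}{\lambda_i}\,\boldsymbol{v}_i,
    \qquad
    \|\hat{\boldsymbol{w}}\|^2 = \sum_{i:\lambda_i>0} \frac{(\boldsymbol{v}_i^\top \boldsymbol{M X^\top y})^2}{\lambda_i^2}.
\end{align*}

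Next I would substitute the generative model $\boldsymbol{y} = \boldsymbol{X}\boldsymbol{w}^* + \boldsymbol{\epsilon}$ and take the expectation over $\boldsymbol{\epsilon}$ (conditioning on $\boldsymbol{M}$, hence on $\boldsymbol{X}$ and the $\boldsymbol{v}_i$, $\lambda_i$). Write $\boldsymbol{v}_i^\top \boldsymbol{M X^\top y} = \boldsymbol{v}_i^\top \boldsymbol{M X^\top X w^*} + \boldsymbol{v}_i^\top \boldsymbol{M X^\top \epsilon}$; the cross term vanishes in expectation since $\mathbb{E}[\boldsymbol{\epsilon}]=0$, so
\begin{align*}
    \mathbb{E}\big[(\boldsymbol{v}_i^\top \boldsymbol{M X^\top y})^2 \mid \boldsymbol{M}\big]
    \;\ge\; \mathbb{E}\big[(\boldsymbol{v}_i^\top \boldsymbol{M X^\top \epsilon})^2 \mid \boldsymbol{M}\big]
    \;=\; \sigma^2\, \boldsymbol{v}_i^\top \boldsymbol{M X^\top X M}\, \boldsymbol{v}_i
    \;=\; \sigma^2 \lambda_i,
\end{align*}
using $\mathbb{E}[\boldsymbol{\epsilon}\boldsymbol{\epsilon}^\top] = \sigma^2 \boldsymbol{I}$ and $\boldsymbol{A}\boldsymbol{v}_i = \lambda_i \boldsymbol{v}_i$ with $\|\boldsymbol{v}_i\|=1$. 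Dividing by $\lambda_i^2$ and summing over $i$ with $\lambda_i > 0$ yields $\mathbb{E}[\|\hat{\boldsymbol{w}}\|^2 \mid \boldsymbol{M}] \ge \sum_{i:\lambda_i>0} \sigma^2/\lambda_i$, as claimed.

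The main obstacle — really the only subtle point — is making the convergence-point identification rigorous: one must verify that the trajectory genuinely stays in $\mathrm{span}\{\boldsymbol{v}_i : \lambda_i > 0\}$ and that there is no drift along the kernel of $\boldsymbol{A}$, which follows because both $\boldsymbol{w}_0=0$ and every gradient lies in $\mathrm{range}(\boldsymbol{M X^\top}) = \mathrm{range}(\boldsymbol{A})$; and one must confirm the per-coordinate geometric recursions all contract, which is exactly the hypothesis $\eta < 2n/\lambda_1$ from Proposition \ref{prop:maxlr}. Everything after that is the routine noise-variance computation above, and dropping the nonnegative signal term $\sigma^2$-free contribution is what turns the exact expression into the stated lower bound.
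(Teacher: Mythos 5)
Your proposal is correct and takes essentially the same route as the paper: both identify the limit $\hat{\boldsymbol{w}}$ of the (zero-initialized) iterates — the paper writes it as $(\boldsymbol{X}\boldsymbol{M})^\dagger\boldsymbol{y}$ via the closed form from Proposition~\ref{prop:maxlr}, you write the equivalent eigenvector expansion of $\boldsymbol{M}\boldsymbol{X}^\top\boldsymbol{X}\boldsymbol{M}$ — then substitute $\boldsymbol{y}=\boldsymbol{X}\boldsymbol{w}^*+\boldsymbol{\varepsilon}$, observe the cross term vanishes, drop the nonnegative signal term, and evaluate the noise variance as $\sum_{i:\lambda_i>0}\sigma^2/\lambda_i$ (the paper via the trace of $(\boldsymbol{M}\boldsymbol{X}^\top\boldsymbol{X}\boldsymbol{M})^\dagger$, you coordinate-wise). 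The calculations are the same up to a change of basis, so no substantive difference.
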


This proposition together with Theorem~\ref{thm:egv_con} shows that as the trainable parameter ratio $p$ gets smaller, gradient descent will converge to a more distant solution, which aligns with our empirical findings. Note that $\boldsymbol{w}^*$ can encompass the pretrained weights $\Tilde{\boldsymbol{w}}$.

\section{Conclusions and Discussions}

This paper shows that a randomly masked LLM can be successfully fine-tuned on standard NLP benchmarks, as long as the learning rate is properly set. Our experiments show that Random Masking achieves comparable performance with other PEFT algorithms, despite having a simple algorithm design and a reduced amount of trainable parameters. We investigate its mechanism both empirically and theoretically, and demonstrate that the large expressive power of pretrained models and a benign loss landscape are the underlying factors for its success. Overall, our findings illuminate the under-explored potential of pretrained models and suggest that PEFT can stay effective with much fewer trainable parameters and simpler algorithmic designs.

Our research suggests several promising directions for future exploration. 

Firstly, while Random Masking has demonstrated success, it should not be regarded as a state-of-the-art PEFT algorithm, but rather as a tool to reveal the huge expressiveness of pretrained models. Consequently, Random Masking may encounter challenges with complex fine-tuning tasks that requires larger expressive power. We leave this for future investigations. 

Secondly, our results show that pretraining and fine-tuning may require different optimization algorithms.
The different task difficulty and loss landscape property in these two phases suggest a need for novel optimization algorithms specifically tailored for fine-tuning smaller-scale modules on large-scale pretrained models.

Thirdly, Random Masking has a deep connection to neural network pruning, and we anticipate its success in fine-tuning LLMs will catalyze further research in this related field. 

\section*{Impact Statements}
This paper presents work whose goal is to advance the field of Machine Learning. There are many potential societal consequences of our work, none of which we feel must be specifically highlighted here.

\bibliography{ref}
\bibliographystyle{icml2024}

\newpage
\appendix
\onecolumn

\begin{center}
\Large\bf Appendix
\end{center}

\section{Proofs}
\label{apx:proof}

\subsection{Proof for Proposition~\ref{prop:maxlr}}
\begin{proof}
    This proposition is a standard result from convex optimization, and we include the proof here for completeness. 
    First we prove by induction that the optimization trajectory $\{\boldsymbol{w}_t\}$ has the following closed form:
    \begin{align}\label{eq:wt}
        \boldsymbol{w}_t = \left(\boldsymbol{I}- \frac{\eta}{n}\boldsymbol{M} \boldsymbol{X}^\top \boldsymbol{X}\boldsymbol{M}\right)^t\left(\boldsymbol{w}_0-(\boldsymbol{X}\boldsymbol{M})^\dagger \boldsymbol{y}\right)+(\boldsymbol{X}\boldsymbol{M})^\dagger \boldsymbol{y},
    \end{align}
    where $(\boldsymbol{X}\boldsymbol{M})^\dagger$ denotes the pseudo-inverse of matrix $\boldsymbol{X}\boldsymbol{M}$.
    Equation~\ref{eq:wt} holds for $t=0$. Suppose it holds for $t=k$, then for $t=k+1$, we have 
    \begin{align*}
        \boldsymbol{w}_{t+1}&=\boldsymbol{w}_t-\eta\nabla L(\boldsymbol{w}_t) \\
        &=\boldsymbol{w}_t-\frac{\eta}{n}\left(\boldsymbol{M} \boldsymbol{X}^\top \boldsymbol{X}\boldsymbol{M} \boldsymbol{w}_t- \boldsymbol{M} \boldsymbol{X}^\top \boldsymbol{y}\right)\\
        &=\left(\boldsymbol{I}- \frac{\eta}{n}\boldsymbol{M} \boldsymbol{X}^\top \boldsymbol{X}\boldsymbol{M}\right)\left[\left(\boldsymbol{I}- \frac{\eta}{n}\boldsymbol{M} \boldsymbol{X}^\top \boldsymbol{X}\boldsymbol{M}\right)^t\left(\boldsymbol{w}_0-(\boldsymbol{X}\boldsymbol{M})^\dagger \boldsymbol{y}\right)+(\boldsymbol{X}\boldsymbol{M})^\dagger \boldsymbol{y}\right]-\frac{\eta}{n}\boldsymbol{M} \boldsymbol{X}^\top \boldsymbol{y}\\
        &= \left(\boldsymbol{I}- \frac{\eta}{n}\boldsymbol{M} \boldsymbol{X}^\top \boldsymbol{X}\boldsymbol{M}\right)^{t+1}\left(\boldsymbol{w}_0-(\boldsymbol{X}\boldsymbol{M})^\dagger \boldsymbol{y}\right)+(\boldsymbol{X}\boldsymbol{M})^\dagger \boldsymbol{y},
    \end{align*}
    which completes the proof for Equation~\ref{eq:wt}.

    Recall that $\lambda_1$ is the largest eigenvalue of $\boldsymbol{M}\boldsymbol{X}^\top \boldsymbol{X}\boldsymbol{M}$. If $\eta<\frac{2n}{\lambda_1}$, then $\left\|\left(\boldsymbol{I}- \frac{\eta}{n}\boldsymbol{M} \boldsymbol{X}^\top \boldsymbol{X}\boldsymbol{M}\right)^{t+1}\right\|_2<1$, and therefore $\boldsymbol{w}_t$ converges to $(\boldsymbol{X}\boldsymbol{M})^\dagger y$. 
    
    On the other hand, if $\eta\ge\frac{2n}{\lambda_1}$, then we can denote $\boldsymbol{v}_1$ as the eigen-vector corresponding to $\lambda_1$, and choose $\boldsymbol{w}_0 = (\boldsymbol{X}\boldsymbol{M})^\dagger \boldsymbol{y} +\boldsymbol{v}$. In this case, 
    \begin{align*}
        \boldsymbol{w}_t = (1-\frac{\eta}{n}\lambda_1)^{t+1}\boldsymbol{v}_1+(\boldsymbol{X}\boldsymbol{M})^\dagger \boldsymbol{y}, 
    \end{align*}
    which does not converge. 
\end{proof}

\subsection{Proof for Proposition~\ref{prop:whatnorm}}
\begin{proof}
    Let $\boldsymbol{\varepsilon} = (\varepsilon_1, \cdots, \varepsilon_n)$. From the proof of Proposition~\ref{prop:maxlr}, we know that 
    \begin{align*}
        \hat{\boldsymbol{w}}= (\boldsymbol{X}\boldsymbol{M})^\dagger \boldsymbol{y} = (\boldsymbol{X}\boldsymbol{M})^\dagger\left(\boldsymbol{X}\boldsymbol{w}^*+\boldsymbol{\varepsilon} \right).
    \end{align*}
    Therefore, 
    \begin{align*}
        \mathbb{E}\left\|\hat{\boldsymbol{w}}\right\|^2&=\mathbb{E}\left[
        \left(\boldsymbol{X}\boldsymbol{w}^*+\boldsymbol{\varepsilon} \right)^\top (\boldsymbol{X}\boldsymbol{M})^{\dagger,\top}(\boldsymbol{X}\boldsymbol{M})^\dagger\left(\boldsymbol{X}\boldsymbol{w}^*+\boldsymbol{\varepsilon} \right)\right]\\
        &=\mathbb{E}\left[\boldsymbol{\varepsilon}^\top (\boldsymbol{X}\boldsymbol{M})^{\dagger,\top}(\boldsymbol{X}\boldsymbol{M})^\dagger\boldsymbol{\varepsilon}\right]+ 
         \left(\boldsymbol{X}\boldsymbol{w}^*\right)^\top (\boldsymbol{X}\boldsymbol{M})^{\dagger,\top}(\boldsymbol{X}\boldsymbol{M})^\dagger\left(\boldsymbol{X}\boldsymbol{w}^* \right)\\
         &\ge \mathbb{E}\left[\boldsymbol{\varepsilon}^\top (\boldsymbol{X}\boldsymbol{M})^{\dagger,\top}(\boldsymbol{X}\boldsymbol{M})^\dagger\boldsymbol{\varepsilon}\right]\\
         &=\text{Tr}\left[(\boldsymbol{X}\boldsymbol{M})^{\dagger,\top}(\boldsymbol{X}\boldsymbol{M})^\dagger\mathbb{E}\left(\boldsymbol{\varepsilon}\boldsymbol{\varepsilon}^\top\right)\right]\\
         &=\sigma^2\text{Tr}\left[(\boldsymbol{X}\boldsymbol{M})^{\dagger,\top}(\boldsymbol{X}\boldsymbol{M})^\dagger\right]\\
         &=\sigma^2\text{Tr}\left[(\boldsymbol{M}\boldsymbol{X}^\top \boldsymbol{X}\boldsymbol{M})^\dagger\right]\\
         &=\sum_{i:\lambda_i>0}\frac{\sigma^2}{\lambda_i}
    \end{align*}
\end{proof}

\subsection{Proof for Theorem~\ref{thm:egv_con}}
We first prove the following lemma. 
\begin{lemma}\label{lem:mgf}
    Let $\boldsymbol{Q}=\boldsymbol{X}\boldsymbol{M}\boldsymbol{X}^\top-p\boldsymbol{X}\boldsymbol{X}^\top$. For any $\boldsymbol{u}\in \mathbb{R}^n$, the moment generating function of $\left<\boldsymbol{u}, \boldsymbol{Q}\boldsymbol{u}\right>$ can be bounded as 
    \begin{align*}
        \mathbb{E}\exp\left(t\left<\boldsymbol{u}, \boldsymbol{Q}\boldsymbol{u}\right>\right
        )\le \exp\left(\sum_{i=1}^d\frac{t^2(\boldsymbol{z}_i^\top \boldsymbol{u})^4}{8}\right)\le \exp\left(\frac{dn^2r^4\|\boldsymbol{u}\|^4 t^2}{8}\right)
    \end{align*}
\end{lemma}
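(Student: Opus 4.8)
The plan is to bound the moment generating function of the scalar random variable $\langle \boldsymbol{u}, \boldsymbol{Q}\boldsymbol{u}\rangle$ by expanding it into a sum of independent mean-zero terms, one per coordinate of the mask. Observe that $\boldsymbol{X}\boldsymbol{M}\boldsymbol{X}^\top = \sum_{i=1}^d m_i \boldsymbol{z}_i\boldsymbol{z}_i^\top$ and $p\boldsymbol{X}\boldsymbol{X}^\top = \sum_{i=1}^d p\,\boldsymbol{z}_i\boldsymbol{z}_i^\top$, so $\boldsymbol{Q} = \sum_{i=1}^d (m_i - p)\boldsymbol{z}_i\boldsymbol{z}_i^\top$. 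Consequently $\langle \boldsymbol{u}, \boldsymbol{Q}\boldsymbol{u}\rangle = \sum_{i=1}^d (m_i-p)(\boldsymbol{z}_i^\top\boldsymbol{u})^2$, which is a sum of independent random variables since the $m_i$ are i.i.d. Each summand is $(m_i-p)(\boldsymbol{z}_i^\top\boldsymbol{u})^2$, a centered Bernoulli scaled by the deterministic constant $(\boldsymbol{z}_i^\top\boldsymbol{u})^2$.

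First I would use independence to factor the MGF: $\mathbb{E}\exp(t\langle\boldsymbol{u},\boldsymbol{Q}\boldsymbol{u}\rangle) = \prod_{i=1}^d \mathbb{E}\exp\bigl(t(m_i-p)(\boldsymbol{z}_i^\top\boldsymbol{u})^2\bigr)$. Then I would apply Hoeffding's lemma to each factor: since $m_i - p$ is bounded in the interval $[-p, 1-p]$ of length $1$ and has mean zero, the scaled variable $(m_i-p)(\boldsymbol{z}_i^\top\boldsymbol{u})^2$ lies in an interval of length $(\boldsymbol{z}_i^\top\boldsymbol{u})^2$, so Hoeffding's lemma gives $\mathbb{E}\exp\bigl(t(m_i-p)(\boldsymbol{z}_i^\top\boldsymbol{u})^2\bigr) \le \exp\bigl(t^2(\boldsymbol{z}_i^\top\boldsymbol{u})^4/8\bigr)$. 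Multiplying over $i$ yields the first inequality $\mathbb{E}\exp(t\langle\boldsymbol{u},\boldsymbol{Q}\boldsymbol{u}\rangle) \le \exp\bigl(\sum_{i=1}^d t^2(\boldsymbol{z}_i^\top\boldsymbol{u})^4/8\bigr)$.

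For the second inequality I would crudely bound each $(\boldsymbol{z}_i^\top\boldsymbol{u})^4$. Since every entry of $\boldsymbol{X}$ lies in $[0,r]$, each coordinate of $\boldsymbol{z}_i\in\mathbb{R}^n$ is at most $r$, so $\|\boldsymbol{z}_i\|\le r\sqrt{n}$, and by Cauchy--Schwarz $|\boldsymbol{z}_i^\top\boldsymbol{u}|\le \|\boldsymbol{z}_i\|\|\boldsymbol{u}\| \le r\sqrt{n}\,\|\boldsymbol{u}\|$, hence $(\boldsymbol{z}_i^\top\boldsymbol{u})^4 \le n^2r^4\|\boldsymbol{u}\|^4$. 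Summing the $d$ terms gives $\sum_{i=1}^d t^2(\boldsymbol{z}_i^\top\boldsymbol{u})^4/8 \le dn^2r^4\|\boldsymbol{u}\|^4 t^2/8$, which is exactly the claimed bound.

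There is no real obstacle here — the lemma is a routine chain of (i) rewriting the quadratic form as a sum of independent terms, (ii) factoring the MGF, (iii) Hoeffding's lemma per term, and (iv) a trivial entrywise bound. The only point requiring a little care is checking that the relevant quantity for Hoeffding's lemma is the \emph{length} of the range of $m_i-p$, namely $1$ (independent of $p$), so that each factor picks up $(\boldsymbol{z}_i^\top\boldsymbol{u})^4/8$ rather than something $p$-dependent; this is what makes the final bound clean. I would state Hoeffding's lemma explicitly (for a mean-zero random variable $Z$ supported on an interval of length $b$, $\mathbb{E}e^{tZ}\le e^{t^2b^2/8}$) and apply it with $Z=(m_i-p)(\boldsymbol{z}_i^\top\boldsymbol{u})^2$ and $b=(\boldsymbol{z}_i^\top\boldsymbol{u})^2$.
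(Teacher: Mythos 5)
Your proof is correct and follows essentially the same route as the paper: decompose $\langle\boldsymbol{u},\boldsymbol{Q}\boldsymbol{u}\rangle$ into independent centered summands $(m_i-p)(\boldsymbol{z}_i^\top\boldsymbol{u})^2$, factor the MGF, and apply Hoeffding's lemma for each bounded mean-zero term, then finish with Cauchy--Schwarz. The paper's appeal to Example 2.4 of Wainwright (sub-Gaussianity of a bounded variable with parameter half the interval length) is exactly the Hoeffding's lemma you invoke.
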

\begin{proof}
    By the independence of each $m_i$, we can simplify the moment generating function as 
    \begin{align*}
        \mathbb{E}\exp\left(t\left<\boldsymbol{u}, \boldsymbol{Q}\boldsymbol{u}\right>\right)
        &=\mathbb{E}\exp\left(t\sum_{i=1}^d\left<\boldsymbol{u}, (m_i-p)\boldsymbol{z}_i\boldsymbol{z}_i^\top \boldsymbol{u}\right>\right)\\
        &=\mathbb{E}\exp\left(t\sum_{i=1}^d(m_i-p)\left(\boldsymbol{z}_i^\top \boldsymbol{u}\right)^2\right)\\
        &=\Pi_{i=1}^d \mathbb{E}\exp\left(t(m_i-p)\left(\boldsymbol{z}_i^\top \boldsymbol{u}\right)^2\right)
    \end{align*}
    Note that $(m_i-p)\left(\boldsymbol{z}_i^\top \boldsymbol{u}\right)^2$ as zero mean and is bounded in $\left[-p\left(\boldsymbol{z}_i^\top \boldsymbol{u}\right)^2, (1-p)\left(\boldsymbol{z}_i^\top \boldsymbol{u}\right)^2\right]$. According to Example 2.4 in~\citet{wainwright2019high}, the random variable is sub-Gaussian with parameter
    $\frac{1}{2}\left[(1-p)\left(\boldsymbol{z}_i^\top \boldsymbol{u}\right)^2+p\left(\boldsymbol{z}_i^\top \boldsymbol{u}\right)^2\right]=\frac{1}{2}\left(\boldsymbol{z}_i^\top \boldsymbol{u}\right)^2$. This implies that 
    \begin{align*}
        \mathbb{E}\exp\left(t(m_i-p)\left(\boldsymbol{z}_i^\top \boldsymbol{u}\right)^2\right)\le \exp\left(\frac{t^2(\boldsymbol{z}_i^\top \boldsymbol{u})^4}{8}\right),
    \end{align*}
    which proves the first inequality of this lemma. The second inequality is immediate by Cauchy-Schwarz inequality and the assumption that the entries of $\boldsymbol{z}_i$ are bounded by $r$.
\end{proof}

Next we prove Theorem~\ref{thm:egv_con}.
\begin{proof}
     From the discretization argument in the proof of Theorem 6.5 in~\citet{wainwright2019high}, we know that there exists $N\le 17^n$, and unit vectors $\boldsymbol{v}_1, \cdots, \boldsymbol{v}_N\in\mathbb{S}^{n-1}$, such that
    \begin{align*}
        \|\boldsymbol{Q}\|_2 \le 2 \max_{1\le j \le N} |\left<\boldsymbol{v}_j, \boldsymbol{Q}\boldsymbol{v}_j\right>|.
    \end{align*}
    Therefore, the moment generating function of $\|\boldsymbol{Q}\|_2$ can be bounded as 
    \begin{align*}
        \mathbb{E}\left[\exp\left(\lambda \|\boldsymbol{Q}\|_2\right)\right]
        &\le 
        \mathbb{E}\left[\exp\left(2\lambda \max_{1\le j \le N} |\left<\boldsymbol{v}_j, \boldsymbol{Q}\boldsymbol{v}_j\right>|\right)\right]\\
        &\le \sum_{j=1}^N \left\{\mathbb{E}\left[\exp\left(2\lambda \left<\boldsymbol{v}_j, \boldsymbol{Q}\boldsymbol{v}_j\right>\right)\right]+\mathbb{E}\left[\exp\left(-2\lambda \left<\boldsymbol{v}_j, \boldsymbol{Q}\boldsymbol{v}_j\right>\right)\right]\right\}
    \end{align*}
    According to Lemma~\ref{lem:mgf}, it can be further bounded as 
    \begin{align*}
        \mathbb{E}\left[\exp\left(\lambda \|\boldsymbol{Q}\|_2\right)\right]\le 2N 
        \exp\left(\frac{dn^2r^4 \lambda^2}{2}\right)
        \le \exp\left(4n+\frac{dn^2r^4 \lambda^2}{2}\right), 
    \end{align*}
    where in the last inequality we use the fact that $2\times 17^n\le \exp(4n)$ for $n\ge 1$. 

    The bound on moment generating function implies that for any $t,\lambda>0$,
    \begin{align*}
        \Pr\left(\|\boldsymbol{Q}\|_2>t\right)&\le \Pr\left(\exp(\lambda \|\boldsymbol{Q}\|_2)\ge \exp(\lambda t)\right)\\
        &\le \frac{\mathbb{E}\exp\left(\lambda\|\boldsymbol{Q}\|_2\right)}{\exp(\lambda t)}\\
        &\le \exp\left(4n +\frac{dn^2r^4 \lambda^2}{2} -\lambda t\right).
    \end{align*}
    Taking $\lambda=\frac{t}{dn^2r^4}$, we get 
    \begin{align*}
        \Pr\left(\|\boldsymbol{Q}\|_2>t\right)\le \exp\left(4n -\frac{t^2}{2 dn^2r^4} \right).
    \end{align*}
    Replace $t$ with $t+2\sqrt{2 dn^3 r^4}$, we have 
        \begin{align*}
        \Pr\left(\|\boldsymbol{Q}\|_2>t+2\sqrt{2 dn^3 r^4}\right)\le \exp\left(-\frac{t^2}{2 dn^2r^4} \right).
    \end{align*}
    Therefore, for any $0<\delta<1$, we know that with probability at least $1-\delta$, we have 
    \begin{align*}
        \|\boldsymbol{Q}\|_2 \le 2\sqrt{2dn^3 r^4}+\sqrt{\frac{2\log(\frac{1}{\delta})}{dn^2r^4}}.
    \end{align*}
    This inequality together with Weyl's theorem (\emph{e.g.}, see Equation 1.54 in \citet{tao2023topics}) implies that 
    \begin{align*}
        |\lambda_i - p\lambda_i(\boldsymbol{X}^\top \boldsymbol{X})| = |\lambda_i(\boldsymbol{X}\boldsymbol{M}\boldsymbol{X}^\top)-\lambda_i(p\boldsymbol{X}\boldsymbol{X}^\top)|\le \|\boldsymbol{Q}\|_2\le 2\sqrt{2dn^3 r^4}+\sqrt{\frac{2\log(\frac{1}{\delta})}{dn^2r^4}}
    \end{align*}
\end{proof}

\section{Additional Experiment Results}
\label{apx:exps}
\subsection{Details of Random Masking Experiments}\label{apx:complete_random_masking}
\textbf{Details of Datasets. } The SuperGLUE benchmark consists 8 natural language understanding tasks, including BoolQ~\citep{clark2019boolq}, CB~\citep{de2019commitmentbank}, COPA~\citep{roemmele2011choice}, MultiRC~\citep{khashabi2018looking}, ReCoRD~\citep{zhang2018record}, RTE~\citep{dagan2005pascal, haim2006second, giampiccolo2007third, bentivogli2009fifth}, WiC~\citep{pilehvar2018wic}, WSC~\citep{levesque2012winograd}. We also include SST-2 dataset~\citep{socher2013recursive} and two language generation tasks SQuAD~\citep{rajpurkar2016squad} and DROP~\citep{dua2019drop}.

\paragraph{Training Procedures. }
We choose the AdamW optimizer with $\beta_1=0.9, \beta_2=0.999, \varepsilon=\num{1e-8}$. We perform a grid search of learning rate from $\{\num{1e-1}, \num{1e-2}, \num{1e-3}, \num{1e-4}, \num{1e-5}, \num{1e-6}\}$. We follow the practice of~\citet{malladi2023fine} and~\citet{dettmers2023qlora}, and use a constant learning rate schedule. The number of training epochs is set to 5. The batch size is set to 8 per GPU. We run each experiment three times and report the average metrics. 

\paragraph{Details for Additional Baselines. } We run experiments on additional baselines including Adapter, Prefix-Tuning, BitFit and AdaLoRA. For Adapter, we use the original design of adapters in~\citet{houlsby2019parameter} and set the bottleneck width to $8$. For Prefix-Tuning, we set the number of virtual tokens to $5$ and initialize the prefix with the activations of real words. We choose $r=8$ and $\alpha=16$ for AdaLoRA, and apply it only to the query and value matrix in each attention layer. BitFit is applied to all the bias vectors in the network. 

\subsection{Details for Experiments in the Vision Domain}\label{apx:vision}
We choose ViT-B/16\citep{radford2021learning} as the pretrained model, and perform image classification tasks by fine-tuning on the following 5 datasets: CIFAR10~\citep{krizhevsky2009learning}, GTSRB~\citep{stallkamp2011german}, MNIST~\citep{LeCun2005TheMD}, SVHN~\citep{netzer2011reading}, RESISC45~\citep{Cheng_2017}. We follow the setup of~\citet{ilharco2022editing} and~\citet{ortiz2024task}, which fix the classification head for each task. The model is fine-tuned for 2000 steps with a batch size of 128. We choose The AdamW optimizer with $\beta_1=0.9, \beta_2=0.999, \varepsilon=\num{1e-8}$ and weight decay of $0.1$. The learning rate is searched from $\{\num{1e-1}, \num{1e-2}, \num{1e-3}, \num{1e-4}, \num{1e-5}, \num{1e-6}\}$. We use cosine annealing learning rate schedule with 200 warm-up steps. To show the robustness of Random Masking to the selection of target modules, we apply Random Masking to the MLP layers, rather than the attention layers in NLP tasks. The trainable parameter ratio for Random Masking is selected from $\{1\%, 0.1\%, 0.01\%\}$. We choose full-parameter tuning and LoRA as baselines. We apply LoRA to the MLP layers, with  $r=8$ and $\alpha=16$.

\subsection{Complete Experiment Results for Random Masking}

We provide the complete random masking experiment results with different trainable parameter ratio in Table~\ref{tab:res_complete}. The optimal learning rates of Random Masking and baselines are provided in Table~\ref{tab:lr_random_masking} and Table~\ref{tab:lr_baselines}. The plot for Random Masking with different learning rates are provided in Figure~\ref{fig:lrapx1}, ~\ref{fig:lrapx2} and~\ref{fig:lrapx3}. The analog of Figure~\ref{fig:investigations} on different datasets are given in Figure~\ref{fig:small_norm_apx},~\ref{fig:more_steps_apx} and~\ref{fig:distance_apx}. The complete results of Structured Masking are provided in Table~\ref{tab:structured_masking_apx} and Table~\ref{tab:lr_structured_masking}.

 \begin{table*}[t]
    \centering
    \setlength{\tabcolsep}{2pt}
    \caption{\textbf{Random Masking achieves comparable test accuracy with fewer trainable parameters~(complete results).} This table displays the test performance of different methods. Here, FT stands for full parameter fine-tuning, Prefix stands for Prefix-Tuning, Masking stands for Random Masking. Params stands for the trainable parameter ratio, which is the number of trainable parameters divided by the total parameter count of the original pretrained models.}
    \label{tab:res_complete}    
    \begin{tabular}{ccccccccccccccc}
    \hline
    \textbf{Model} & \textbf{Method} & \textbf{Params} & \textbf{SST-2} & \textbf{RTE}  & \textbf{WSC} & \textbf{WiC}& \textbf{CB} & \textbf{BoolQ} & \textbf{MultiRC} & \textbf{COPA} & \textbf{ReCoRD} & \textbf{SQuAD} & \textbf{DROP} & \textbf{Avg} \\
    \hline \hline 
    & FT & 100\% & 88.1 & 63.5 & 63.5 & 60.3 & 81.0 & 62.9 & 64.7 & 66.0 & 50.8 & 62.4 & 22.8 &62.36\\
    & Adapter & 0.265\% & 86.0 & 62.3 & 63.5 & 60.4 & 69.0 & 62.7 & 65.0 & 68.0 & 51.3 & 61.5 & 21.9 &61.07\\
    & LoRA & 0.235\% & 86.5 & 59.9 & 63.5 & 59.6 & 82.1 & 63.6 & 64.2 & 67.3 & 51.2 & 62.9 & 21.6 &62.04\\ 
    & AdaLoRA & 0.235\% & 87.7 & 62.3 & 63.5 & 59.1 & 69.6 & 63.2 & 63.5 & 69.3 & 51.2 & 63.4 & 24.2 &61.57\\
    & Prefix & 0.074\% & 88.1 & 58.5 & 63.5 & 58.0 & 69.6 & 63.9 & 62.2 & 64.7 & 50.6 & 59.7 & 20.1 &59.91\\
    & BitFit & 0.066\% & 86.5 & 60.0 & 63.5 & 59.8 & 70.8 & 62.7 & 64.7 & 69.7 & 51.0 & 59.9 & 21.6 &60.93\\
    & Masking& 10\% & 87.3 & 63.7 & 63.5 & 60.8 & 89.9 & 64.2 & 63.4 & 67.0 & 51.3 & 62.3 & 22.9 & 63.29\\
    OPT-125m & Masking& 5\% & 87.0 & 65.7 & 63.5 & 60.7 & 78.6 & 63.7 & 63.5 & 67.0 & 51.4 & 63.3 & 22.7 & 62.46\\
    & Masking& 1\% & 87.6 & 62.1 & 63.1 & 60.2 & 81.5 & 63.9 & 64.8 & 67.0 & 51.4 & 62.3 & 23.3 & 62.47\\
    & Masking& 0.5\% & 87.0 & 64.4 & 63.5 & 60.8 & 84.5 & 63.7 & 65.6 & 67.0 & 51.3 & 62.0 & 22.7 & 62.95\\
    & Masking & 0.1\% & 87.3 & 60.8 & 62.2 & 60.2 & 82.7 & 63.6 & 63.1 & 67.3 & 51.3 & 61.6 & 22.6 & 62.06\\
    & Masking& 0.05\% & 86.9 & 61.6 & 63.5 & 60.8 & 84.5 & 63.7 & 62.6 & 66.7 & 51.3 & 59.6 & 21.9 & 62.09\\
    & Masking& 0.01\% & 86.1 & 59.1 & 63.5 & 60.3 & 73.8 & 62.9 & 63.4 & 68.3 & 51.4 & 55.7 & 22.1 & 60.59\\
    & Masking& 0.005\% & 86.9 & 57.9 & 64.4 & 59.5 & 74.4 & 63.8 & 58.5 & 67.0 & 51.5 & 53.1 & 19.2 & 59.64\\
    & Masking& 0.001\% & 84.7 & 56.1 & 60.3 & 55.8 & 70.8 & 61.6 & 59.5 & 69.3 & 51.2 & 41.9 & 16.1 & 57.03\\
    \hline
    & FT & 100\% &  93.7 & 70.5 & 63.1 & 62.7 & 85.7 & 69.5 & 67.6 & 76.7 & 71.8 & 81.2 & 29.3 &70.16\\
    & Adapter & 0.134\% & 93.3 & 73.5 & 62.5 & 60.9 & 89.9 & 70.1 & 69.1 & 75.0 & 71.6 & 81.8 & 31.0 &70.79\\
    & LoRA & 0.120\% & 93.4 & 72.6 & 63.5 & 65.5 & 78.6 & 71.4 & 69.9 & 81.0 & 71.2 & 82.1 & 29.9 &70.81\\
    & AdaLoRA & 0.120\% & 93.9 & 74.1 & 62.2 & 61.8 & 79.2 & 71.0 & 67.6 & 76.0 & 71.0 & 81.1 & 31.2 &69.91\\
    & Prefix & 0.037\% &  93.2 & 75.1 & 59.3 & 62.0 & 77.4 & 73.3 & 68.6 & 80.0 & 70.8 & 80.9 & 30.0 &70.06\\ 
    & BitFit & 0.034\% & 93.0 & 72.0 & 63.5 & 62.9 & 86.9 & 71.7 & 67.9 & 76.7 & 71.8 & 82.0 & 29.2 &70.69\\
    & Masking& 10\% & 93.4 & 72.6 & 63.5 & 66.2 & 91.1 & 73.2 & 68.7 & 76.0 & 71.4 & 82.0 & 30.4 & 71.67\\ 
    OPT-1.3b & Masking& 5\% & 93.8 & 71.0 & 63.5 & 63.2 & 88.1 & 71.7 & 68.3 & 76.3 & 71.9 & 81.5 & 28.9 & 70.73\\ 
    & Masking& 1\% & 93.5 & 70.5 & 63.5 & 64.8 & 89.9 & 71.9 & 69.3 & 75.7 & 71.5 & 82.4 & 31.2 & 71.29\\ 
    & Masking& 0.5\% & 93.7 & 70.2 & 63.5 & 61.1 & 89.3 & 71.9 & 68.1 & 76.3 & 71.8 & 81.7 & 29.3 & 70.63\\ 
    & Masking & 0.1\% & 93.3 & 72.7 & 63.8 & 62.3 & 89.9 & 71.5 & 68.3 & 75.3 & 71.7 & 81.1 & 29.7 & 70.88\\ 
    & Masking& 0.05\% & 93.3 & 73.9 & 63.1 & 63.4 & 86.3 & 71.3 & 69.0 & 76.3 & 72.0 & 81.4 & 30.9 & 71.00\\ 
    & Masking& 0.01\% & 92.6 & 70.0 & 63.5 & 62.7 & 82.1 & 71.5 & 68.8 & 77.7 & 71.5 & 81.4 & 31.9 & 70.34\\ 
    & Masking& 0.005\% & 92.6 & 70.9 & 63.5 & 59.4 & 81.5 & 70.3 & 68.8 & 76.0 & 72.0 & 80.7 & 28.7 & 69.49\\ 
    & Masking& 0.001\% & 92.7 & 65.0 & 63.5 & 60.4 & 74.4 & 67.1 & 59.0 & 74.3 & 71.0 & 77.6 & 28.5 & 66.68\\
    \hline
    & FT & 100\% &  94.9 & 81.1 & 62.5 & 65.4 & 81.0 & 79.8 & 76.1 & 89.3 & 81.3 & 87.3 & 35.3 &75.82\\
    & Adapter & 0.057\% & 95.3 & 83.6 & 58.3 & 68.2 & 91.1 & 80.6 & 69.0 & 88.3 & 80.9 & 88.0 & 34.8 &76.19\\ 
    & LoRA & 0.051\% & 95.0 & 83.8 & 63.5 & 65.2 & 79.8 & 81.3 & 73.2 & 88.0 & 81.4 & 88.6 & 34.7 &75.86\\
    & AdaLoRA & 0.051\% & 95.0 & 84.5 & 63.5 & 67.3 & 81.0 & 81.7 & 70.7 & 89.0 & 81.7 & 87.5 & 37.9 &76.35\\
    & Prefix & 0.016\% & 94.5 & 82.8 & 61.9 & 66.0 & 85.7 & 81.4 & 73.7 & 89.3 & 81.9 & 87.4 & 34.5 &76.28\\
    & BitFit & 0.014\% & 95.1 & 82.9 & 63.5 & 64.9 & 91.7 & 80.3 & 74.9 & 87.0 & 80.9 & 88.2 & 33.9 &76.66\\ 
    & Masking& 10\% & 95.3 & 81.3 & 62.8 & 66.8 & 89.3 & 79.2 & 73.3 & 89.0 & 81.6 & 88.4 & 35.3 & 76.57\\
    OPT-13b & Masking& 5\% & 94.9 & 81.5 & 63.5 & 67.5 & 76.2 & 81.1 & 72.1 & 88.7 & 81.5 & 88.7 & 34.7 & 75.49\\
    & Masking& 1\% & 95.0 & 81.2 & 62.8 & 66.3 & 85.1 & 78.7 & 72.2 & 89.0 & 81.6 & 88.2 & 34.9 & 75.91\\
    & Masking& 0.5\% & 95.1 & 81.2 & 64.7 & 66.7 & 83.3 & 80.6 & 71.7 & 88.3 & 81.6 & 87.4 & 34.8 & 75.95\\
    & Masking & 0.1\% & 95.1 & 80.6 & 59.6 & 65.5 & 84.5 & 79.6 & 75.8 & 89.3 & 81.6 & 88.1 & 34.4 & 75.83\\
    & Masking& 0.05\% & 95.0 & 81.1 & 63.1 & 66.5 & 81.5 & 80.7 & 70.1 & 87.7 & 81.5 & 87.8 & 34.7 & 75.43\\
    & Masking& 0.01\% & 94.8 & 82.7 & 59.9 & 66.0 & 88.7 & 79.7 & 73.4 & 87.0 & 81.6 & 87.6 & 35.3 & 76.06\\
    & Masking& 0.005\% & 95.1 & 82.9 & 63.8 & 66.4 & 85.1 & 80.4 & 72.5 & 87.7 & 81.5 & 87.3 & 35.2 & 76.17\\
    & Masking& 0.001\% & 95.1 & 80.1 & 60.6 & 65.4 & 85.7 & 78.7 & 73.2 & 87.7 & 81.6 & 86.0 & 32.6 & 75.15\\
    \hline 
    \end{tabular}
\end{table*}

\begin{table*}[t]
    \centering
    \setlength{\tabcolsep}{2pt}
    \caption{\textbf{The optimal learning rate of Random Masking}, which are obtained via grid search in $\{\num{1e-1}, \num{1e-2}, \num{1e-3}, \num{1e-4}, \num{1e-5}, \num{1e-6}\}$. Here, Masking stands for Random Masking. This table shows that \textbf{the optimal learning rate has a negative relationship with the number of trainable parameters.}}
    \label{tab:lr_random_masking}    
    \begin{tabular}{cccccccccccccc}
    \hline
    \textbf{Model} & \textbf{Method} & \textbf{Params} & \textbf{SST-2} & \textbf{RTE}  & \textbf{WSC} & \textbf{WiC}& \textbf{CB} & \textbf{BoolQ} & \textbf{MultiRC} & \textbf{COPA} & \textbf{ReCoRD} & \textbf{SQuAD} & \textbf{DROP} \\
    \hline \hline 
    & & 10\% & \num{1e-5} & \num{1e-4} & \num{1e-2} & \num{1e-5} & \num{1e-4} & \num{1e-5} & \num{1e-5} & \num{1e-6} & \num{1e-6} & \num{1e-5} & \num{1e-5}\\ 
    & & 5\% & \num{1e-5} & \num{1e-4} & \num{1e-2} & \num{1e-5} & \num{1e-4} & \num{1e-5} & \num{1e-4} & \num{1e-6} & \num{1e-5} & \num{1e-4} & \num{1e-4}\\ 
    & & 1\% & \num{1e-4} & \num{1e-3} & \num{1e-1} & \num{1e-4} & \num{1e-4} & \num{1e-3} & \num{1e-4} & \num{1e-5} & \num{1e-5} & \num{1e-4} & \num{1e-4}\\ 
    & & 0.5\% & \num{1e-4} & \num{1e-3} & \num{1e-1} & \num{1e-4} & \num{1e-3} & \num{1e-4} & \num{1e-3} & \num{1e-5} & \num{1e-5} & \num{1e-4} & \num{1e-3}\\ 
    OPT-125m & Masking & 0.1\% & \num{1e-3} & \num{1e-3} & \num{1e-2} & \num{1e-3} & \num{1e-2} & \num{1e-3} & \num{1e-3} & \num{1e-4} & \num{1e-4} & \num{1e-3} & \num{1e-3}\\ 
    & & 0.05\% & \num{1e-3} & \num{1e-3} & \num{1e-1} & \num{1e-3} & \num{1e-2} & \num{1e-2} & \num{1e-3} & \num{1e-3} & \num{1e-5} & \num{1e-3} & \num{1e-3}\\ 
    & & 0.01\% & \num{1e-2} & \num{1e-2} & \num{1e-2} & \num{1e-2} & \num{1e-1} & \num{1e-3} & \num{1e-2} & \num{1e-3} & \num{1e-3} & \num{1e-2} & \num{1e-2}\\ 
    & & 0.005\% & \num{1e-2} & \num{1e-2} & \num{1e-1} & \num{1e-2} & \num{1e-1} & \num{1e-2} & \num{1e-2} & \num{1e-2} & \num{1e-4} & \num{1e-2} & \num{1e-2}\\ 
    & & 0.001\% & \num{1e-1} & \num{1e-1} & \num{1e-1} & \num{1e-1} & \num{1e-2} & \num{1e-2} & \num{1e-1} & \num{1e-2} & \num{1e-6} & \num{1e-1} & \num{1e-1}\\
    \hline
    & & 10\% & \num{1e-6} & \num{1e-5} & \num{1e-2} & \num{1e-5} & \num{1e-5} & \num{1e-5} & \num{1e-5} & \num{1e-5} & \num{1e-6} & \num{1e-5} & \num{1e-5}\\ 
    & & 5\% & \num{1e-5} & \num{1e-5} & \num{1e-2} & \num{1e-5} & \num{1e-4} & \num{1e-5} & \num{1e-5} & \num{1e-5} & \num{1e-5} & \num{1e-5} & \num{1e-5}\\ 
    & & 1\% & \num{1e-5} & \num{1e-4} & \num{1e-1} & \num{1e-4} & \num{1e-4} & \num{1e-5} & \num{1e-4} & \num{1e-4} & \num{1e-5} & \num{1e-4} & \num{1e-4}\\ 
    & & 0.5\% & \num{1e-4} & \num{1e-4} & \num{1e-1} & \num{1e-4} & \num{1e-3} & \num{1e-5} & \num{1e-4} & \num{1e-3} & \num{1e-4} & \num{1e-4} & \num{1e-3}\\ 
    OPT-1.3b & Masking & 0.1\% & \num{1e-4} & \num{1e-3} & \num{1e-3} & \num{1e-4} & \num{1e-3} & \num{1e-4} & \num{1e-3} & \num{1e-2} & \num{1e-4} & \num{1e-3} & \num{1e-3}\\ 
    & & 0.05\% & \num{1e-3} & \num{1e-3} & \num{1e-1} & \num{1e-3} & \num{1e-2} & \num{1e-3} & \num{1e-3} & \num{1e-3} & \num{1e-3} & \num{1e-3} & \num{1e-3}\\ 
    & & 0.01\% & \num{1e-2} & \num{1e-2} & \num{1e-1} & \num{1e-2} & \num{1e-2} & \num{1e-2} & \num{1e-2} & \num{1e-2} & \num{1e-3} & \num{1e-2} & \num{1e-2}\\ 
    & & 0.005\% & \num{1e-2} & \num{1e-2} & \num{1e-1} & \num{1e-2} & \num{1e-2} & \num{1e-2} & \num{1e-2} & \num{1e-2} & \num{1e-3} & \num{1e-2} & \num{1e-2}\\ 
    & & 0.001\% & \num{1e-2} & \num{1e-2} & \num{1e-1} & \num{1e-2} & \num{1e-1} & \num{1e-2} & \num{1e-2} & \num{1e-1} & \num{1e-2} & \num{1e-2} & \num{1e-1}\\
    \hline 
    & & 10\% & \num{1e-5} & \num{1e-5} & \num{1e-4} & \num{1e-5} & \num{1e-5} & \num{1e-6} & \num{1e-5} & \num{1e-5} & \num{1e-6} & \num{1e-5} & \num{1e-5}\\
    & & 5\% & \num{1e-5} & \num{1e-5} & \num{1e-2} & \num{1e-5} & \num{1e-5} & \num{1e-5} & \num{1e-5} & \num{1e-5} & \num{1e-5} & \num{1e-5} & \num{1e-5}\\
    & & 1\% & \num{1e-4} & \num{1e-4} & \num{1e-2} & \num{1e-4} & \num{1e-4} & \num{1e-5} & \num{1e-4} & \num{1e-4} & \num{1e-6} & \num{1e-4} & \num{1e-4}\\
    & & 0.5\% & \num{1e-4} & \num{1e-4} & \num{1e-5} & \num{1e-4} & \num{1e-4} & \num{1e-4} & \num{1e-4} & \num{1e-3} & \num{1e-5} & \num{1e-4} & \num{1e-4}\\
    OPT-13b & Masking & 0.1\% & \num{1e-3} & \num{1e-3} & \num{1e-4} & \num{1e-3} & \num{1e-3} & \num{1e-3} & \num{1e-3} & \num{1e-3} & \num{1e-5} & \num{1e-3} & \num{1e-3}\\
    & & 0.05\% & \num{1e-3} & \num{1e-3} & \num{1e-4} & \num{1e-3} & \num{1e-3} & \num{1e-3} & \num{1e-3} & \num{1e-3} & \num{1e-5} & \num{1e-3} & \num{1e-3}\\
    & & 0.01\% & \num{1e-2} & \num{1e-2} & \num{1e-3} & \num{1e-2} & \num{1e-2} & \num{1e-2} & \num{1e-2} & \num{1e-2} & \num{1e-4} & \num{1e-2} & \num{1e-2}\\
    & & 0.005\% & \num{1e-2} & \num{1e-2} & \num{1e-3} & \num{1e-2} & \num{1e-2} & \num{1e-2} & \num{1e-2} & \num{1e-2} & \num{1e-4} & \num{1e-2} & \num{1e-2}\\
    & & 0.001\% & \num{1e-1} & \num{1e-1} & \num{1e-2} & \num{1e-1} & \num{1e-1} & \num{1e-1} & \num{1e-1} & \num{1e-1} & \num{1e-2} & \num{1e-1} & \num{1e-1}\\
    \hline 
    \end{tabular}
\end{table*}

\begin{table*}[t]
    \centering
    \setlength{\tabcolsep}{2pt}
    \caption{\textbf{The optimal learning rate of baselines}, which are obtained via grid search in \{\num{1e-1}, \num{1e-2}, \num{1e-3}, \num{1e-4}, \num{1e-5}, \num{1e-6}\}. Here, FT stands for full parameter fine-tuning, Prefix stands for Prefix-Tuning.}
    \label{tab:lr_baselines}    
    \begin{tabular}{cccccccccccccc}
    \hline
    \textbf{Model} & \textbf{Method} & \textbf{Params} & \textbf{SST-2} & \textbf{RTE}  & \textbf{WSC} & \textbf{WiC}& \textbf{CB} & \textbf{BoolQ} & \textbf{MultiRC} & \textbf{COPA} & \textbf{ReCoRD} & \textbf{SQuAD} & \textbf{DROP} \\
    \hline \hline 
    & FT & 100\% &   \num{1e-5} & \num{1e-5} & \num{1e-3} & \num{1e-6} & \num{1e-5} & \num{1e-6} & \num{1e-5} & \num{1e-6} & \num{1e-6} & \num{1e-5} & \num{1e-5} \\
    & Adapter & 0.265\% & \num{1e-4} & \num{1e-4} & \num{1e-1} & \num{1e-4} & \num{1e-5} & \num{1e-3} & \num{1e-4} & \num{1e-5} & \num{1e-5} & \num{1e-4} & \num{1e-4} \\
    OPT-125m & LoRA & 0.235\% &   \num{1e-5} & \num{1e-4} & \num{1e-2} & \num{1e-4} & \num{1e-3} & \num{1e-4} & \num{1e-4} & \num{1e-4} & \num{1e-6} & \num{1e-4} & \num{1e-3} \\
    & AdaLoRA & 0.235\% & \num{1e-3} & \num{1e-3} & \num{1e-2} & \num{1e-3} & \num{1e-3} & \num{1e-3} & \num{1e-3} & \num{1e-4} & \num{1e-4} & \num{1e-3} & \num{1e-3} \\
    & Prefix & 0.074\% & \num{1e-2} & \num{1e-3} & \num{1e-2} & \num{1e-3} & \num{1e-2} & \num{1e-3} & \num{1e-2} & \num{1e-2} & \num{1e-4} & \num{1e-2} & \num{1e-4} \\ 
    & BitFit & 0.066\% & \num{1e-4} & \num{1e-4} & \num{1e-2} & \num{1e-4} & \num{1e-5} & \num{1e-3} & \num{1e-4} & \num{1e-5} & \num{1e-5} & \num{1e-4} & \num{1e-4} \\
    \hline
    & FT & 100\% &  \num{1e-6} & \num{1e-6} & \num{1e-6} & \num{1e-6} & \num{1e-5} & \num{1e-6} & \num{1e-5} & \num{1e-6} & \num{1e-6} & \num{1e-6} & \num{1e-6} \\
    & Adapter & 0.134\% & \num{1e-5} & \num{1e-4} & \num{1e-2} & \num{1e-4} & \num{1e-3} & \num{1e-4} & \num{1e-4} & \num{1e-4} & \num{1e-5} & \num{1e-4} & \num{1e-4} \\
    OPT-1.3b & LoRA & 0.120\% & \num{1e-5} & \num{1e-4} & \num{1e-3} & \num{1e-4} & \num{1e-3} & \num{1e-4} & \num{1e-4} & \num{1e-3} & \num{1e-5} & \num{1e-4} & \num{1e-4} \\ 
    & AdaLoRA & 0.120\% & \num{1e-4} & \num{1e-3} & \num{1e-2} & \num{1e-3} & \num{1e-3} & \num{1e-3} & \num{1e-3} & \num{1e-4} & \num{1e-4} & \num{1e-4} & \num{1e-3} \\
    & Prefix & 0.037\% &  \num{1e-2} & \num{1e-2} & \num{1e-2} & \num{1e-3} & \num{1e-2} & \num{1e-2} & \num{1e-2} & \num{1e-3} & \num{1e-4} & \num{1e-3} & \num{1e-2} \\ 
    & BitFit & 0.034\% &  \num{1e-4} & \num{1e-4} & \num{1e-2} & \num{1e-4} & \num{1e-3} & \num{1e-4} & \num{1e-4} & \num{1e-4} & \num{1e-5} & \num{1e-4} & \num{1e-3} \\
    \hline
    & FT & 100\% &  \num{1e-6} & \num{1e-5} & \num{1e-5} & \num{1e-5} & \num{1e-5} & \num{1e-6} & \num{1e-5} & \num{1e-6} & \num{1e-6} & \num{1e-6} & \num{1e-6} \\
    & Adapter & 0.057\% & \num{1e-4} & \num{1e-3} & \num{1e-4} & \num{1e-4} & \num{1e-3} & \num{1e-4} & \num{1e-3} & \num{1e-4} & \num{1e-5} & \num{1e-4} & \num{1e-4} \\
    OPT-13b & LoRA & 0.051\% &  \num{1e-4} & \num{1e-3} & \num{1e-3} & \num{1e-4} & \num{1e-4} & \num{1e-4} & \num{1e-3} & \num{1e-4} & \num{1e-5} & \num{1e-4} & \num{1e-4} \\
    & AdaLoRA & 0.051\% & \num{1e-3} & \num{1e-3} & \num{1e-2} & \num{1e-3} & \num{1e-3} & \num{1e-3} & \num{1e-3} & \num{1e-3} & \num{1e-4} & \num{1e-3} & \num{1e-3} \\
    & Prefix & 0.016\% &  \num{1e-3} & \num{1e-3} & \num{1e-2} & \num{1e-3} & \num{1e-2} & \num{1e-3} & \num{1e-3} & \num{1e-3} & \num{1e-4} & \num{1e-3} & \num{1e-3} \\ 
    & BitFit & 0.014\% &  \num{1e-4} & \num{1e-3} & \num{1e-1} & \num{1e-3} & \num{1e-3} & \num{1e-3} & \num{1e-3} & \num{1e-4} & \num{1e-4} & \num{1e-3} & \num{1e-3} \\ 
    \hline 
    \end{tabular}
\end{table*}

\begin{figure*}[t]\centering
\setlength{\tabcolsep}{-0.0cm}
\begin{tabular}{ccc}
\includegraphics[scale=0.20]{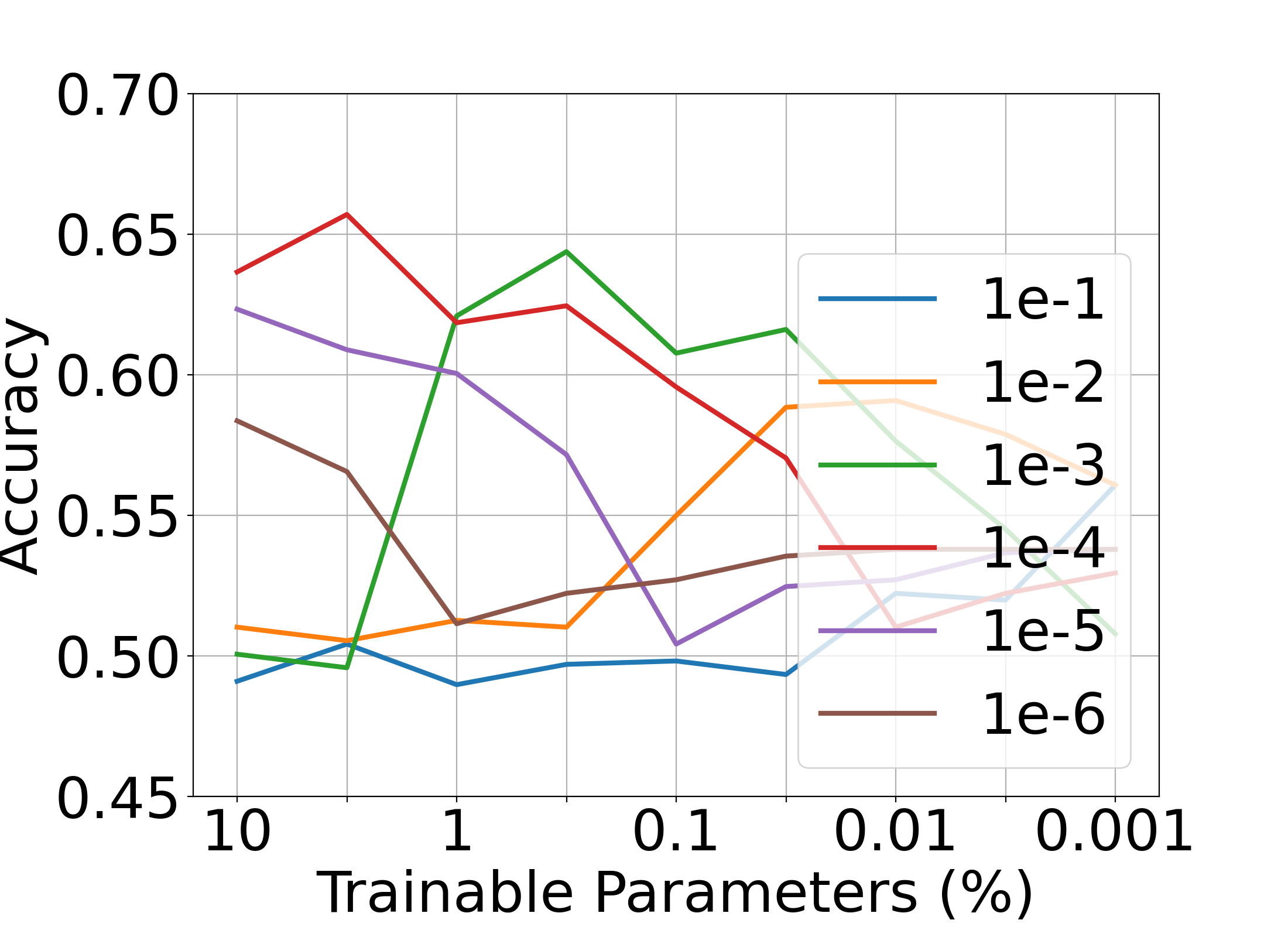} &
\includegraphics[scale=0.20]{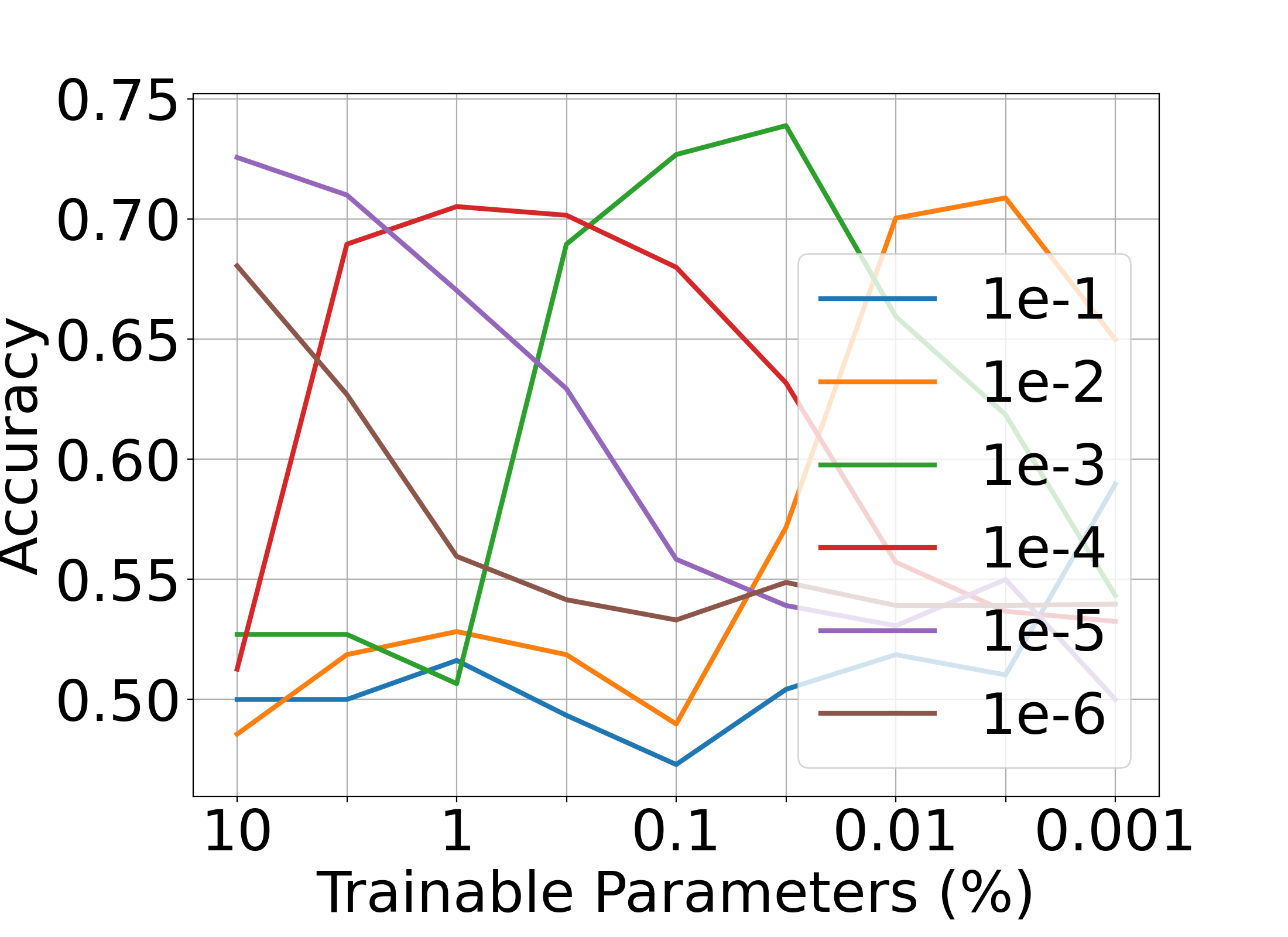} &
\includegraphics[scale=0.20]{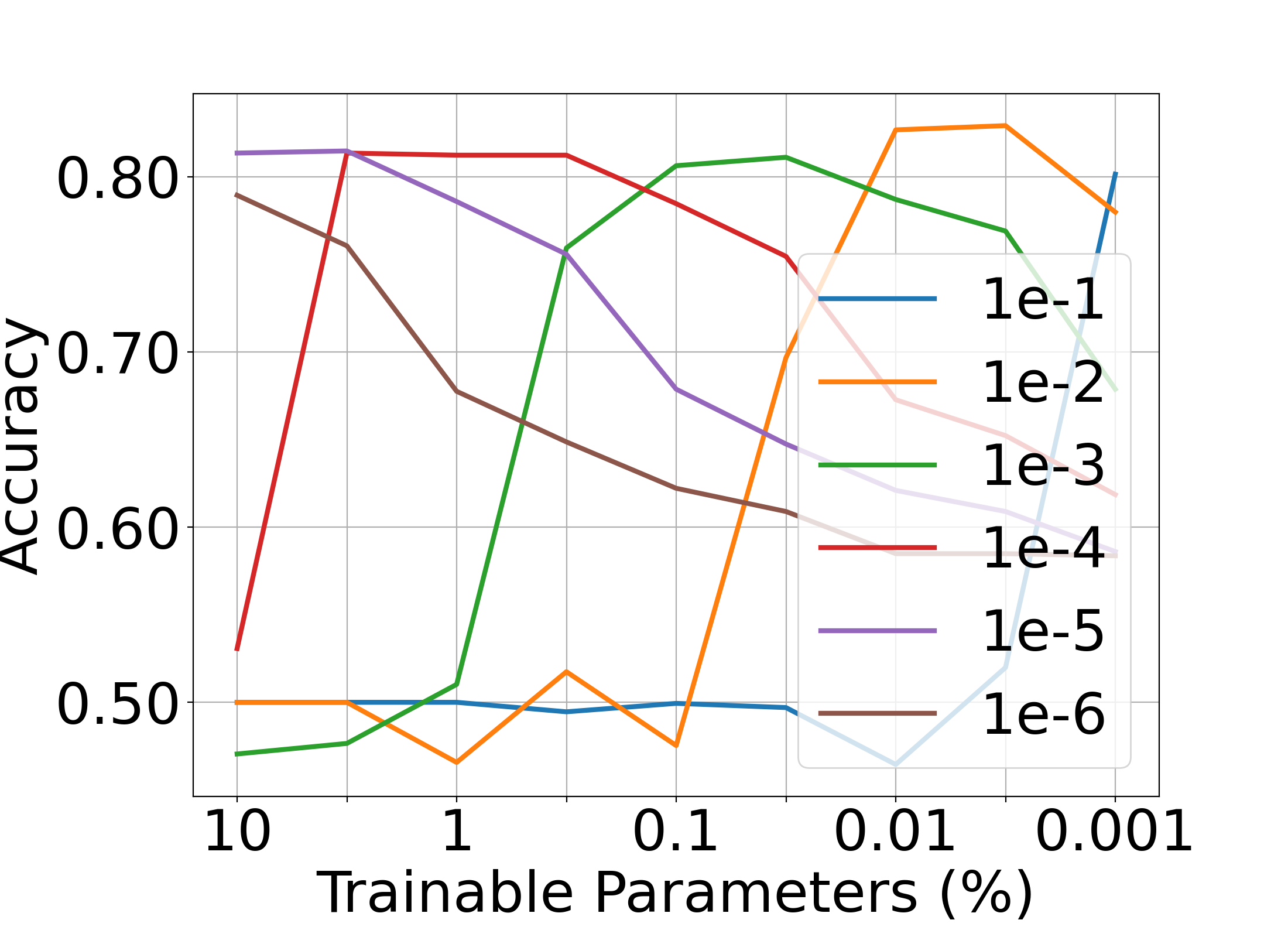} \\
RTE, OPT-125m &  RTE, OPT-1.3b &  RTE, OPT-13b \\
\includegraphics[scale=0.20]{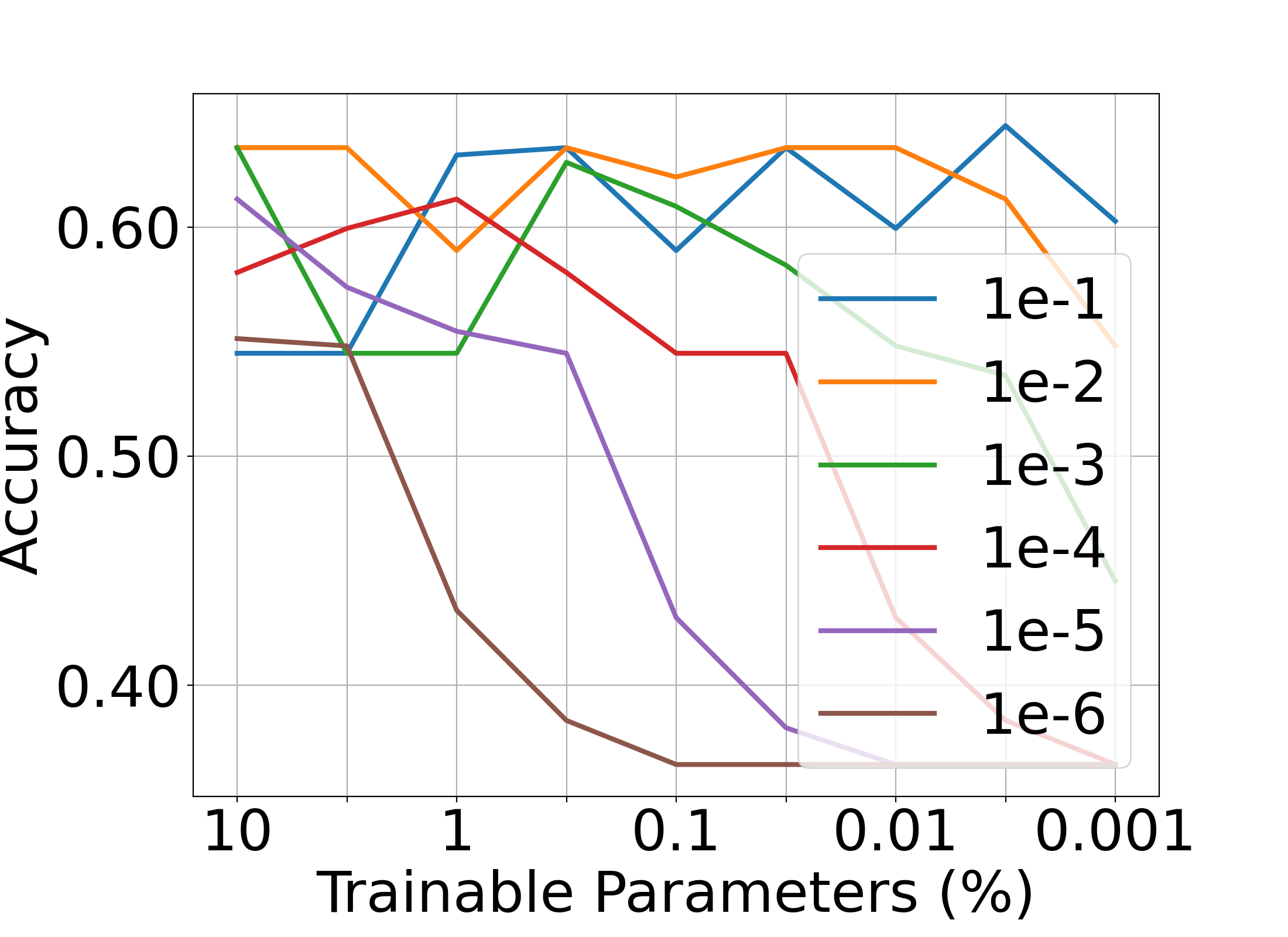} &
\includegraphics[scale=0.20]{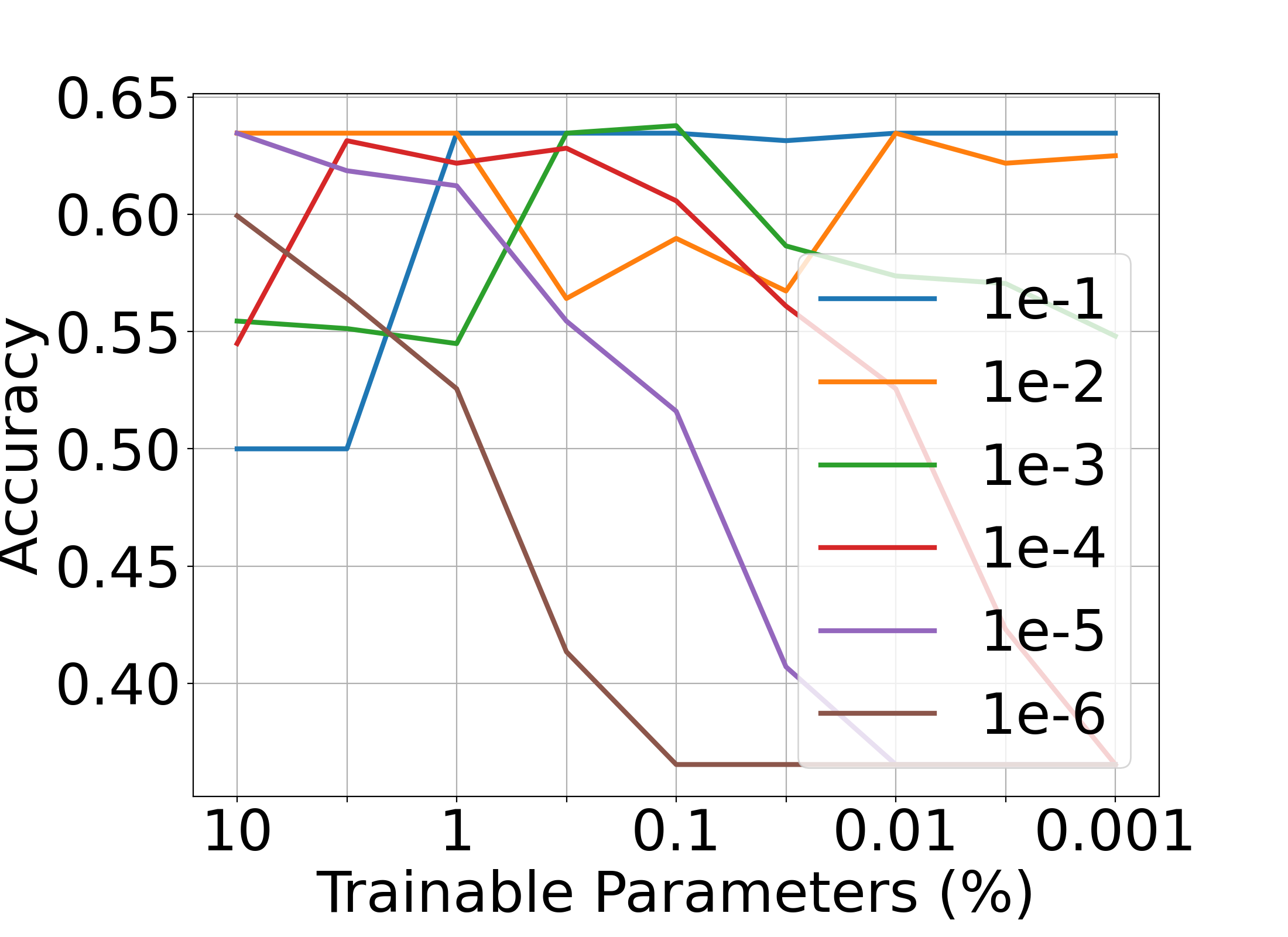} &
\includegraphics[scale=0.20]{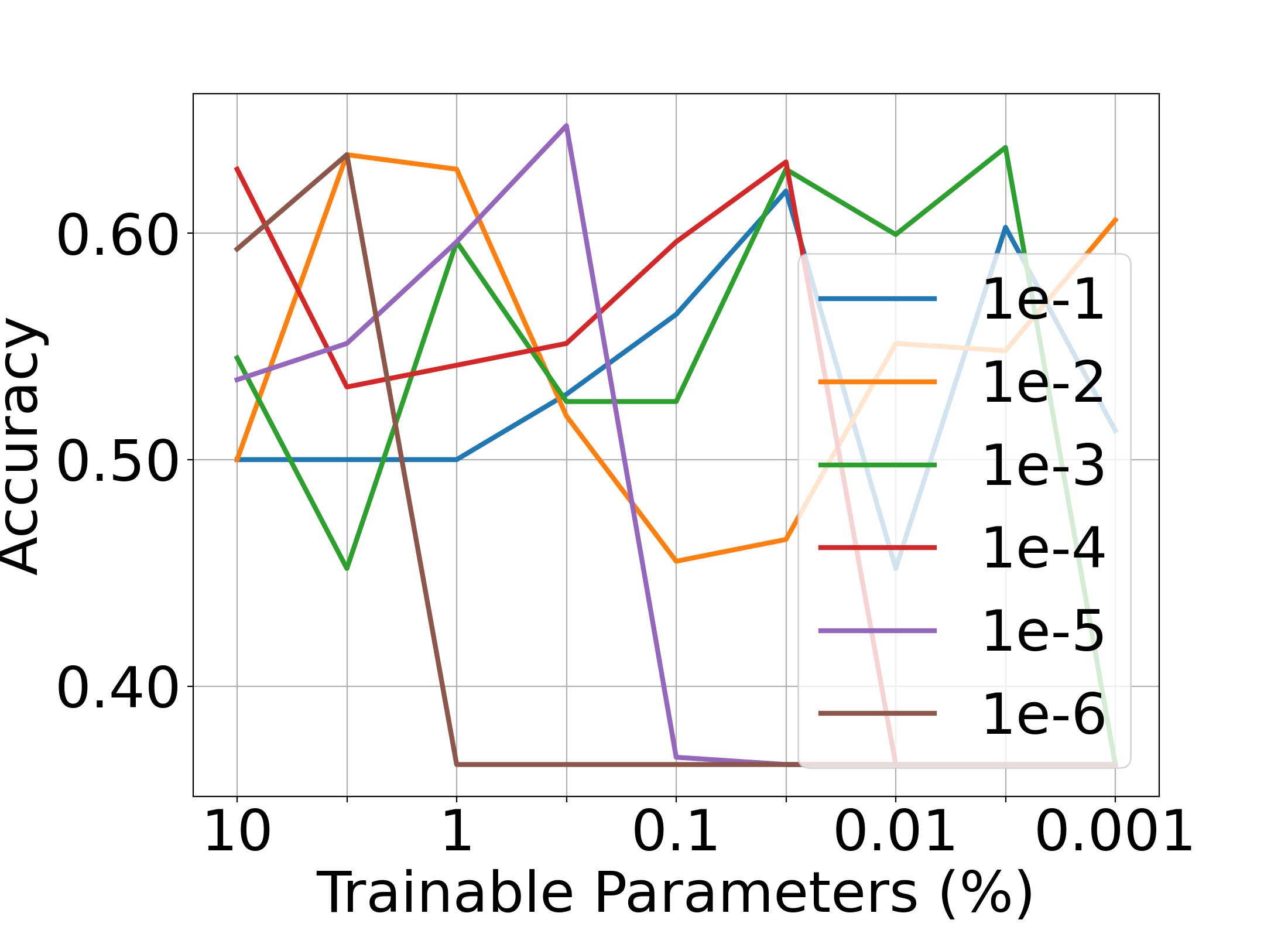} \\
WSC, OPT-125m &  WSC, OPT-1.3b &  WSC, OPT-13b \\
\includegraphics[scale=0.20]{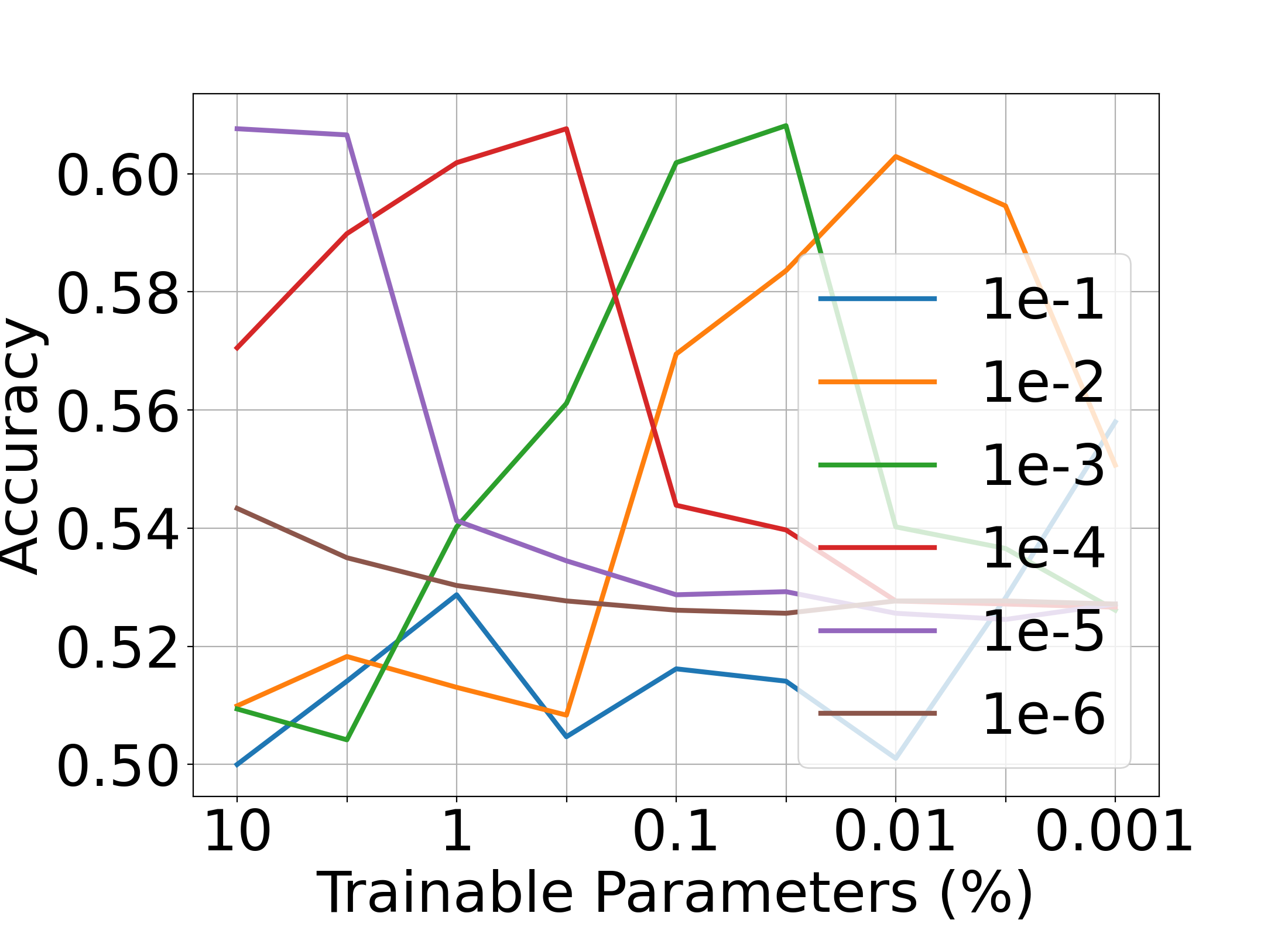} &
\includegraphics[scale=0.20]{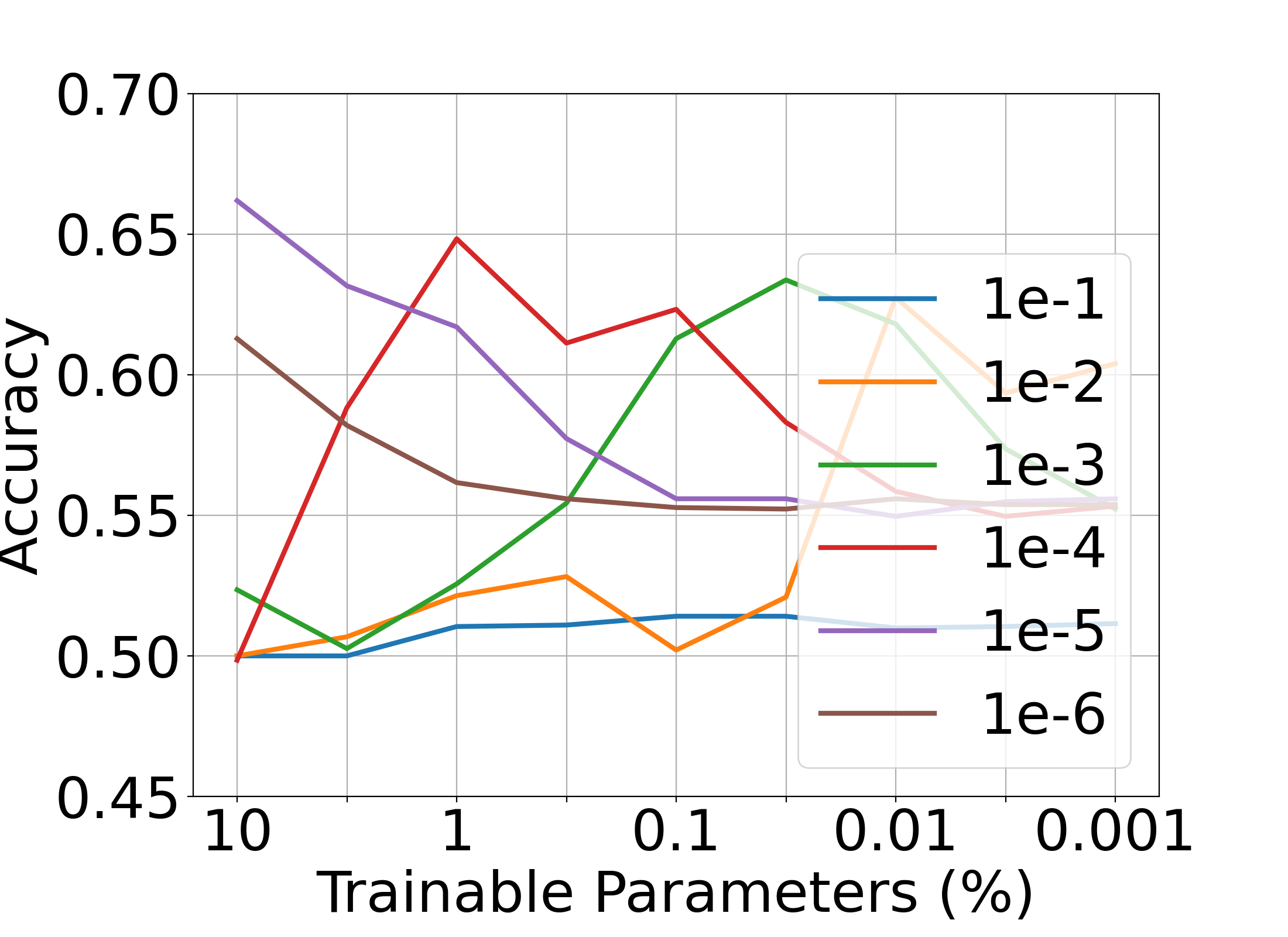} &
\includegraphics[scale=0.20]{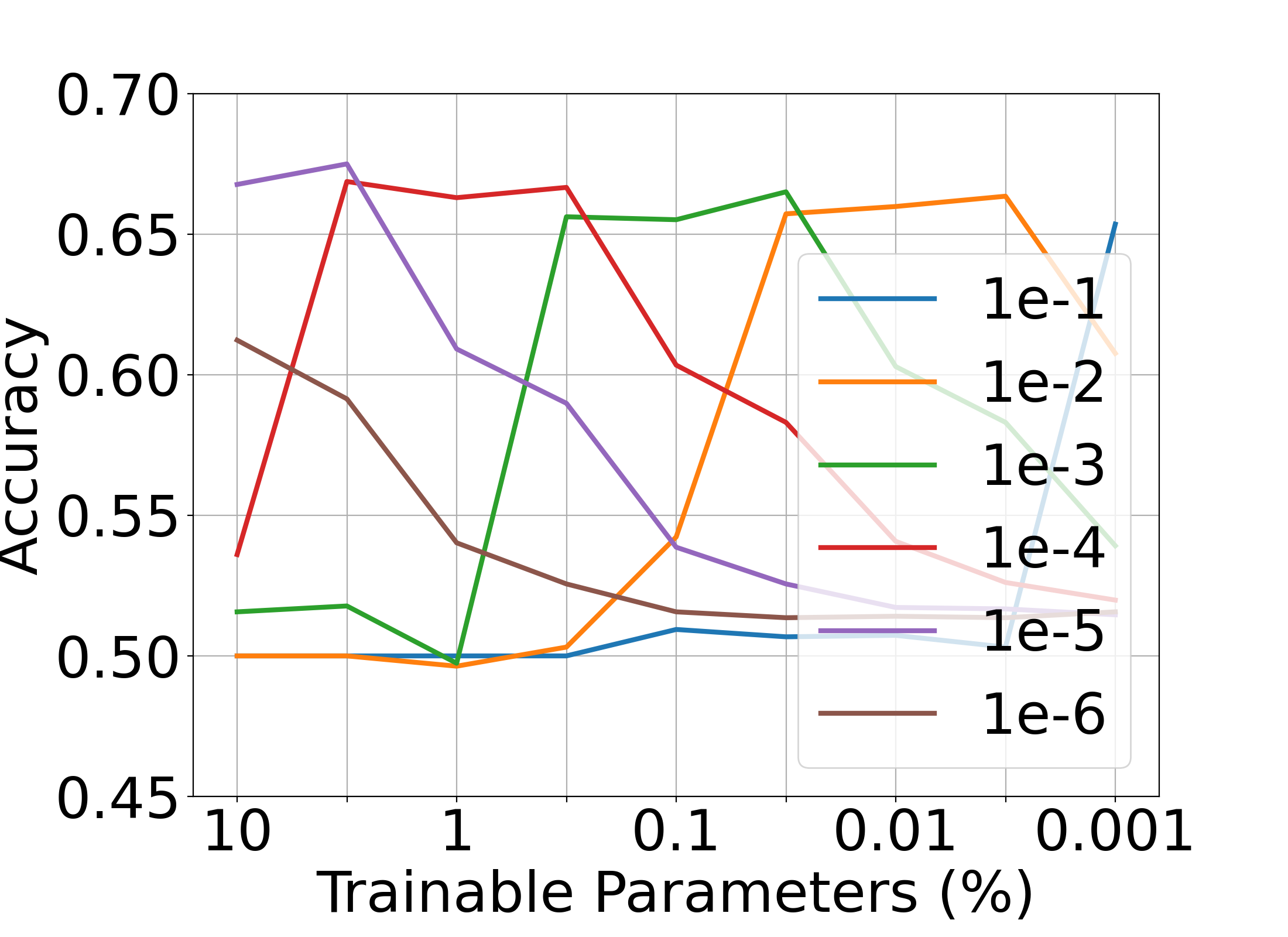} \\
WiC, OPT-125m &  WiC, OPT-1.3b &  WiC, OPT-13b \\
\includegraphics[scale=0.20]{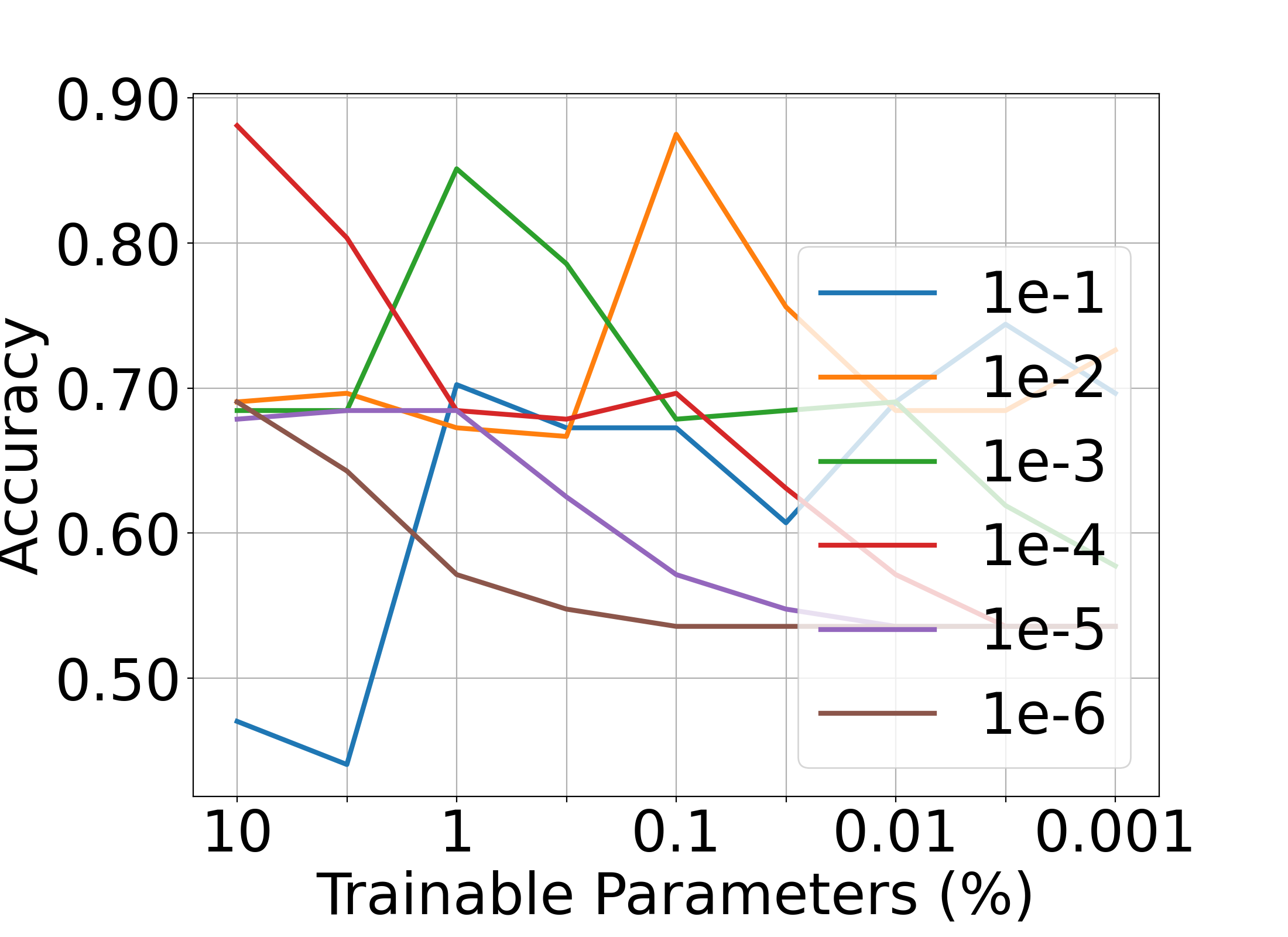} &
\includegraphics[scale=0.20]{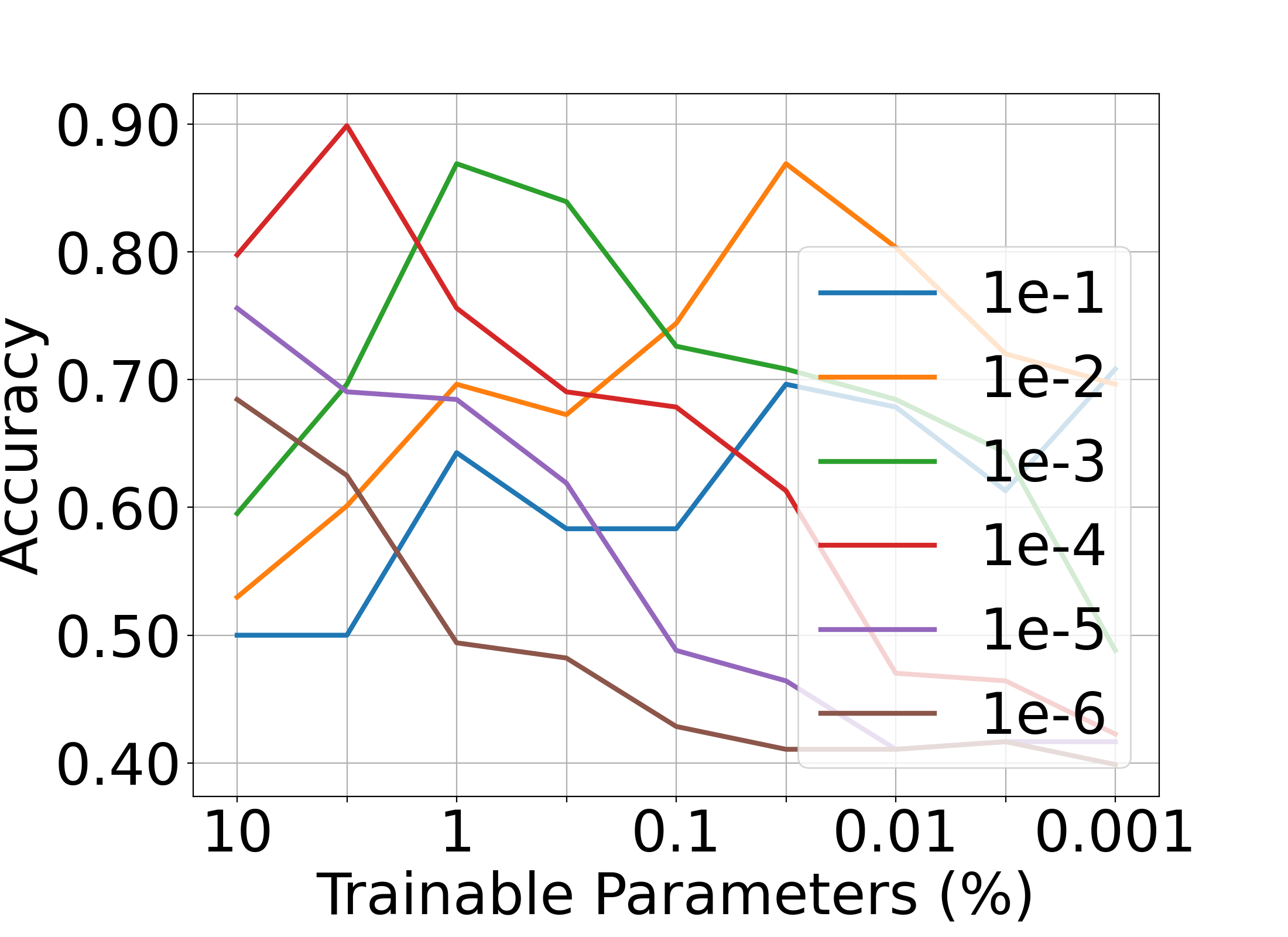} &
\includegraphics[scale=0.20]{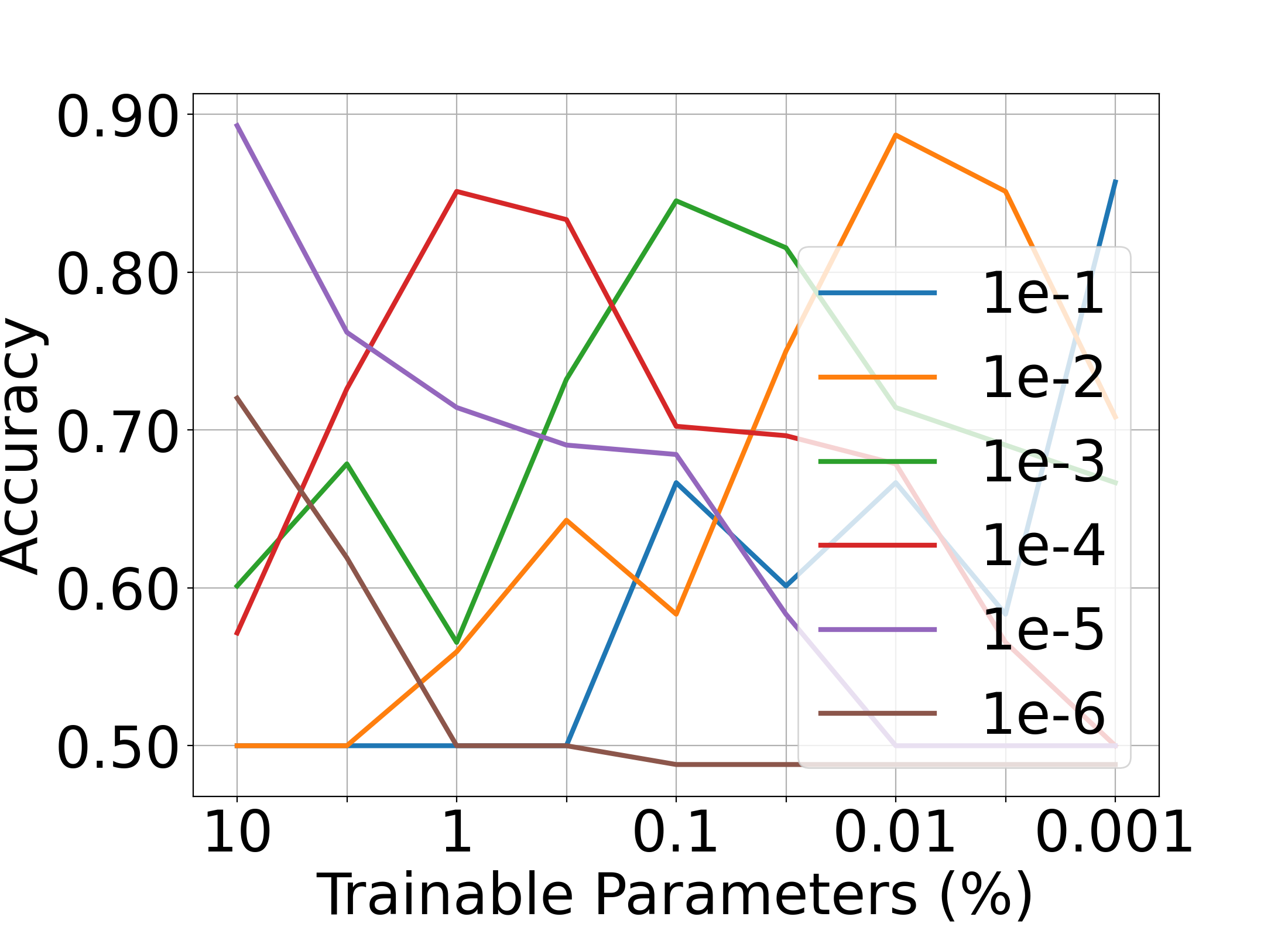} \\
 CB, OPT-125m &  CB, OPT-1.3b &  CB, OPT-13b \\
\end{tabular}
\caption{\textbf{The accuracy of Random Masking with different learning rates~(part I)}. The x-axis is the percentage of trainable parameters, ranging from~\{10\%, 5\%, 1\%, 0.5\%, 0.1\%, 0.05\%, 0.01\%, 0.005\%, 0.001\%\}. From top to bottom: RTE, WSC, WiC, CB. From left to right: OPT-125m, OPT-1.3b, OPT-13b. The figures show that for Random Masking, smaller trainable parameter ratio requires larger learning rate. }
\label{fig:lrapx1}
\end{figure*}

\begin{figure*}[t]\centering
\setlength{\tabcolsep}{-0.0cm}
\begin{tabular}{ccc}
\includegraphics[scale=0.20]{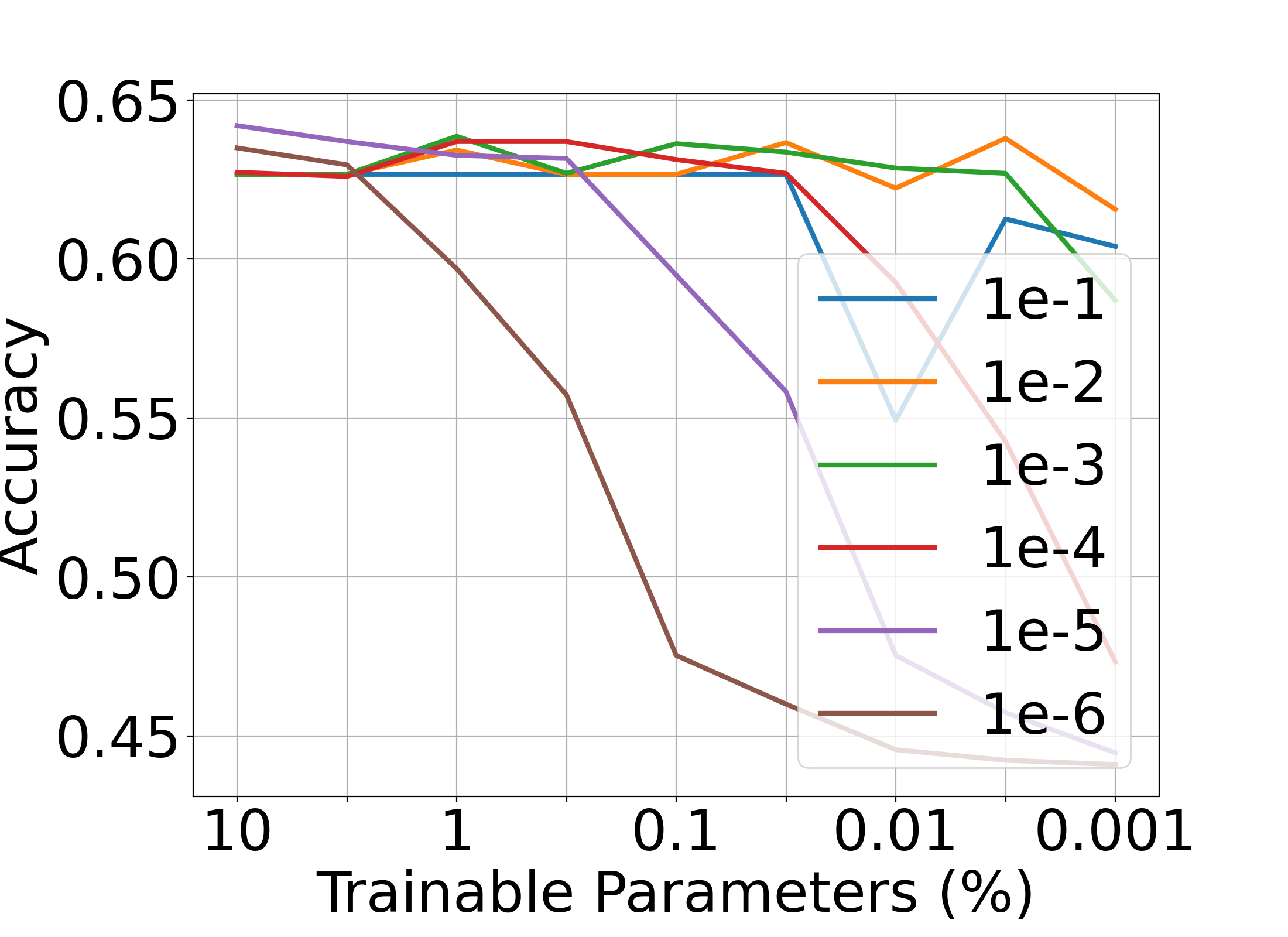} &
\includegraphics[scale=0.20]{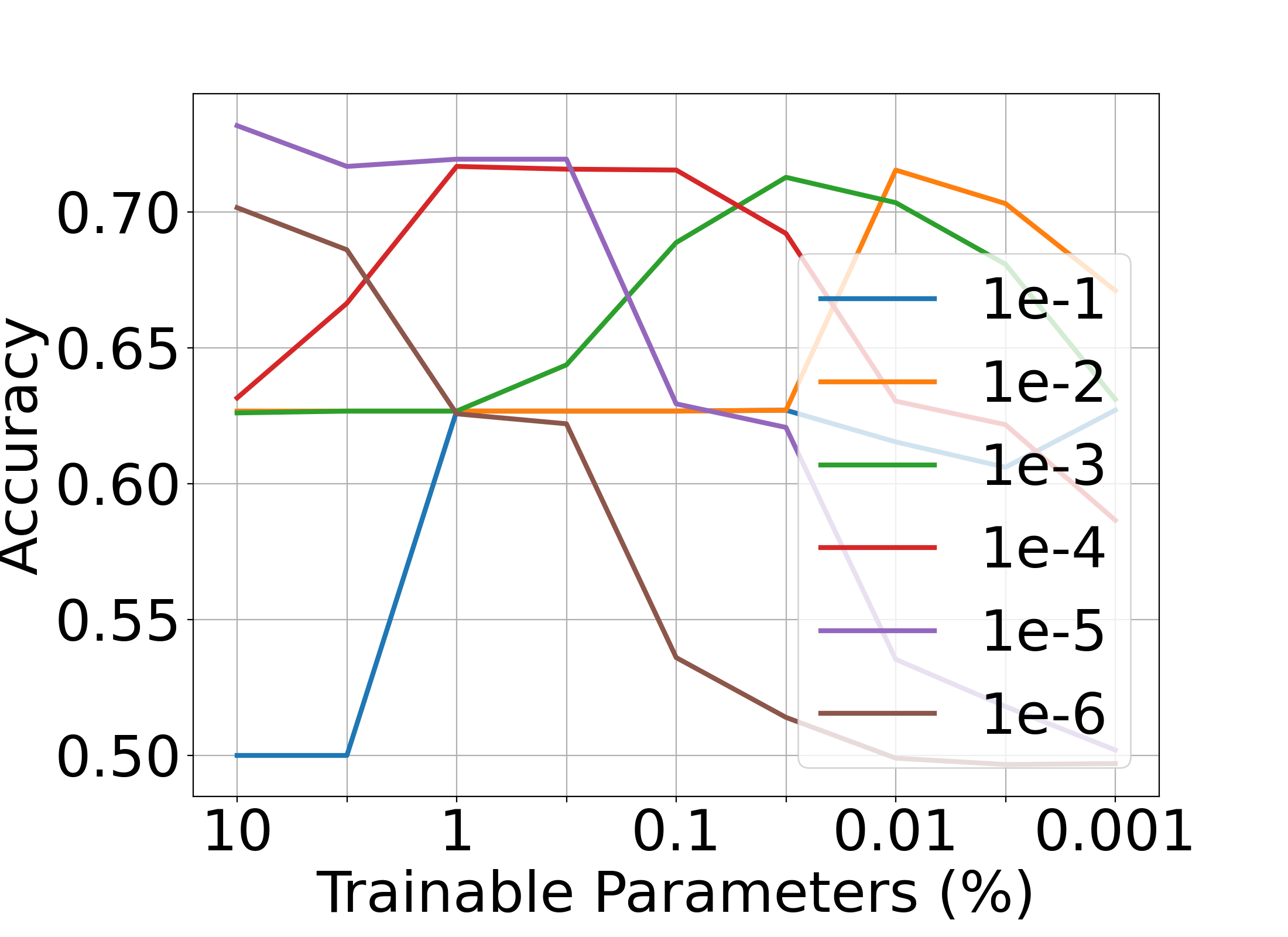} &
\includegraphics[scale=0.20]{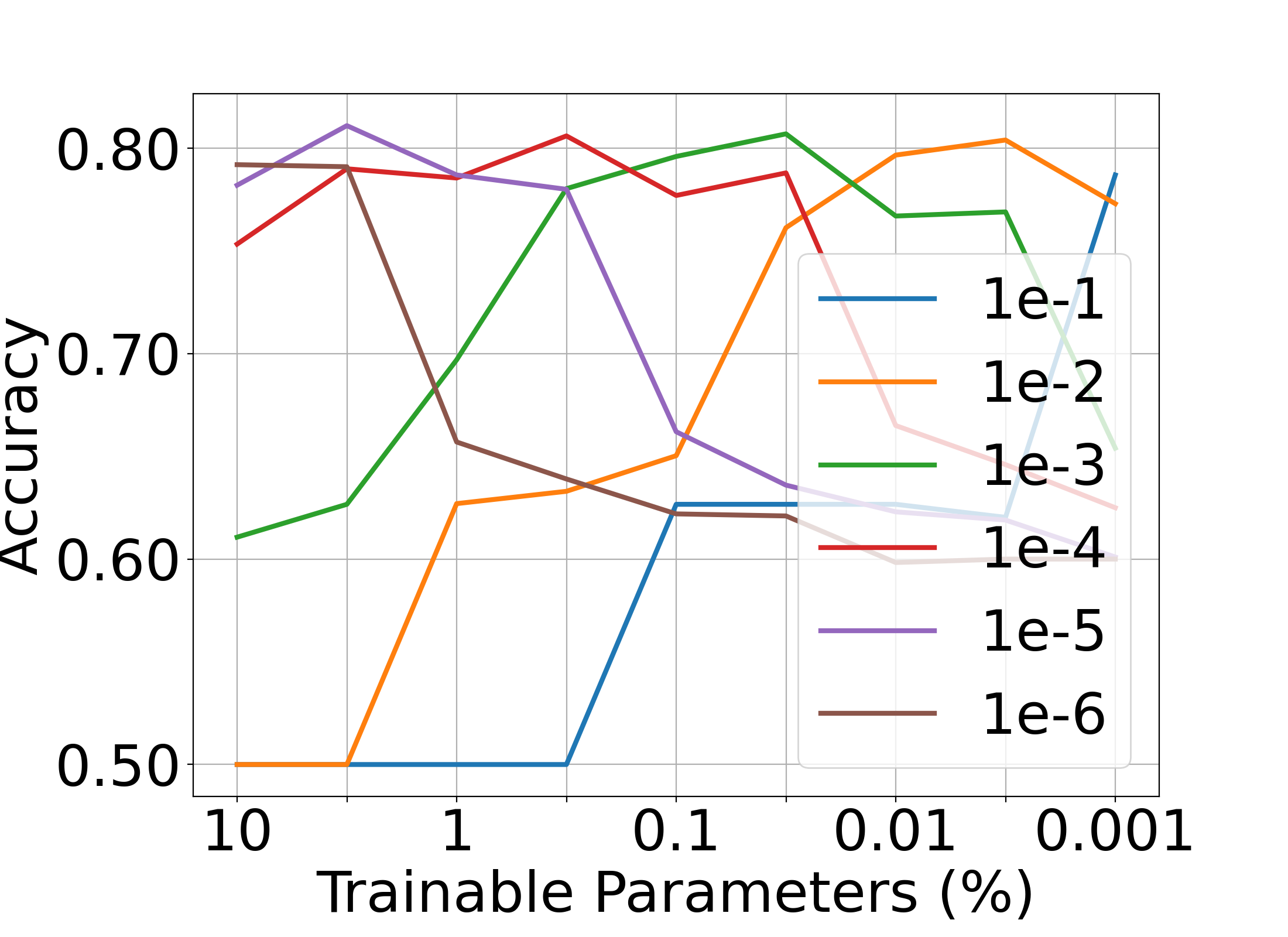} \\
BoolQ, OPT-125m &  BoolQ, OPT-1.3b &  BoolQ, OPT-13b \\
\includegraphics[scale=0.20]{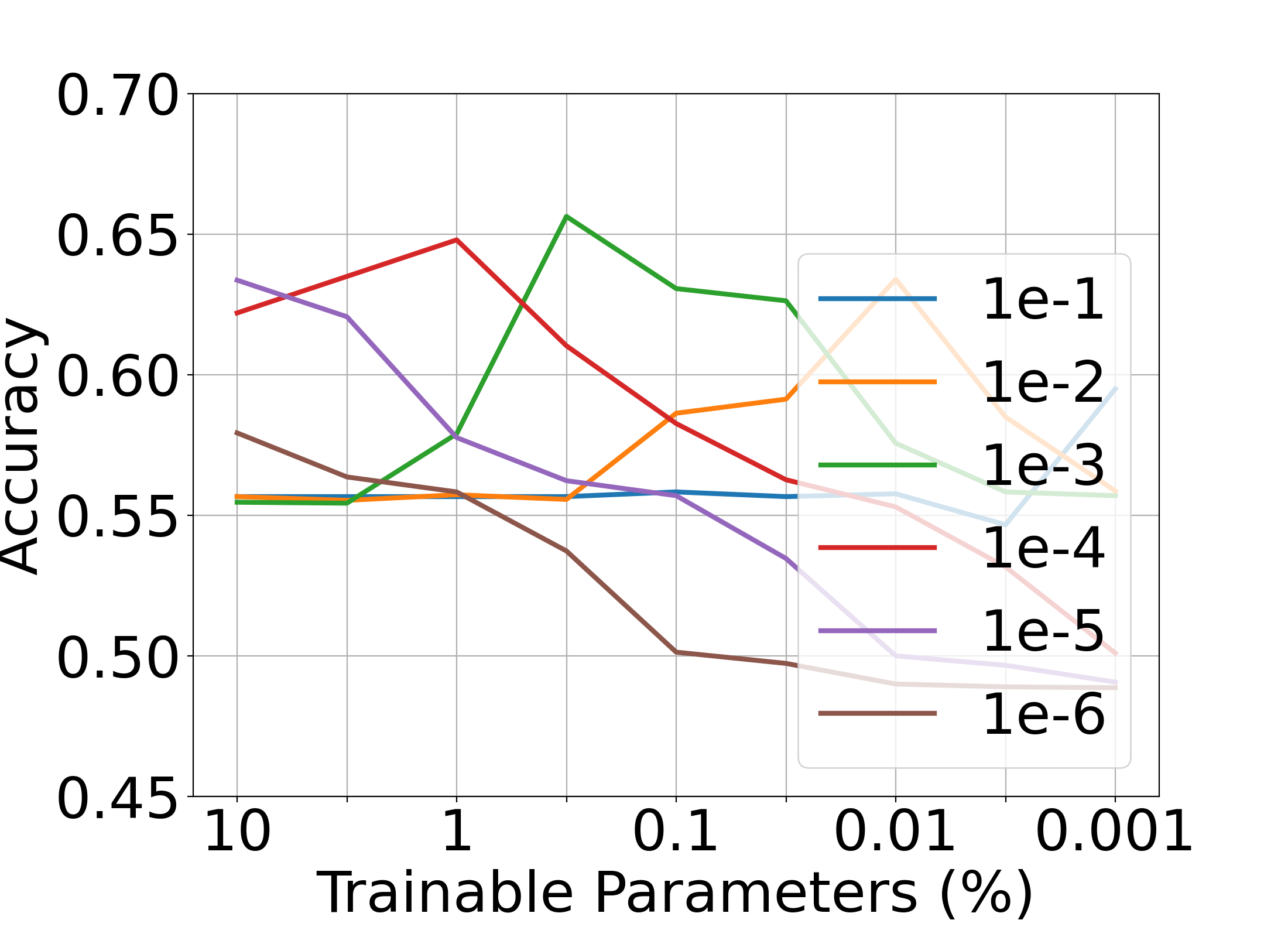} &
\includegraphics[scale=0.20]{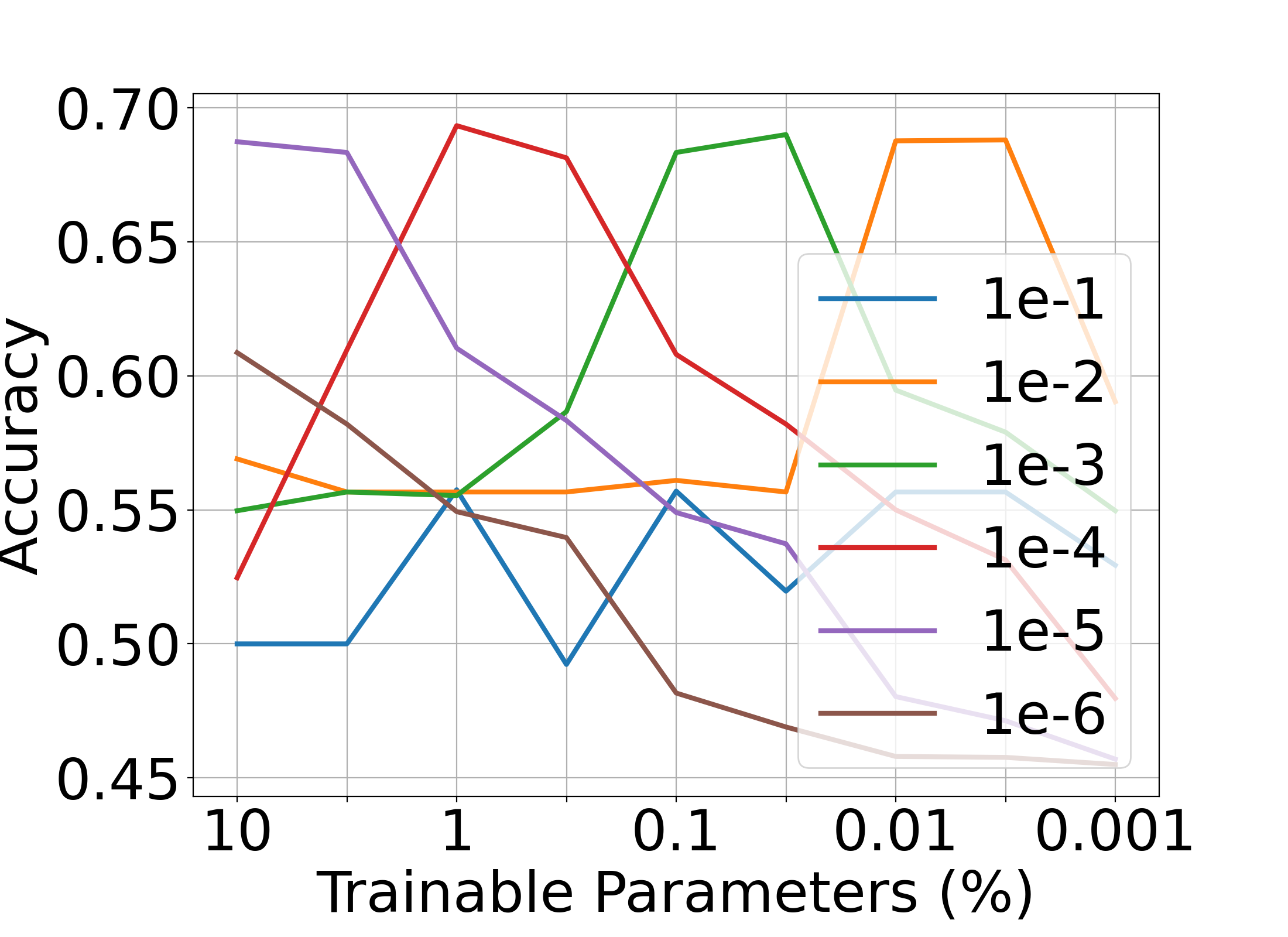} &
\includegraphics[scale=0.20]{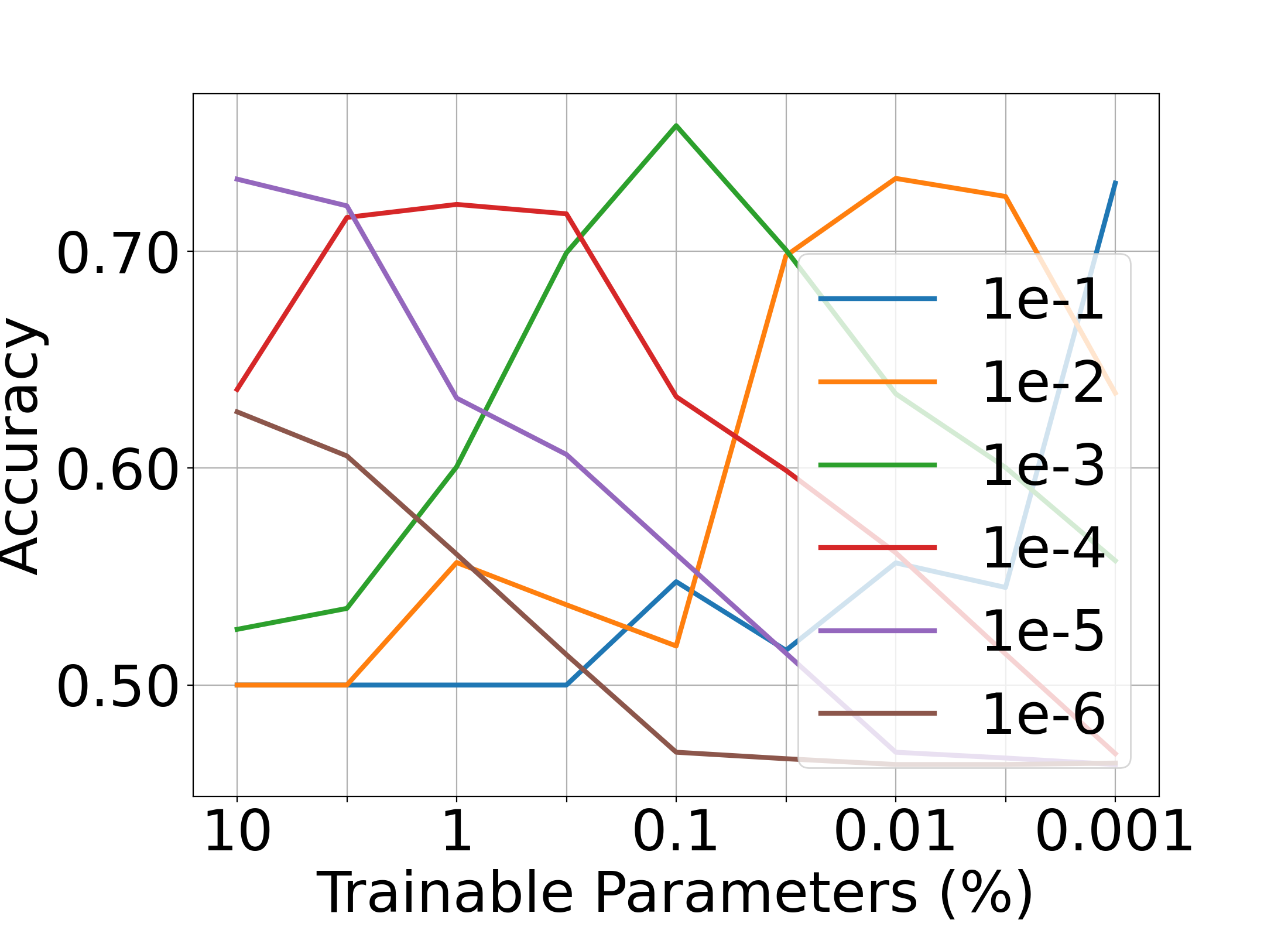} \\
MultiRC, OPT-125m &  MultiRC, OPT-1.3b &  MultiRC, OPT-13b \\
\includegraphics[scale=0.20]{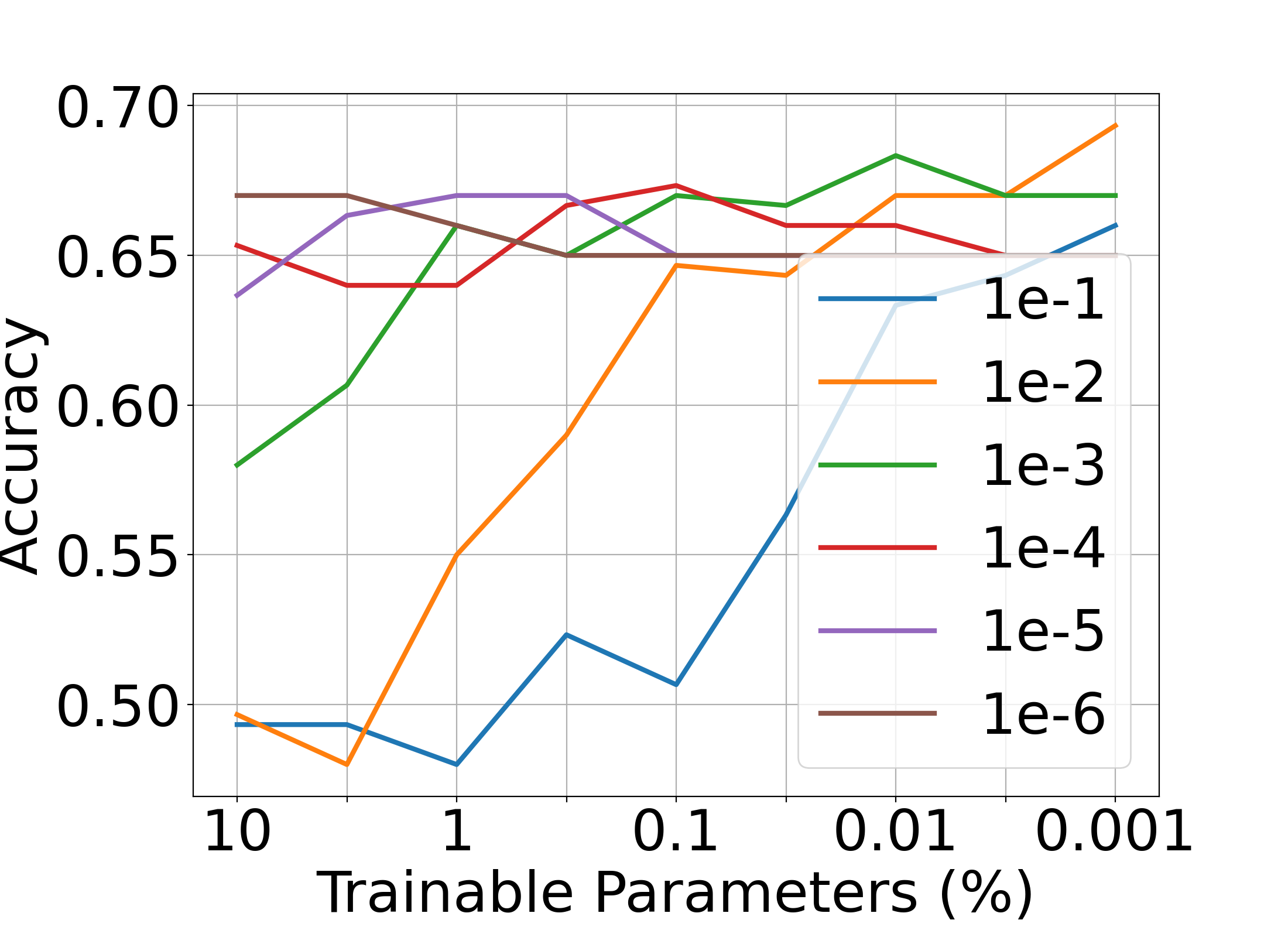} &
\includegraphics[scale=0.20]{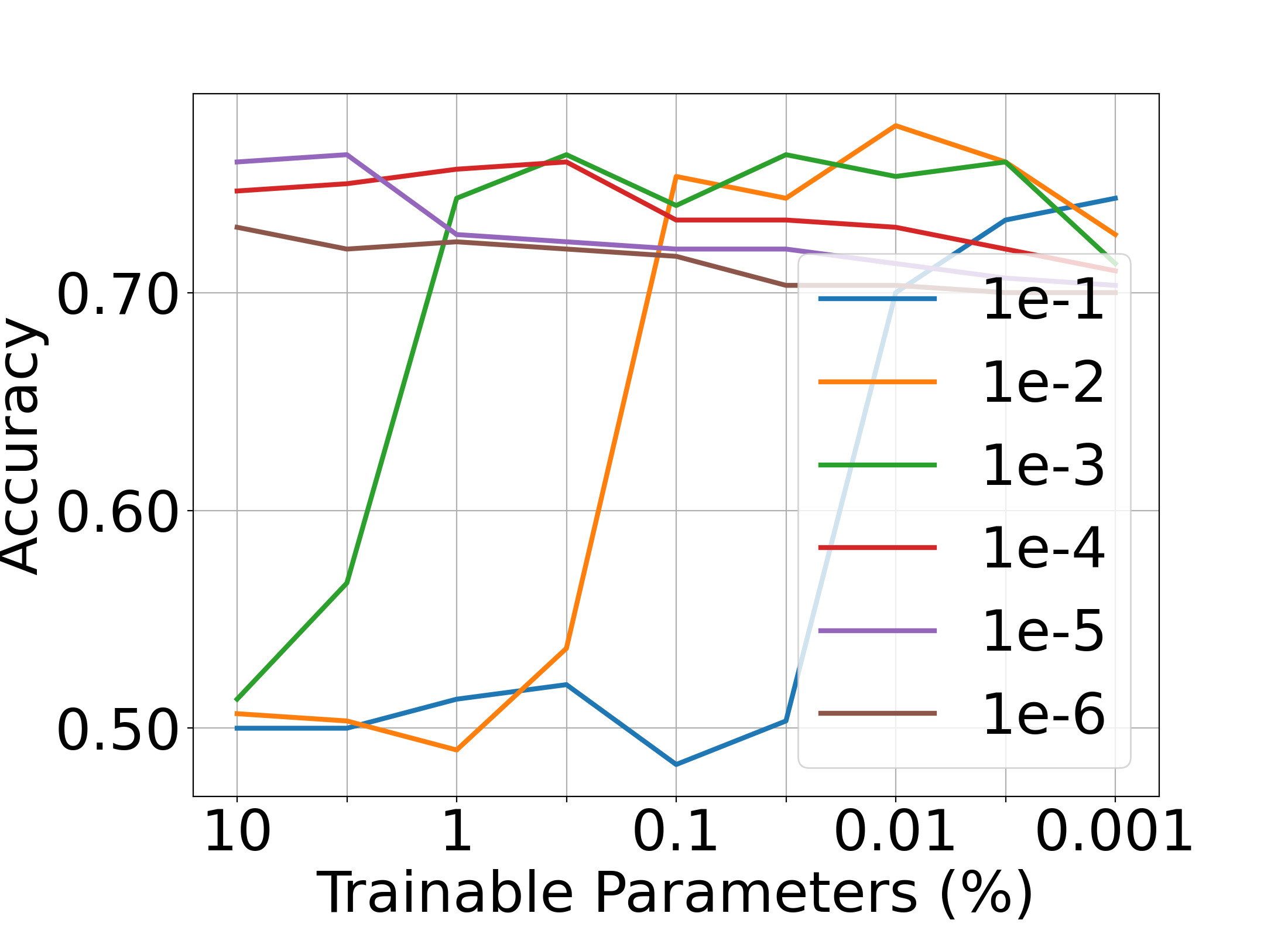} &
\includegraphics[scale=0.20]{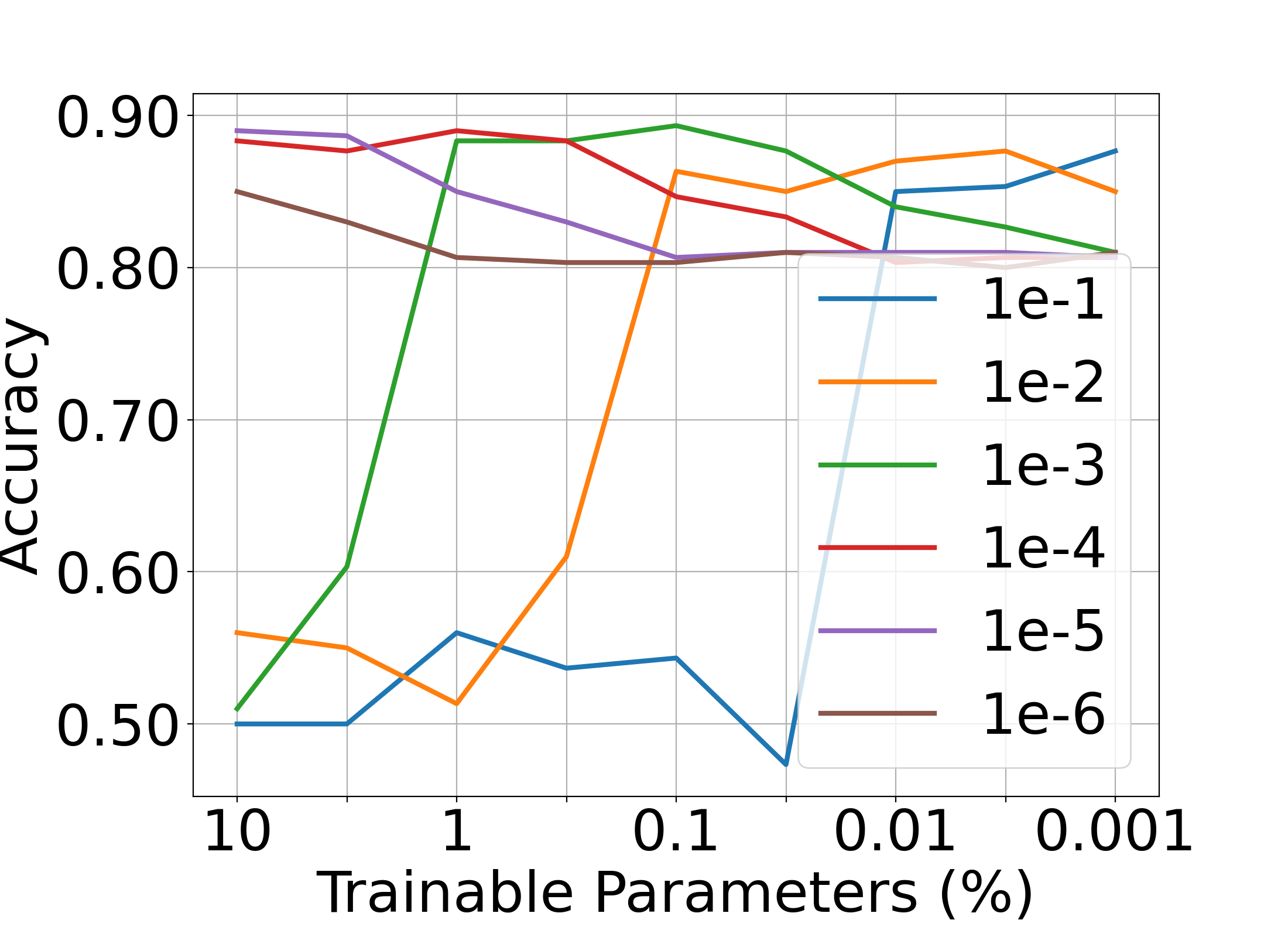} \\
COPA, OPT-125m &  COPA, OPT-1.3b &  COPA, OPT-13b \\
\includegraphics[scale=0.20]{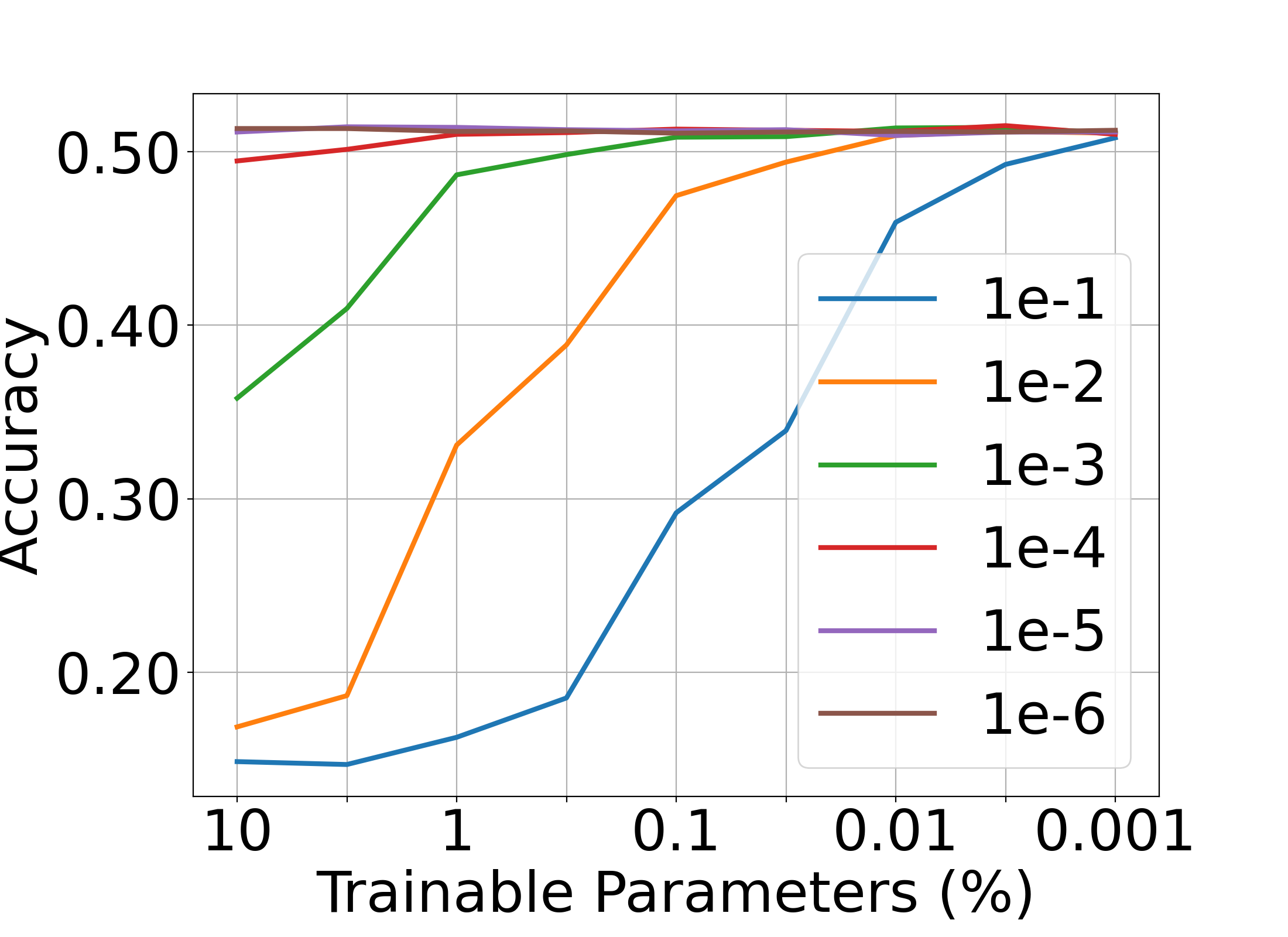} &
\includegraphics[scale=0.20]{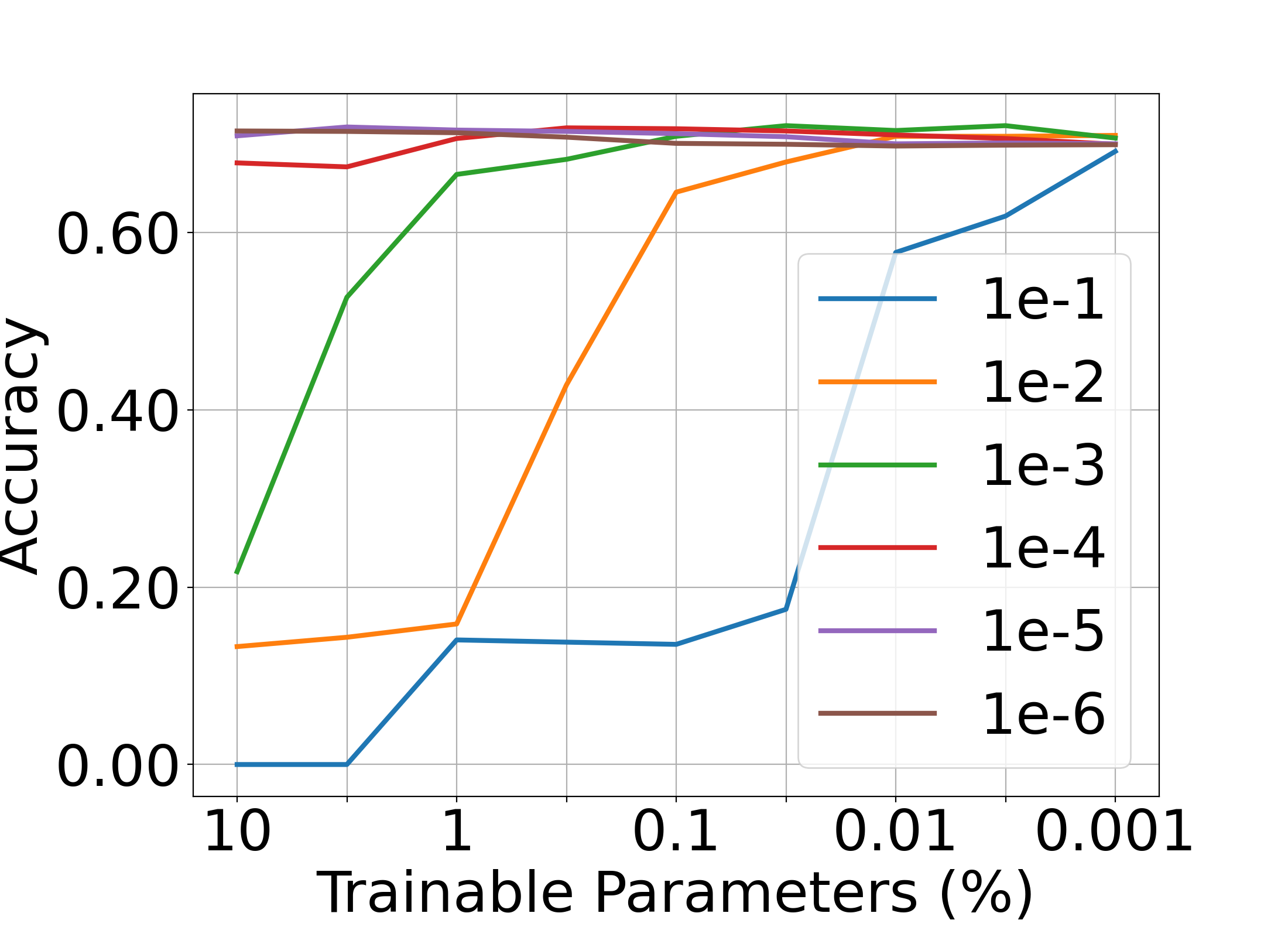} &
\includegraphics[scale=0.20]{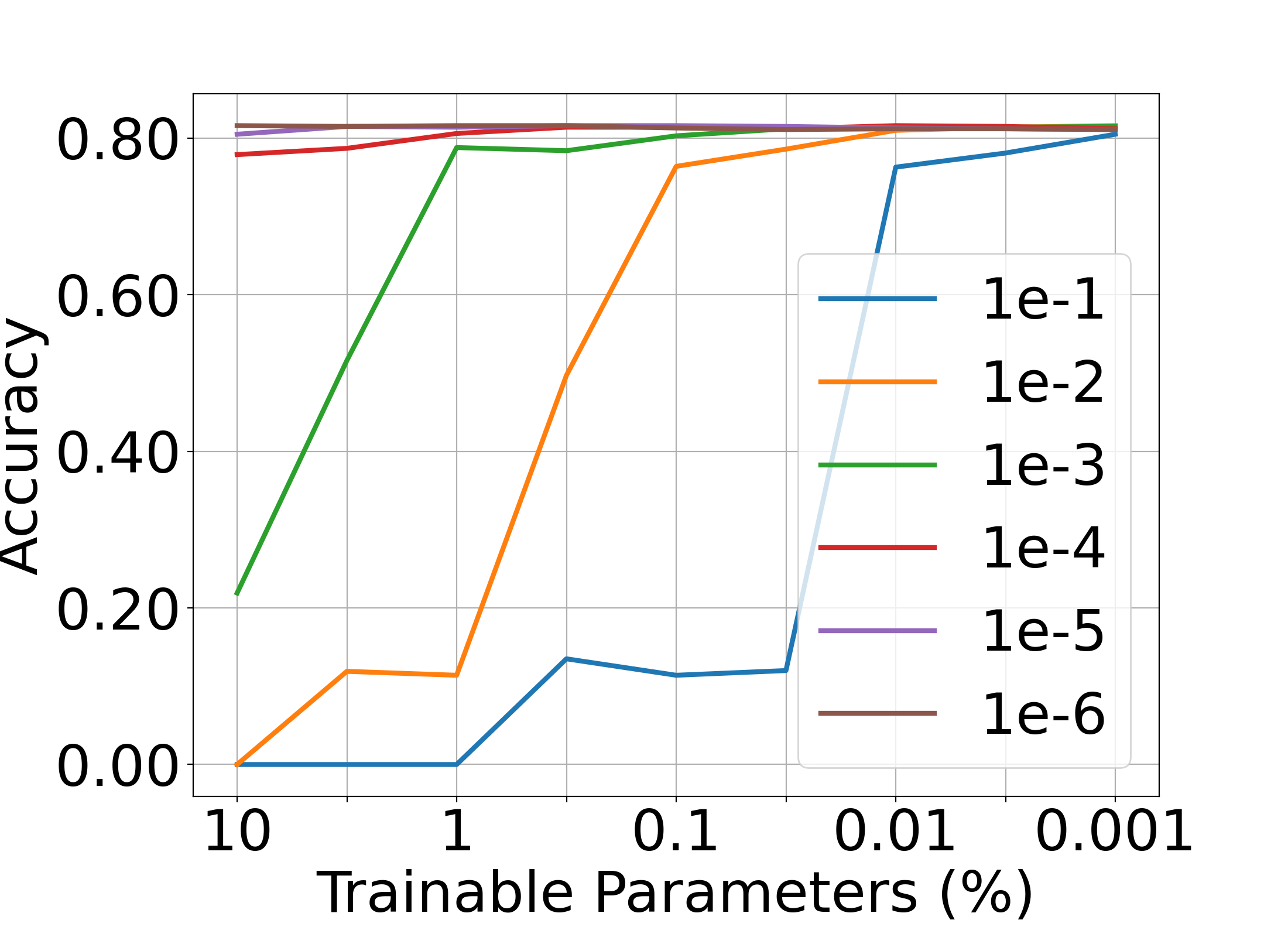} \\
ReCoRD, OPT-125m &  ReCoRD, OPT-1.3b &  ReCoRD, OPT-13b \\
\end{tabular}
\caption{\textbf{The accuracy of Random Masking with different learning rates~(part II)}. The x-axis is the percentage of trainable parameters, ranging from~\{10\%, 5\%, 1\%, 0.5\%, 0.1\%, 0.05\%, 0.01\%, 0.005\%, 0.001\%\}. From top to bottom: BoolQ, MultiRC, Copa, ReCoRD. From left to right: OPT-125m, OPT-1.3b, OPT-13b. The figures show that for Random Masking, smaller trainable parameter ratio requires larger learning rate. }
\label{fig:lrapx2}
\end{figure*}

\begin{figure*}[t]\centering
\setlength{\tabcolsep}{-0.0cm}
\begin{tabular}{ccc}
\includegraphics[scale=0.20]{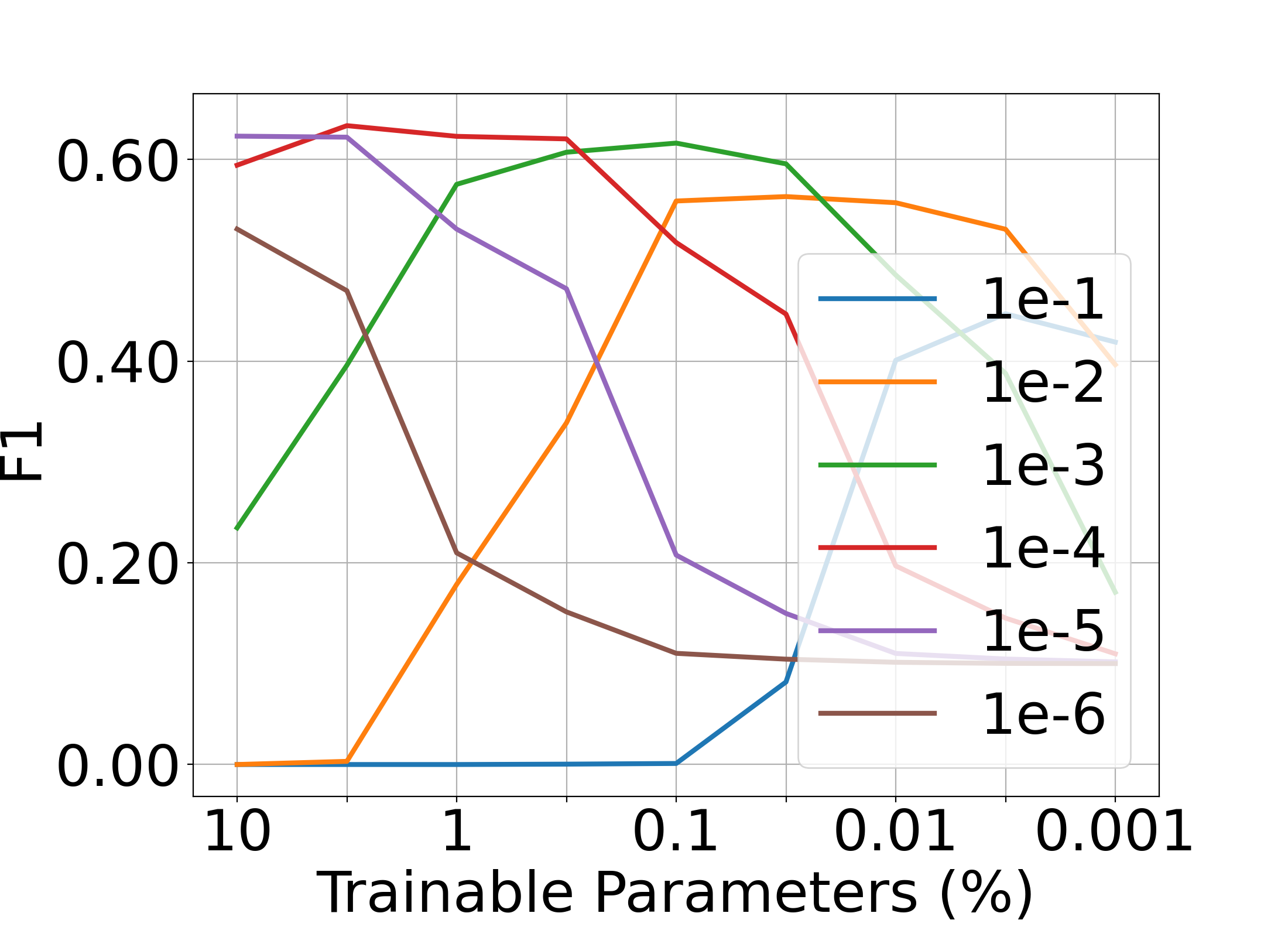} &
\includegraphics[scale=0.20]{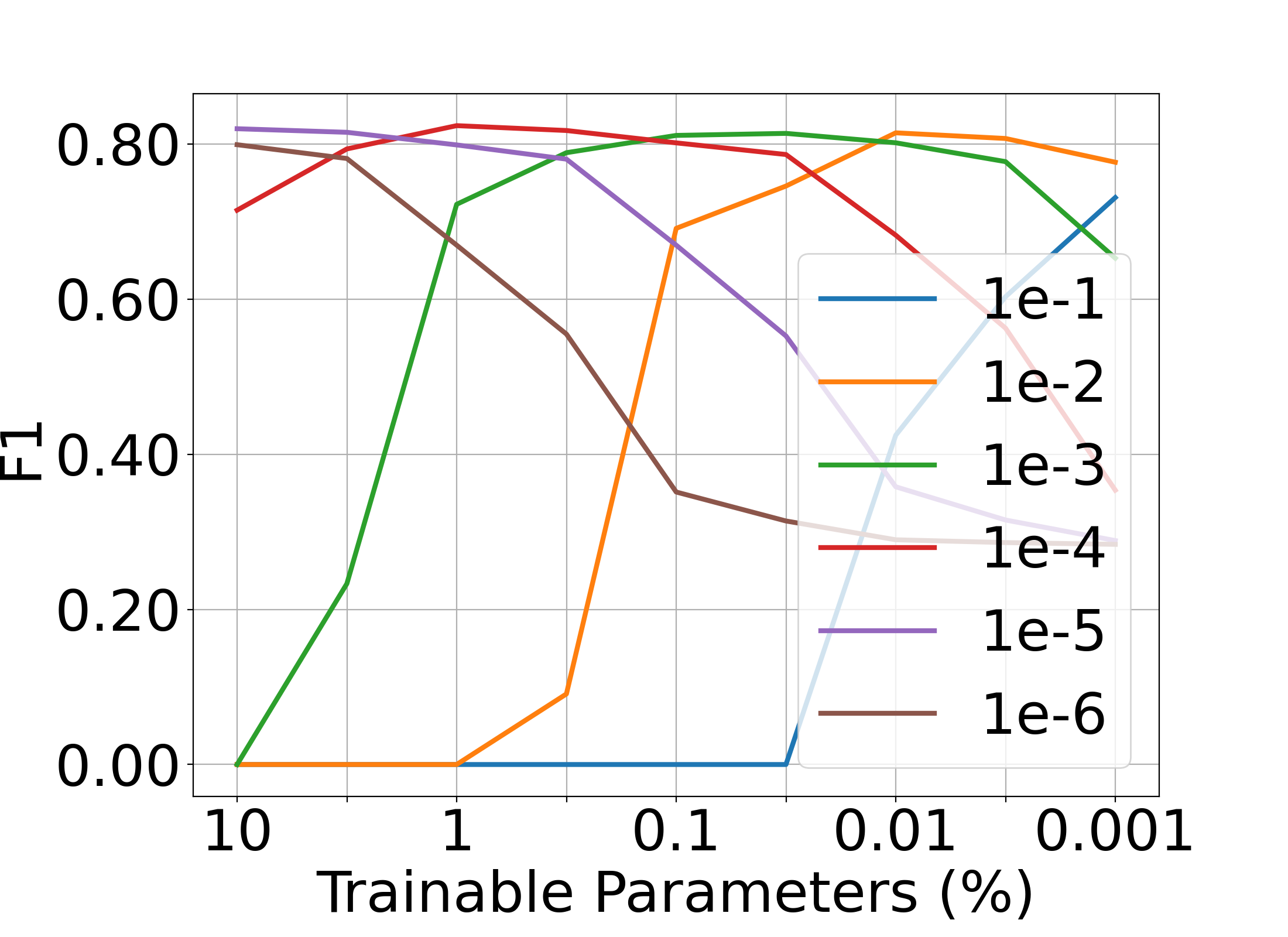} &
\includegraphics[scale=0.20]{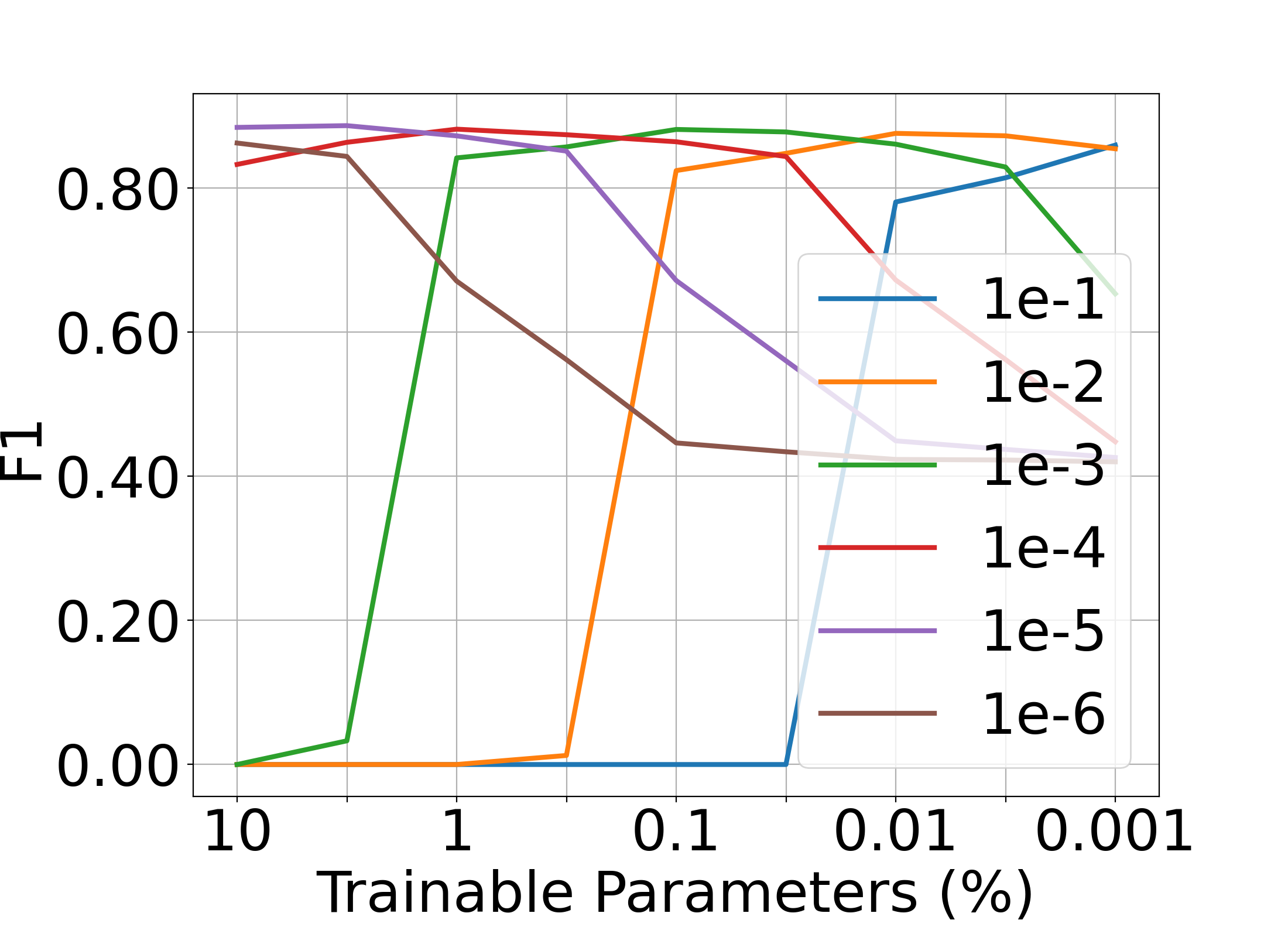} \\
SQuAD, OPT-125m &  SQuAD, OPT-1.3b &  SQuAD, OPT-13b \\
\includegraphics[scale=0.20]{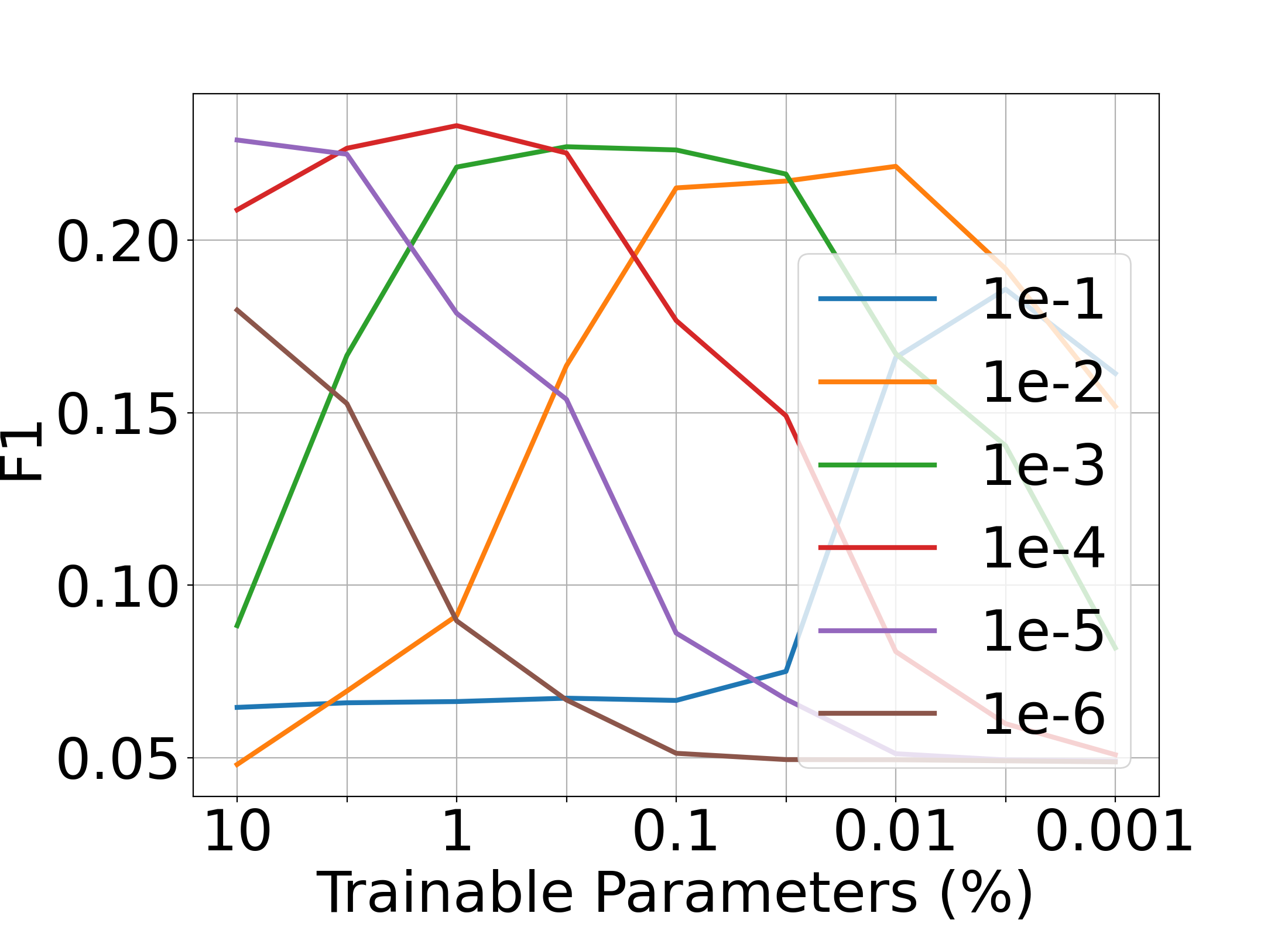} &
\includegraphics[scale=0.20]{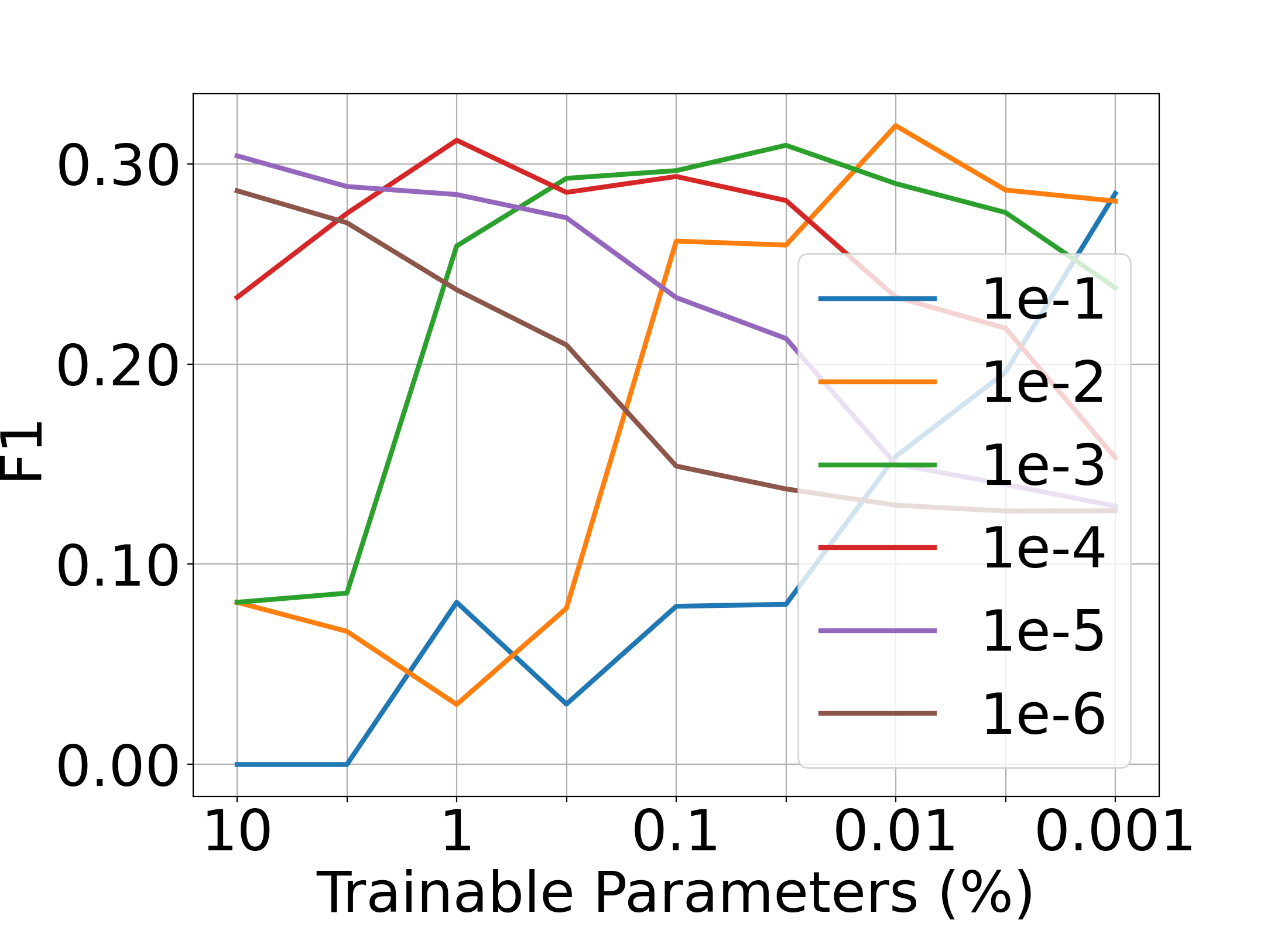} &
\includegraphics[scale=0.20]{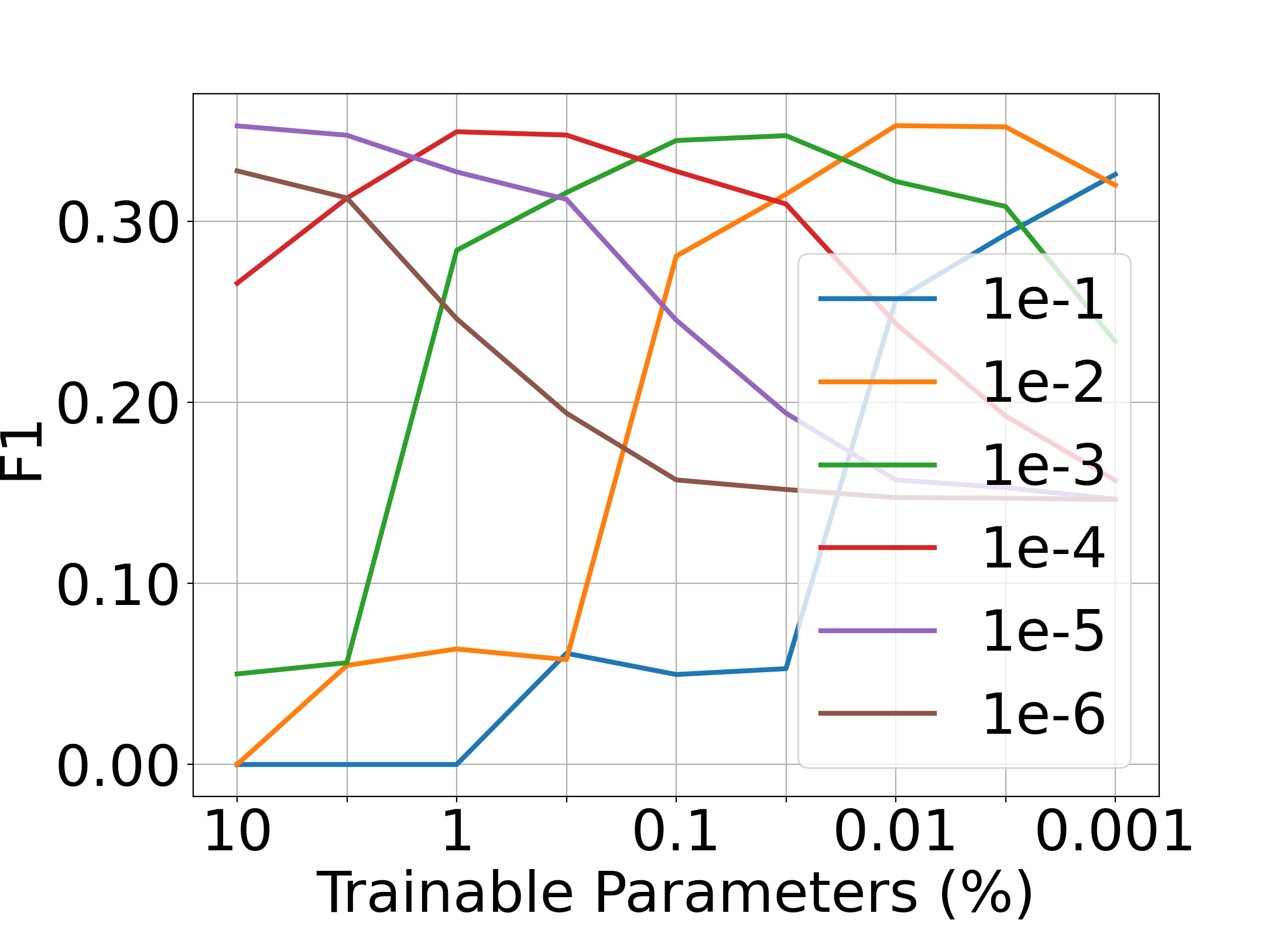} \\
DROP, OPT-125m &  DROP, OPT-1.3b &  DROP, OPT-13b \\
\end{tabular}
\caption{\textbf{The accuracy of Random Masking with different learning rates~(part III)}. The x-axis is the percentage of trainable parameters, ranging from~\{10\%, 5\%, 1\%, 0.5\%, 0.1\%, 0.05\%, 0.01\%, 0.005\%, 0.001\%\}. From top to bottom: SQuAD, DROP. From left to right: OPT-125m, OPT-1.3b, OPT-13b. The figures show that for Random Masking, smaller trainable parameter ratio requires larger learning rate. }
\label{fig:lrapx3}
\end{figure*}

\begin{table*}[t]
    \centering
    \setlength{\tabcolsep}{2pt}
    \caption{\textbf{The complete results of Structured Masking,} with trainable parameter ratio from 0.1 to 0.00001. Here, Masking(S) stands for Structured Masking.}
    \label{tab:structured_masking_apx}    
    \begin{tabular}{ccccccccccccccc}
    \hline
    \textbf{Model} & \textbf{Method} & \textbf{Params} & \textbf{SST-2} & \textbf{RTE}  & \textbf{WSC} & \textbf{WiC}& \textbf{CB} & \textbf{BoolQ} & \textbf{MultiRC} & \textbf{COPA} & \textbf{ReCoRD} & \textbf{SQuAD} & \textbf{DROP} & \textbf{Avg} \\
    \hline \hline    
    & & 10\% & 87.2 & 61.0 & 63.5 & 60.6 & 87.5 & 63.4 & 63.1 & 67.0 & 51.3 & 64.4 & 22.9 & 62.90\\
    & & 5\% & 86.7 & 63.1 & 63.5 & 59.8 & 84.5 & 63.2 & 64.3 & 68.0 & 51.4 & 63.8 & 22.4 & 62.79\\
    & & 1\% & 87.2 & 63.4 & 63.5 & 58.6 & 80.4 & 63.1 & 63.3 & 67.7 & 51.3 & 63.6 & 22.2 & 62.21\\
    & & 0.5\% & 87.7 & 63.4 & 63.5 & 59.8 & 76.2 & 62.6 & 63.2 & 67.0 & 51.2 & 63.2 & 23.2 & 61.91\\
    OPT-125m & Masking(S) & 0.1\% & 87.2 & 59.4 & 63.5 & 59.1 & 75.0 & 62.6 & 62.9 & 68.7 & 51.3 & 62.3 & 21.7 & 61.24\\
    & & 0.05\% & 86.3 & 58.5 & 62.5 & 60.9 & 73.8 & 62.5 & 60.4 & 68.3 & 51.4 & 61.3 & 20.7 & 60.60\\
    & & 0.01\% & 83.3 & 57.2 & 62.2 & 55.3 & 67.9 & 61.2 & 60.2 & 65.0 & 51.4 & 55.7 & 20.6 & 58.16\\
    & & 0.005\% & 83.7 & 57.9 & 58.3 & 56.5 & 67.3 & 60.2 & 59.7 & 66.0 & 51.4 & 54.7 & 20.4 & 57.83\\
    & & 0.001\% & 79.0 & 53.9 & 59.6 & 54.2 & 67.9 & 60.4 & 56.8 & 67.0 & 51.2 & 41.1 & 16.7 & 55.25\\
    \hline
    & & 10\% & 93.4 & 71.5 & 63.5 & 61.9 & 82.7 & 73.0 & 68.8 & 76.7 & 71.8 & 82.2 & 30.3 & 70.52\\
    & & 5\% & 93.5 & 73.0 & 63.5 & 63.9 & 89.3 & 71.4 & 68.6 & 75.7 & 72.0 & 81.9 & 29.6 & 71.14\\
    & & 1\% & 93.2 & 69.6 & 63.5 & 63.7 & 83.3 & 70.3 & 70.0 & 76.3 & 72.5 & 82.1 & 29.6 & 70.38\\
    & & 0.5\% & 93.3 & 72.9 & 63.1 & 63.1 & 82.7 & 69.8 & 67.0 & 76.7 & 71.8 & 81.7 & 30.5 & 70.24\\
    OPT-1.3b & Masking(S) & 0.1\% & 93.3 & 69.6 & 63.5 & 64.6 & 71.4 & 69.0 & 68.7 & 75.0 & 71.2 & 82.3 & 29.2 & 68.87\\
    & & 0.05\% & 93.6 & 72.6 & 63.5 & 62.4 & 81.0 & 71.4 & 67.5 & 76.0 & 70.8 & 82.0 & 29.0 & 69.97\\
    & & 0.01\% & 93.5 & 71.8 & 62.8 & 58.4 & 69.0 & 66.6 & 64.5 & 74.3 & 71.4 & 81.0 & 30.0 & 67.59\\
    & & 0.005\% & 92.9 & 69.3 & 62.2 & 60.9 & 63.7 & 67.7 & 61.5 & 75.3 & 71.4 & 79.8 & 28.8 & 66.66\\
    & & 0.001\% & 93.0 & 68.8 & 60.9 & 59.3 & 69.6 & 63.7 & 58.2 & 73.7 & 71.4 & 78.9 & 27.3 & 65.90\\
    \hline
    & & 10\% & 94.8 & 83.5 & 63.5 & 63.5 & 73.2 & 76.3 & 73.8 & 90.0 & 81.2 & 87.9 & 35.8 & 74.87\\ 
    & & 5\% & 95.7 & 81.2 & 63.5 & 66.5 & 78.6 & 80.4 & 73.8 & 89.0 & 81.9 & 88.6 & 34.3 & 75.76\\ 
    & & 1\% & 94.8 & 81.2 & 63.5 & 62.5 & 73.2 & 80.5 & 74.8 & 89.0 & 81.4 & 87.3 & 34.3 & 74.77\\ 
    & & 0.5\% & 95.1 & 81.1 & 63.5 & 67.6 & 75.0 & 79.4 & 73.7 & 90.0 & 81.5 & 88.2 & 35.3 & 75.48\\ 
    OPT-13b & Masking(S) & 0.1\% & 94.9 & 81.1 & 63.5 & 63.2 & 73.2 & 79.7 & 74.3 & 89.0 & 81.4 & 88.0 & 35.7 & 74.90\\ 
    & & 0.05\% & 94.0 & 81.6 & 63.5 & 66.0 & 75.0 & 78.8 & 69.5 & 86.0 & 81.5 & 88.4 & 35.1 & 74.49\\ 
    & & 0.01\% & 95.2 & 81.8 & 61.5 & 62.4 & 73.2 & 74.1 & 72.6 & 87.0 & 81.7 & 88.2 & 33.3 & 73.73\\ 
    & & 0.005\% & 93.9 & 79.3 & 60.6 & 66.3 & 71.4 & 65.7 & 71.1 & 87.0 & 81.8 & 87.8 & 33.5 & 72.58\\ 
    & & 0.001\% & 93.2 & 80.7 & 62.5 & 61.3 & 69.6 & 67.4 & 62.5 & 85.0 & 82.2 & 87.6 & 34.2 & 71.47\\
    \hline 
    \end{tabular}
\end{table*}

\begin{table*}[t]
    \centering
    \setlength{\tabcolsep}{2pt}
    \caption{\textbf{The optimal learning rate of Structured Masking}, which are obtained via grid search in $\{\num{1e-1}, \num{1e-2}, \num{1e-3}, \num{1e-4}, \num{1e-5}, \num{1e-6}\}$. Here, Masking(S) stands for Structured Masking. This table shows that similar to Random Masking, the optimal learning rate for Structured Masking also has a negative relationship with the number of trainable parameters.}
    \label{tab:lr_structured_masking}    
    \begin{tabular}{cccccccccccccc}
    \hline
    \textbf{Model} & \textbf{Method} & \textbf{Params} & \textbf{SST-2} & \textbf{RTE}  & \textbf{WSC} & \textbf{WiC}& \textbf{CB} & \textbf{BoolQ} & \textbf{MultiRC} & \textbf{COPA} & \textbf{ReCoRD} & \textbf{SQuAD} & \textbf{DROP} \\
    \hline \hline 
    & & 10\% & \num{1e-5} & \num{1e-5} & \num{1e-2} & \num{1e-5} & \num{1e-4} & \num{1e-3} & \num{1e-5} & \num{1e-6} & \num{1e-6} & \num{1e-5} & \num{1e-5}\\
    & & 5\% & \num{1e-5} & \num{1e-4} & \num{1e-1} & \num{1e-5} & \num{1e-4} & \num{1e-4} & \num{1e-4} & \num{1e-5} & \num{1e-6} & \num{1e-4} & \num{1e-4}\\
    & & 1\% & \num{1e-4} & \num{1e-4} & \num{1e-1} & \num{1e-4} & \num{1e-3} & \num{1e-4} & \num{1e-4} & \num{1e-5} & \num{1e-5} & \num{1e-4} & \num{1e-4}\\
    & & 0.5\% & \num{1e-4} & \num{1e-4} & \num{1e-1} & \num{1e-3} & \num{1e-3} & \num{1e-1} & \num{1e-4} & \num{1e-5} & \num{1e-6} & \num{1e-4} & \num{1e-3}\\
    OPT-125m & Masking & 0.1\% & \num{1e-3} & \num{1e-3} & \num{1e-1} & \num{1e-3} & \num{1e-2} & \num{1e-1} & \num{1e-3} & \num{1e-4} & \num{1e-4} & \num{1e-3} & \num{1e-3}\\
    & & 0.05\% & \num{1e-3} & \num{1e-3} & \num{1e-2} & \num{1e-3} & \num{1e-2} & \num{1e-2} & \num{1e-3} & \num{1e-4} & \num{1e-4} & \num{1e-3} & \num{1e-3}\\
    & & 0.01\% & \num{1e-2} & \num{1e-3} & \num{1e-2} & \num{1e-2} & \num{1e-2} & \num{1e-2} & \num{1e-2} & \num{1e-3} & \num{1e-3} & \num{1e-2} & \num{1e-2}\\
    & & 0.005\% & \num{1e-2} & \num{1e-2} & \num{1e-2} & \num{1e-2} & \num{1e-2} & \num{1e-2} & \num{1e-2} & \num{1e-3} & \num{1e-2} & \num{1e-2} & \num{1e-2}\\
    & & 0.001\% & \num{1e-2} & \num{1e-2} & \num{1e-1} & \num{1e-2} & \num{1e-1} & \num{1e-1} & \num{1e-2} & \num{1e-1} & \num{1e-6} & \num{1e-1} & \num{1e-1}\\
    \hline
    & & 10\% & \num{1e-6} & \num{1e-5} & \num{1e-3} & \num{1e-6} & \num{1e-4} & \num{1e-5} & \num{1e-5} & \num{1e-5} & \num{1e-6} & \num{1e-5} & \num{1e-5}\\
    & & 5\% & \num{1e-5} & \num{1e-5} & \num{1e-3} & \num{1e-5} & \num{1e-4} & \num{1e-5} & \num{1e-5} & \num{1e-4} & \num{1e-5} & \num{1e-5} & \num{1e-5}\\
    & & 1\% & \num{1e-4} & \num{1e-4} & \num{1e-1} & \num{1e-4} & \num{1e-3} & \num{1e-4} & \num{1e-4} & \num{1e-4} & \num{1e-4} & \num{1e-4} & \num{1e-4}\\
    & & 0.5\% & \num{1e-4} & \num{1e-4} & \num{1e-4} & \num{1e-4} & \num{1e-3} & \num{1e-4} & \num{1e-4} & \num{1e-4} & \num{1e-4} & \num{1e-4} & \num{1e-4}\\
    OPT-1.3b & Masking & 0.1\% & \num{1e-4} & \num{1e-3} & \num{1e-3} & \num{1e-3} & \num{1e-3} & \num{1e-3} & \num{1e-3} & \num{1e-3} & \num{1e-3} & \num{1e-4} & \num{1e-4}\\
    & & 0.05\% & \num{1e-4} & \num{1e-3} & \num{1e-1} & \num{1e-3} & \num{1e-2} & \num{1e-3} & \num{1e-3} & \num{1e-3} & \num{1e-3} & \num{1e-4} & \num{1e-3}\\
    & & 0.01\% & \num{1e-3} & \num{1e-3} & \num{1e-2} & \num{1e-3} & \num{1e-3} & \num{1e-2} & \num{1e-2} & \num{1e-3} & \num{1e-3} & \num{1e-3} & \num{1e-3}\\
    & & 0.005\% & \num{1e-3} & \num{1e-2} & \num{1e-2} & \num{1e-2} & \num{1e-2} & \num{1e-2} & \num{1e-2} & \num{1e-2} & \num{1e-3} & \num{1e-2} & \num{1e-3}\\
    & & 0.001\% & \num{1e-2} & \num{1e-2} & \num{1e-1} & \num{1e-2} & \num{1e-1} & \num{1e-2} & \num{1e-2} & \num{1e-1} & \num{1e-3} & \num{1e-2} & \num{1e-2}\\
    \hline 
    & & 10\% & \num{1e-5} & \num{1e-5} & \num{1e-5} & \num{1e-5} & \num{1e-6} & \num{1e-5} & \num{1e-5} & \num{1e-5} & \num{1e-6} & \num{1e-6} & \num{1e-5}\\
    & & 5\% & \num{1e-5} & \num{1e-5} & \num{1e-4} & \num{1e-5} & \num{1e-4} & \num{1e-5} & \num{1e-5} & \num{1e-5} & \num{1e-6} & \num{1e-5} & \num{1e-5}\\
    & & 1\% & \num{1e-4} & \num{1e-4} & \num{1e-4} & \num{1e-4} & \num{1e-5} & \num{1e-3} & \num{1e-4} & \num{1e-4} & \num{1e-5} & \num{1e-4} & \num{1e-4}\\
    & & 0.5\% & \num{1e-4} & \num{1e-4} & \num{1e-3} & \num{1e-4} & \num{1e-3} & \num{1e-4} & \num{1e-4} & \num{1e-4} & \num{1e-6} & \num{1e-4} & \num{1e-3}\\
    OPT-13b & Masking & 0.1\% & \num{1e-3} & \num{1e-3} & \num{1e-1} & \num{1e-3} & \num{1e-4} & \num{1e-3} & \num{1e-3} & \num{1e-2} & \num{1e-5} & \num{1e-3} & \num{1e-3}\\
    & & 0.05\% & \num{1e-3} & \num{1e-3} & \num{1e-1} & \num{1e-3} & \num{1e-2} & \num{1e-3} & \num{1e-3} & \num{1e-2} & \num{1e-4} & \num{1e-3} & \num{1e-3}\\
    & & 0.01\% & \num{1e-3} & \num{1e-3} & \num{1e-3} & \num{1e-3} & \num{1e-2} & \num{1e-3} & \num{1e-2} & \num{1e-2} & \num{1e-4} & \num{1e-3} & \num{1e-3}\\
    & & 0.005\% & \num{1e-2} & \num{1e-2} & \num{1e-3} & \num{1e-2} & \num{1e-2} & \num{1e-2} & \num{1e-2} & \num{1e-2} & \num{1e-4} & \num{1e-2} & \num{1e-3}\\
    & & 0.001\% & \num{1e-2} & \num{1e-2} & \num{1e-1} & \num{1e-2} & \num{1e-2} & \num{1e-2} & \num{1e-1} & \num{1e-2} & \num{1e-3} & \num{1e-2} & \num{1e-2}\\
    \hline 
    \end{tabular}
\end{table*}

\begin{figure*}[t]\centering
\setlength{\tabcolsep}{-0.0cm}
\begin{tabular}{ccc}
\includegraphics[scale=0.20]{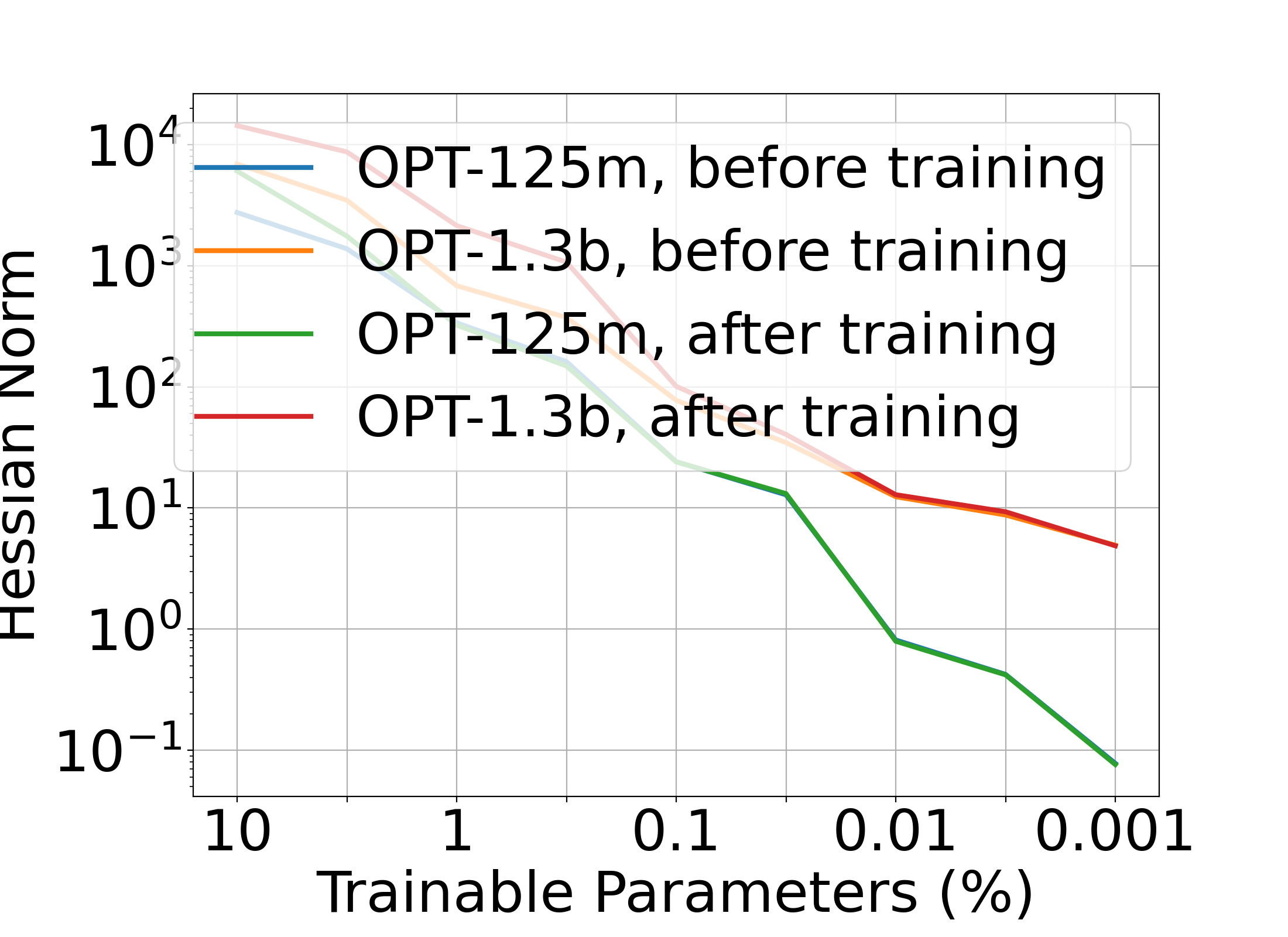} &
\includegraphics[scale=0.20]{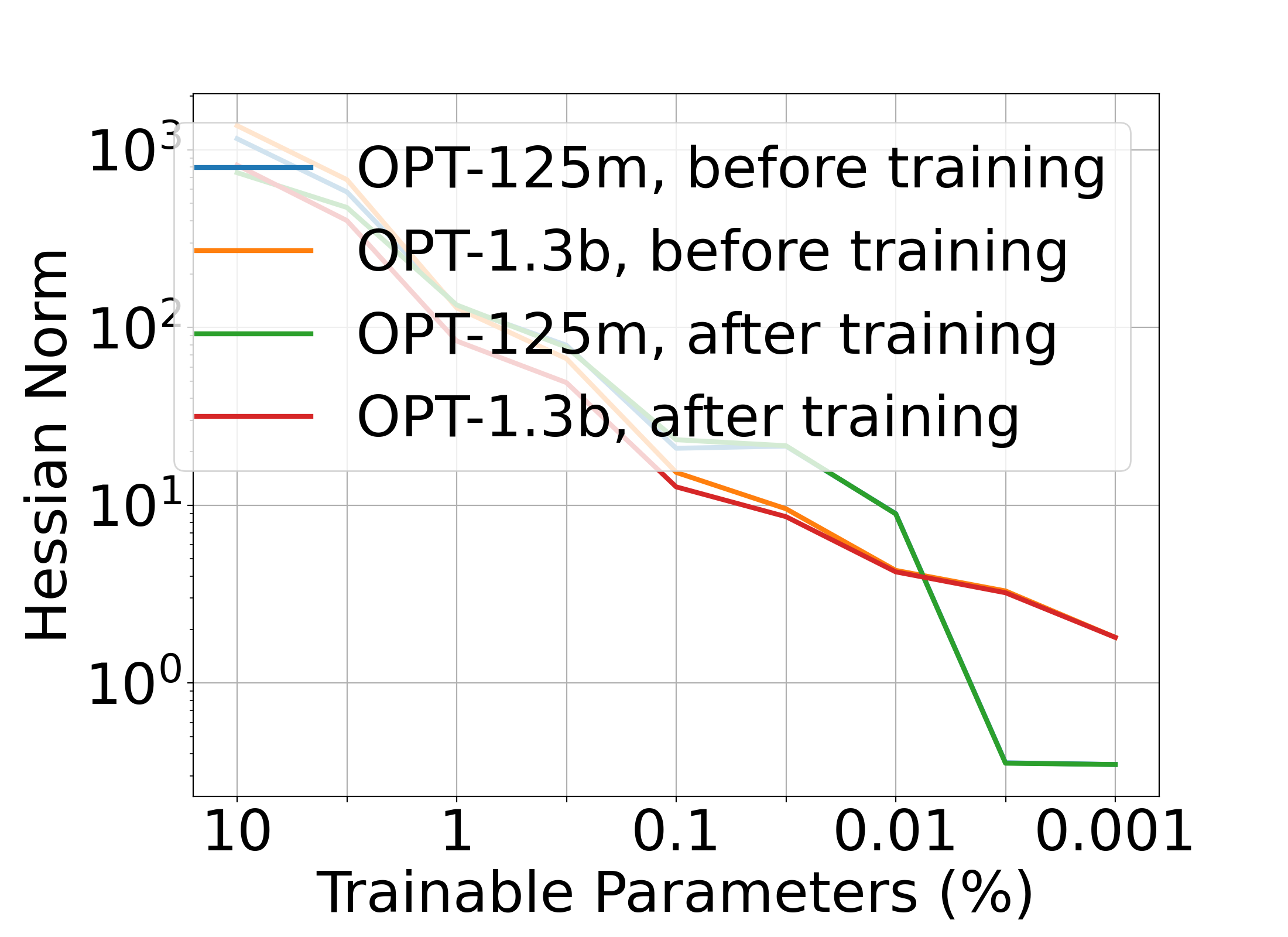} &
\includegraphics[scale=0.20]{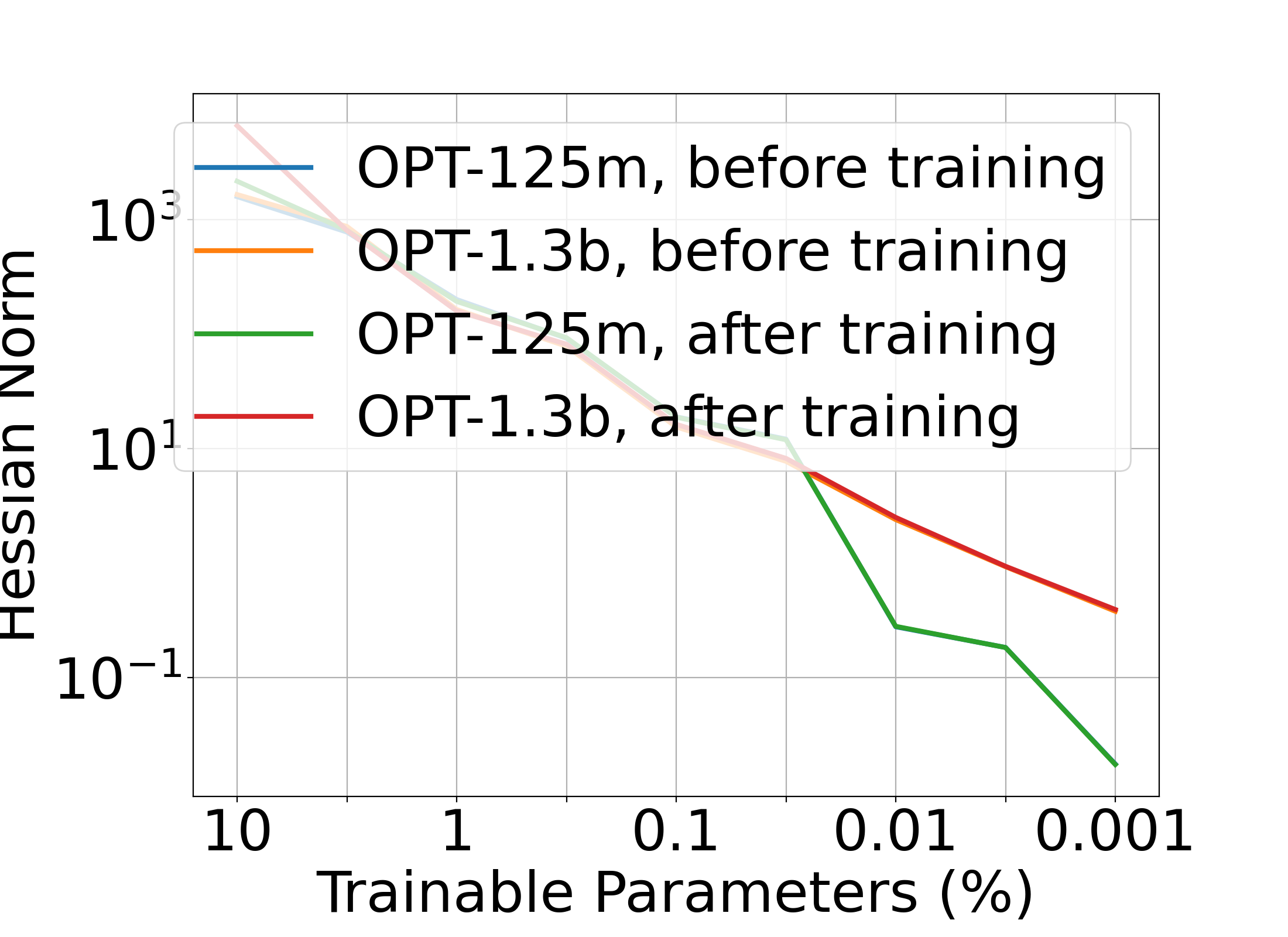} \\
 RTE &  COPA &  WiC \\
\end{tabular}
\caption{\textbf{Smaller trainable parameter ratio induces smaller hessian $\ell_2$ norm .} These figures are analogs of Figure~\ref{fig:investigations}(a) on datasets RTE, COPA, WiC.}
\label{fig:small_norm_apx}
\end{figure*}

\begin{figure*}[t]\centering
\setlength{\tabcolsep}{-0.0cm}
\begin{tabular}{ccc}
\includegraphics[scale=0.20]{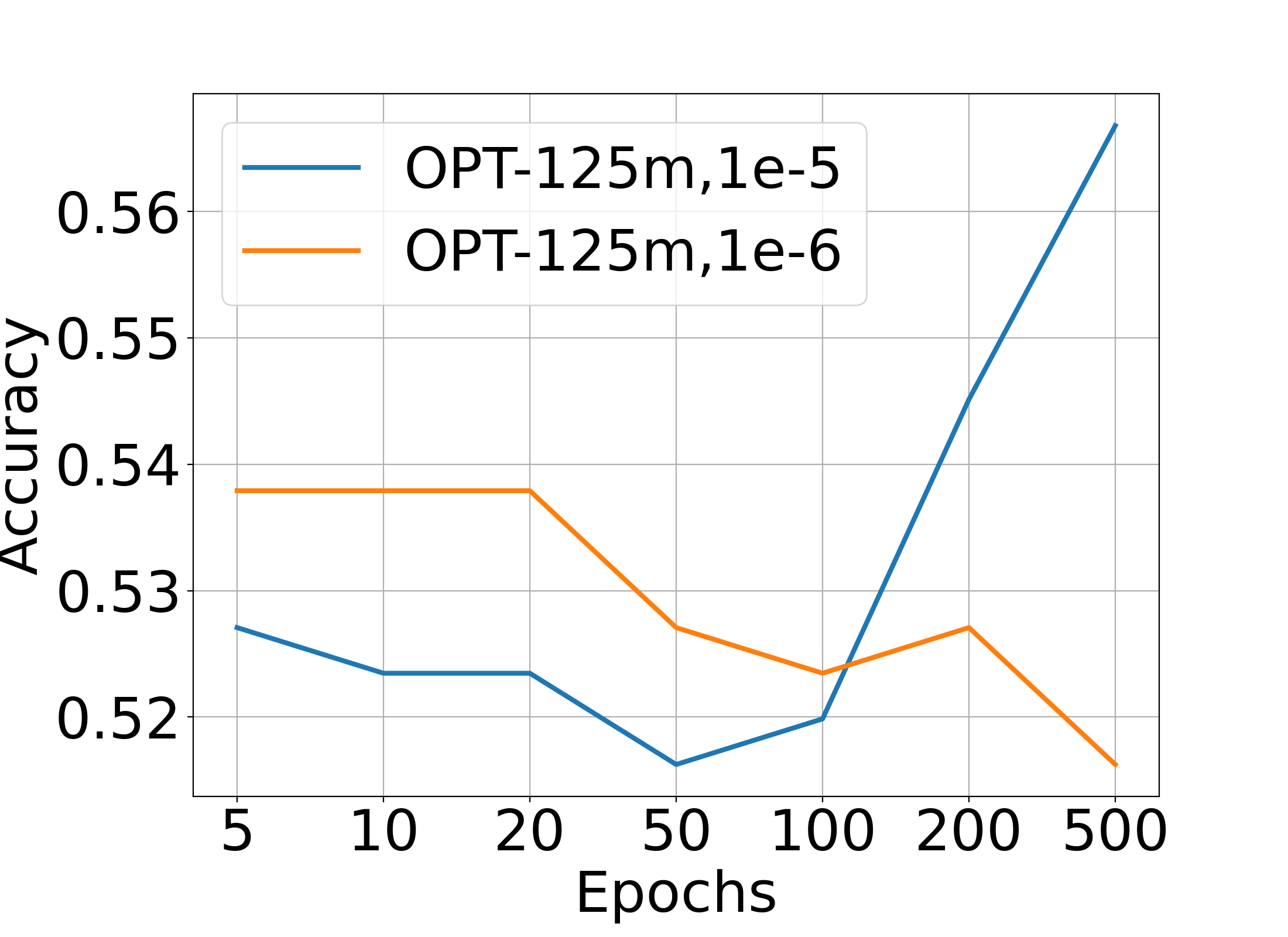} &
\includegraphics[scale=0.20]{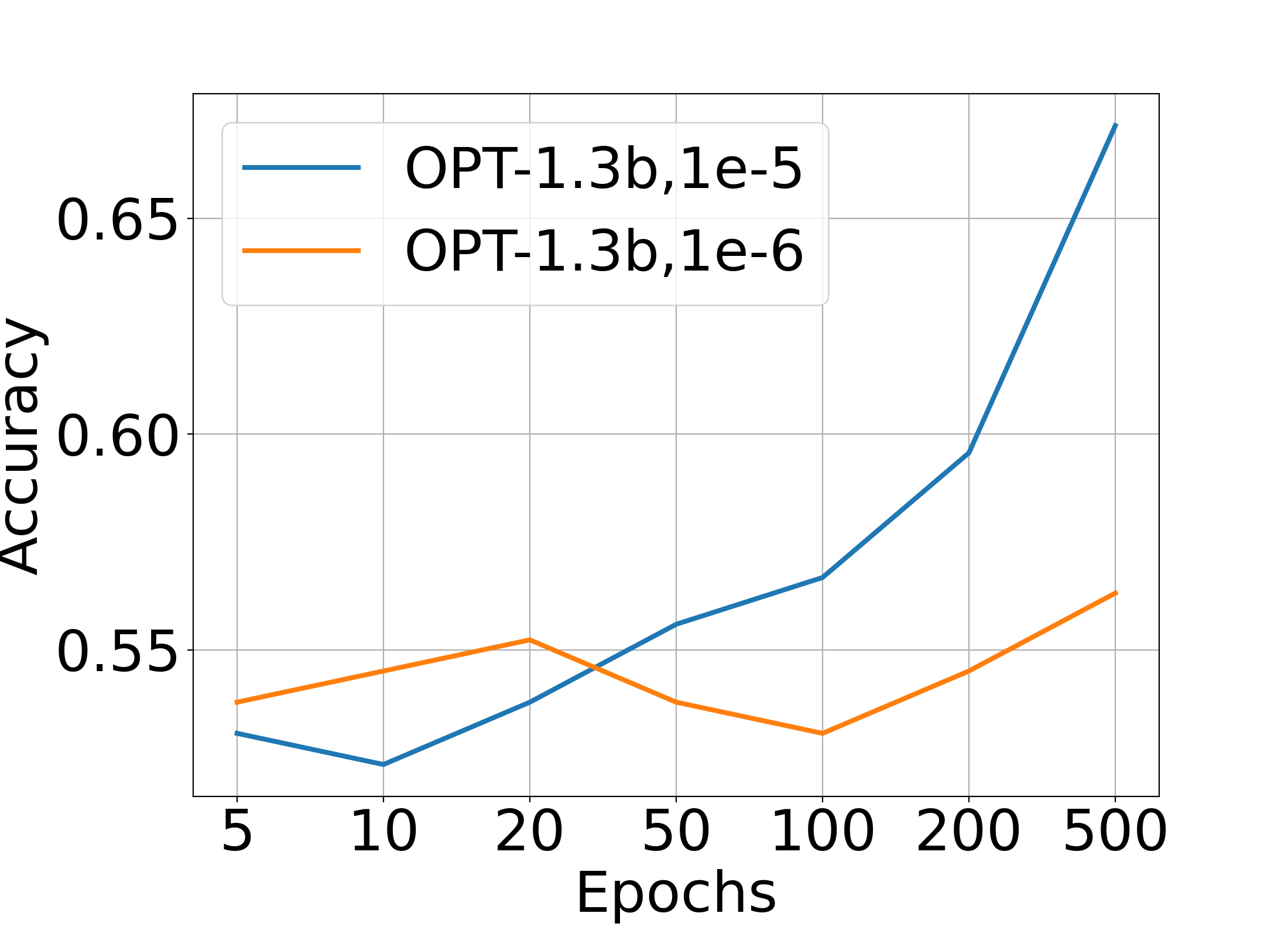} &
\includegraphics[scale=0.20]{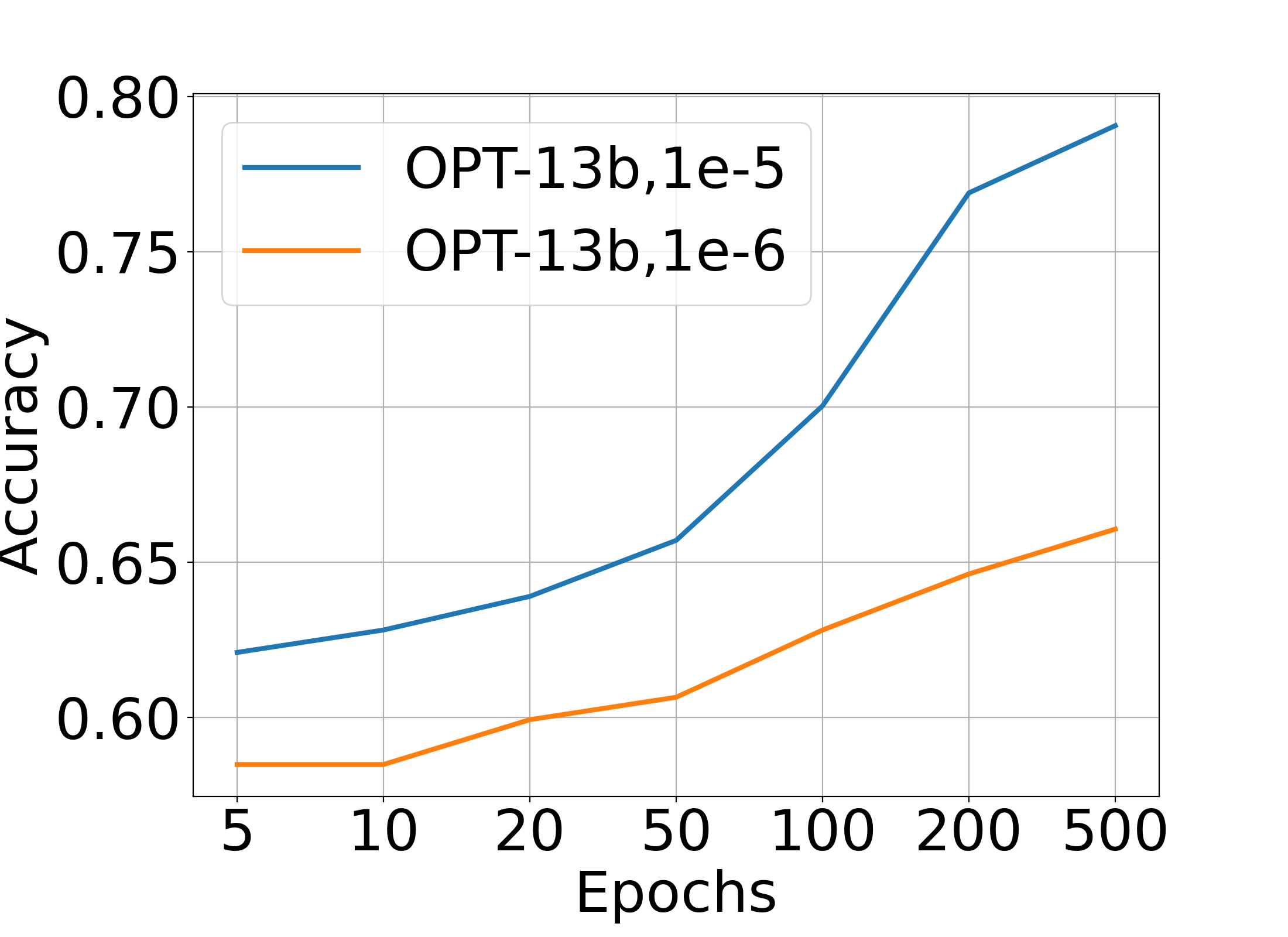} \\
 RTE, OPT-125m &  RTE, OPT-1.3b &  RTE, OPT-13b \\
\end{tabular}
\caption{\textbf{Longer training steps compensate small learning rate.} These figures are analogs of Figure~\ref{fig:investigations}(b), with different learning rates and model sizes. }
\label{fig:more_steps_apx}
\end{figure*}

\begin{figure*}[t]\centering
\setlength{\tabcolsep}{-0.0cm}
\begin{tabular}{ccc}
\includegraphics[scale=0.20]{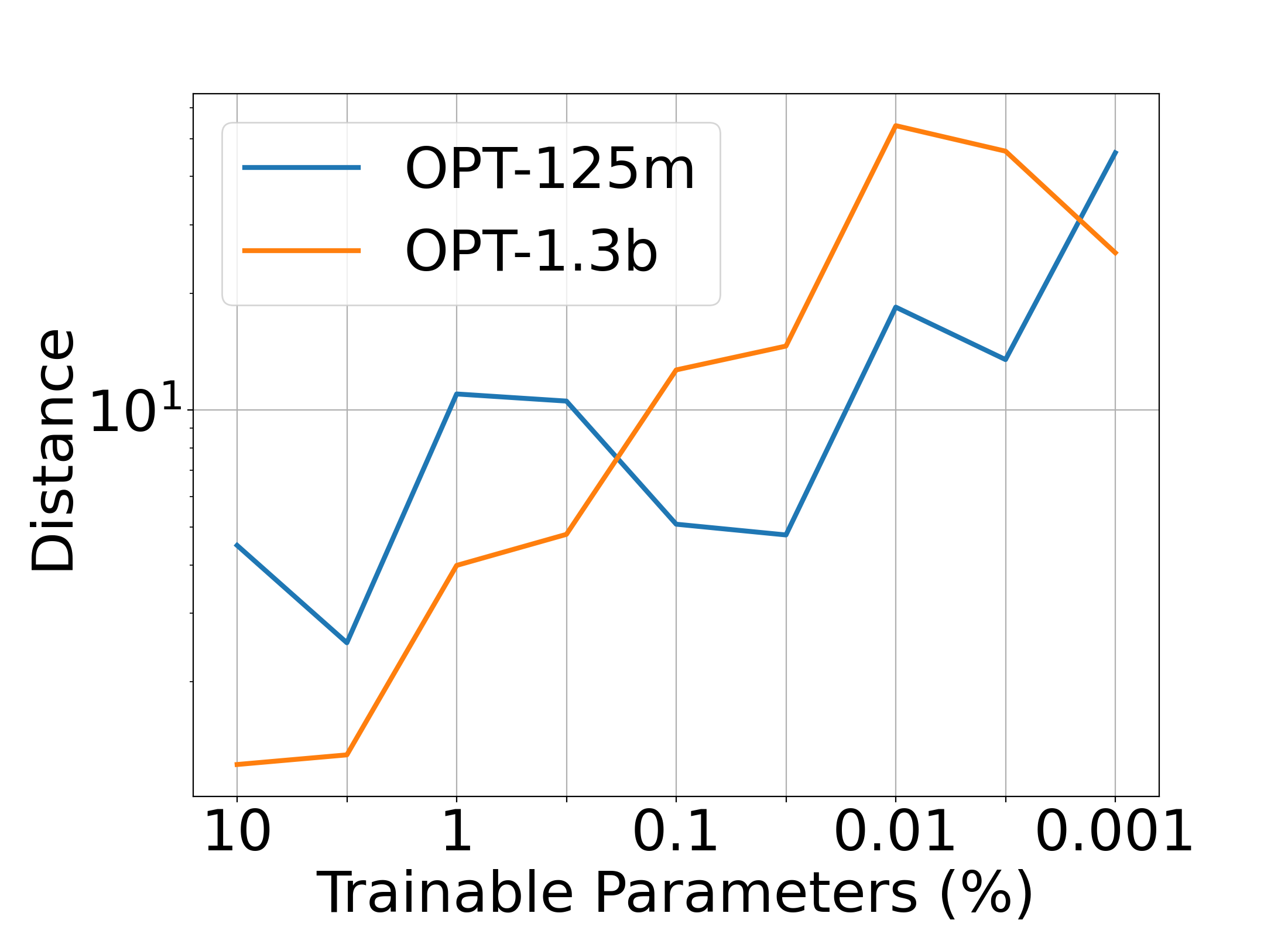} &
\includegraphics[scale=0.20]{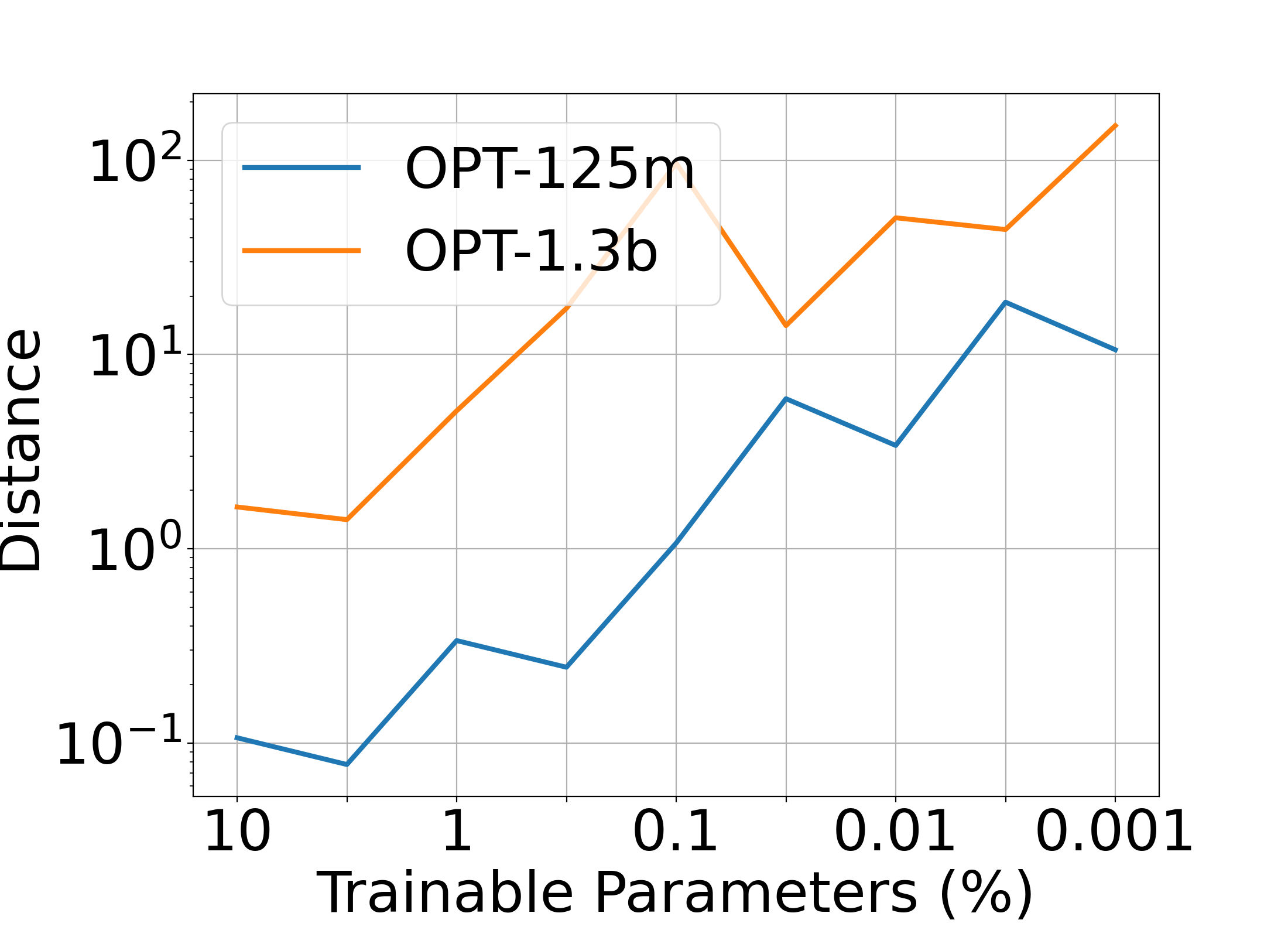} &
\includegraphics[scale=0.20]{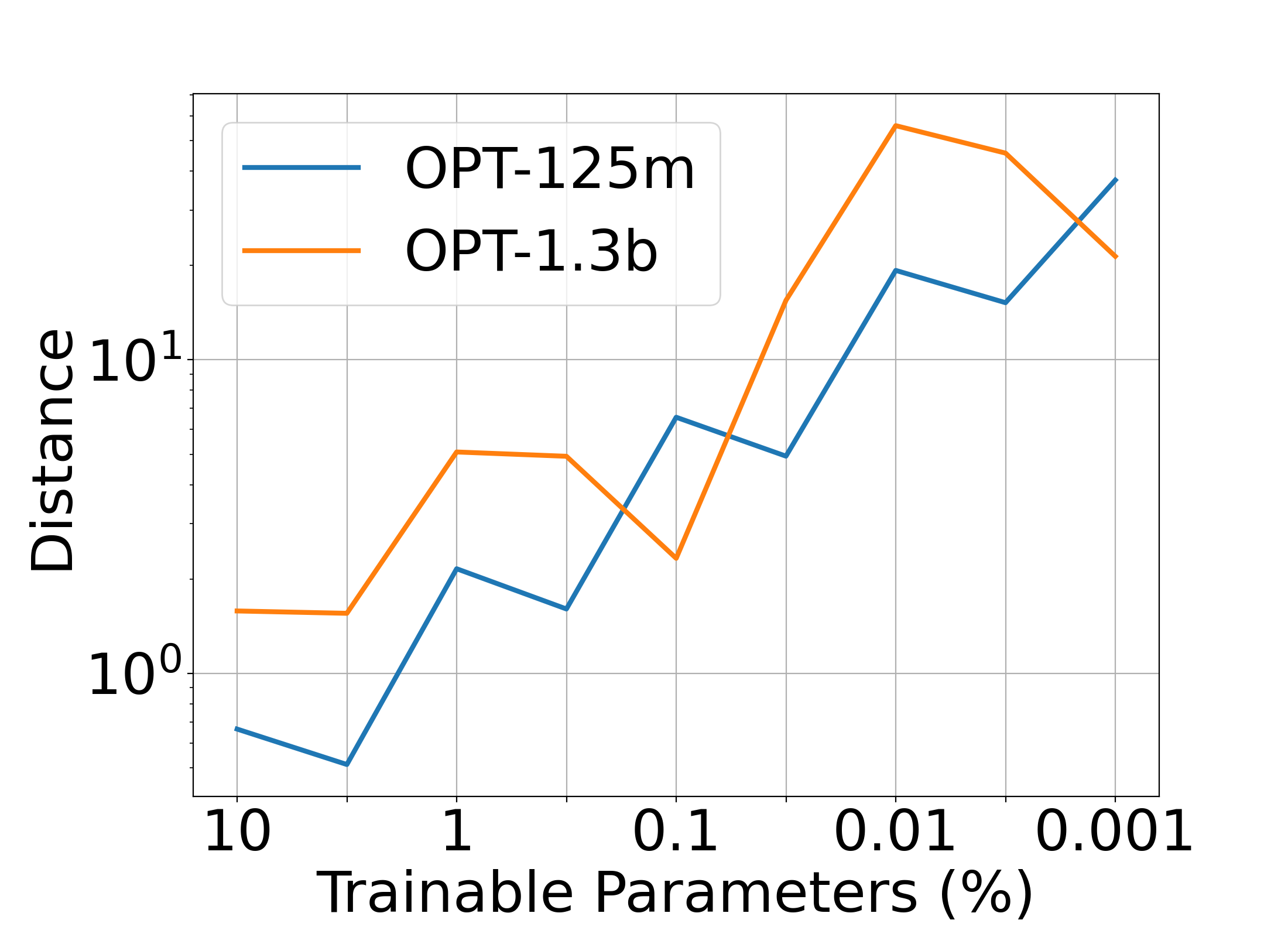} \\
 RTE &  COPA &  WiC \\
\end{tabular}
\caption{\textbf{Smaller trainable parameter ratio gives more distant solutions.} These figures are analogs of Figure~\ref{fig:investigations}(c) on datasets RTE, COPA, WiC.}
\label{fig:distance_apx}
\end{figure*}

\end{document}